\documentclass[preprint,12pt]{elsarticle}

\usepackage{amssymb}

\usepackage[utf8]{inputenc}
\usepackage{lmodern} 
\usepackage[T1]{fontenc}
\usepackage{microtype}

\usepackage{graphicx}
\usepackage{float}
\usepackage[hypcap]{caption}
\usepackage{subcaption}
\usepackage{algorithm2e}
\usepackage{array}
\usepackage{algorithmic}

\usepackage{url}

\usepackage{textcomp}
\usepackage{booktabs}
\usepackage{array}

\usepackage{amsfonts}

\usepackage{geometry}
\usepackage{tikz}
\usetikzlibrary{trees,positioning,shapes}

\usepackage{amssymb,amsmath,amsthm}
\usepackage{amsfonts}
\usepackage{bm}
\usepackage{mathtools}
\usepackage{xcolor, colortbl}
\usepackage{appendix}

\newtheorem{lemma}{Lemma}

\usepackage{soul}
\usepackage{pgfplots,tikz}
\usepackage[eulergreek]{sansmath}
\usepackage{listings}
\usepackage{minted}
\pgfplotsset{compat=1.18} 

\usepackage[english]{babel}
\newcommand{\inblue}[1]{\textcolor{blue}{#1}}

\newcommand{\revision}[1]{\textcolor{black}{#1}}

\protect

\newcommand{\Prod}[3]{\left({#2}, {#3}\right)_{#1}}
\newcommand{\Prodh}[2]{\Prod{h}{#1}{#2}}

\newcommand{\Norm}[2]{\left\|{#2}\right\|_{#1}}
\newcommand{\Normh}[1]{\Norm{h}{#1}}
\newcommand{\NormU}[1]{\Norm{\mathbb{U}}{#1}}
\newcommand{\NormV}[1]{\Norm{\mathbb{V}}{#1}}
\newcommand{\Normdh}[1]{\Norm{\nabla, h}{#1}}

\newcommand{\DiscreteDer}[2]{\nabla_{#1}^{#2}}
\newcommand{\Dp}[1][]{\DiscreteDer{#1}{+}}
\newcommand{\Dm}[1][]{\DiscreteDer{#1}{-}}
\newcommand{\Dc}[1][]{\DiscreteDer{#1}{c}}
\newcommand{\Dxp}{\Dp[x]}
\newcommand{\Dxm}{\Dm[x]}
\newcommand{\Dxc}{\Dc[x]}
\newcommand{\Dyp}{\Dp[y]}
\newcommand{\Dym}{\Dm[y]}
\newcommand{\Dyc}{\Dc[y]}

\newcommand{\Dtp}{\Dp[t]}
\newcommand{\Dtm}{\Dm[t]}
\newcommand{\Dtc}{\Dc[t]}

\newcommand{\Tp}[1]{\tau_{#1}^{+}}
\newcommand{\Tm}[1]{\tau_{#1}^{-}}
\newcommand{\Txp}{\Tp{x}}

\newcommand{\Ttp}{\Tp{t}}
\newcommand{\Txm}{\Tm{x}}

\newcommand{\Ttm}{\Tm{t}}

\newcommand{\Dh}{D_h}
\newcommand{\Dzh}{D_{0,h}}

\protect

\usepackage[colorlinks=true, linkcolor=blue, urlcolor=blue]{hyperref}
\usepackage{cleveref}

\journal{Applied Soft Computing}

\begin{document}

\begin{frontmatter}



\title{Collocation-based Robust Physics Informed Neural Networks for time-dependent simulations of pollution propagation under thermal inversion conditions on Spitsbergen}


\author[inst1]{Maciej Sikora}
\author[inst1]{Leszek Siwik}

\affiliation[inst1]{organization={AGH University of Krakow, Faculty of Computer Science},
            addressline={Al. Mickiewicza 30}, 
            city={Krak\'ow},
            postcode={30-059}, 
            country={Poland}}


\author[inst3]{Natalia Leszczy\'nska}

\affiliation[inst3]{organization={Medical University of Silesia-Katowice, Faculty of Medical Sciences},
            addressline={ul. Poniatowskiego 15}, 
            city={Katowice},
            postcode={40-055}, 
            country={Poland}}

\author[inst4]{Tomasz Maciej Ciesielski}

\affiliation[inst4]{organization={The University Centre in Svalbard, Longyearbyen},
            addressline={Box 156 N-9171}, 
            city={Longyearbyen},
            country={Norway}}

\author[inst6,inst7]{Eirik Valseth}
\affiliation[inst6]{organization={Simula Research Laboratory},
            addressline={Kristian Augusts gate 23}, 
            city={Oslo},
            postcode={0164}, 
            country={Norway}}
\affiliation[inst7]{organization={Norwegian University of Life Sciences},
            addressline={Postboks 5003}, 
            city={As},
            postcode={1432}, 
            country={Norway}}

\author[inst9]{Manuela Bastidas Olivares}

\affiliation[inst9]{organization={Universidad Nacional de Colombia},
adressline={ Carrera 45 No. 26-85 - Uriel Gutiérrez Building},
city={Bogota},
country={Colombia}}

\author[inst1]{Marcin \L{}o\'s}

\author[inst1]{Tomasz S\l{}u\.zalec}

\author[inst8]{\\ Jacek Leszczy\'nski}

\affiliation[inst8]{organization={AGH University of Krakow, Faculty of Energy and Fuels},
            addressline={Al. Mickiewicza 30}, 
            city={Krak\'ow},
            postcode={30-059}, 
            country={Poland}}

\author[inst1]{ Maciej Paszy\'nski\corref{cor1}}
\cortext[cor1]{Corresponding author. Email address: paszynsk@agh.edu.pl}

\begin{abstract}
In this paper, we propose a Physics-Informed Neural Network framework for time-dependent simulations of pollution propagation originating from moving emission sources. We formulate a robust variational framework for the time-dependent advection-diffusion problem and establish the boundedness and inf-sup stability of the corresponding discrete weak formulation. Based on this mathematical foundation, we construct a robust loss function that is directly related to the true approximation error, defined as the difference between the neural network approximation and the (unknown) exact solution. Additionally,~a collocation-based strategy is introduced to speed up neural network training.  
\revision{We also extend our model to the non-linear Burgers-type equations.}
As a case study, we investigate pollution propagation caused by snowmobile traffic in Longyearbyen, Spitsbergen, supported by detailed in-field measurements collected using dedicated sensors. The proposed framework is applied to analyze the effects of thermal inversion on pollutant accumulation. 
Our results demonstrate that thermal inversion traps dense and humid air masses near the ground, significantly enhancing particulate matter (PM) concentration and worsening local air quality. 
\revision{We compare our method to linear computational cost explicit dynamics solver using isogeometric analysis. We show that our method can solve non-linear non-stationary problem four times slower than IGA solver can solve the linear problem. It makes it particularly interesting to investigate the non-linear extensions with similar computational cost as the linear formulation.}
\end{abstract}

\begin{keyword}
Robust Variational Physics Informed Neural Networks \sep Pollution propagation simulations \sep Longyearbyen at Spitsbergen \sep Advection-diffusion model \sep In-field measurements \sep Open source software
\end{keyword}

\end{frontmatter}

\section{Introduction}

Air pollution has been an underestimated or even completely ignored health-degenerating factor for many years. However, over time, it has become clear that it negatively affects almost every organ and system in the body, including, but not limited to, the respiratory, cardiovascular, nervous, reproductive, and immune systems~\cite{Capello2019,Wolf2022}.      
As the main components of pollution, we identify the particulate matter (PM), nitrogen dioxide (\(\text{NO}_2\)), sulfur dioxide (\(\text{SO}_2\)), carbon monoxide (\(\text{CO}\)), and ozone (\(\text{O}_3\)). Among these, the European Environment Agency recognizes fine particulate matter (\(\text{PM}_{2.5}\)) as one of the most dangerous. 

According to the World Health Organization (WHO)  Air Pollution Data Portal \cite{who_air_pollution}, around 7 million premature deaths occur annually worldwide (of which 600,000 concern children under the age of five) and may be directly and undoubtedly attributed to exposure to pollutants that we inhale, while 99\% of the population lives in areas where the concentration of pollutants exceeds WHO safety norms and targets~\cite{whoguide}.

Among the most common diseases directly linked to air pollutants, the following are usually referred to: asthma~\cite{Guarnieri2014,Tiotiu2020} and (exacerbated) allergies~\cite{Li2020,Damato2016}, chronic obstructive pulmonary disease~\cite{Duan2020,Hsu2021}, lung (as well as bladder and breast) cancer~\cite{Turner2017,Tourner2020,Martoni2018}, heart attacks, and (ischemic) strokes (via atherosclerosis and increased blood pressure)~\cite{Franchini2012,Rajagopalan2018, Bourdel2017}, Alzheimer’s~\cite{Fu2020,Peters2023} and Parkinson’s~\cite{Murata2022,Kwon2024} diseases, depression~\cite{Borroni2022,Gladka2022}, anxiety~\cite{Zundel2022,Braithwaite2019}, and type 2 diabetes due to its effects on inflammation and insulin resistance~\cite{Lim2019,Paul2020,Hu2020}.

In this paper, we focus on analyzing the air quality in the Longyearbyen valley on Spitsbergen Island in the Svalbard archipelago (c.a. 1,300 km south of the North Pole).

The city of Longyearbyen is located in a valley and is \mbox{3/4} surrounded by mountains. Limited sunlight causes high cloud cover, and low temperatures result in high air density. It results in the thermal inversion effect (especially during the polar night period of time), enhancing the pollutant accumulation near the ground, significantly enhancing particulate matter (PM) concentration, and worsening local air quality. 

The Longyearbyen residents complain about the influence of pollution on their health. From the survey summarized in \cite{SIKORA2025113394}, around one-fifth of respondents reported respiratory issues, mostly during winter months and in the evening or at night, i.e., during the periods and seasons when the pollutants may not be easily transferred to other areas while being trapped near ground level.

As the anthropogenic pollution sources they are indicated both the global one, i.e., the migration of air pollution from continental countries, as well as the local-ones namely, the local power plant operating on diesel fuel and increased snowmobiles traffic resulting from the rapid expansion of tourism~\cite{ TIMLIN2022100122, Pacyna, ANDERSSONSTAVRIDIS2024167562}.

The impact of the power plant emitting pollutants continuously and at the same location has been analyzed in \cite{SIKORA2025113394}. In this paper, we focus on the increasing snowmobile traffic affecting local air quality more significantly. According to the Environmental Monitoring of Svalbard and Jan Mayen system the number of registered snowmobiles in the region increased from 200 in 1973, up to 3000 in 2024. 

As the computational and simulation framework, we present the Collocation-based Robust Variational Physics Informed Neural Network (CRVPINN) for numerical simulations of pollution propagation.
The framework presented here is an extension of the stationary code
to time-dependent problems discussed in~\cite{LOS2025107839}, and it is the first application of the CRVPINN method to non-stationary problems. 

The goal of the computer simulation was to understand the influence of thermal inversions on the propagation of snowmobile pollution in Spitsbergen, and the simulation results showed that the polluted air, while trapped near ground level, cannot be carried out of the region and significantly affects local air quality. 

\revision{We introduce the Physics Informed Neural Network method with robust loss function for the time-dependent simulations of pollution propagation.  
We introduce the robust loss function for time-dependent advection-diffusion problems, based on our original proofs of inf-sub stability and boundedness of the discrete weak formulation in the space-time setup.
We also introduce the collocation method to speed up the training process. We solve linear advection-diffusion equations with carefully selected forcing term to model the snowmobile movement. We also extend the linear model to the non-linear Burgers-type equations. We compare the robust loss computations with explicit dynamics solver using smooth approximation with B-spline basis functions, exploiting the Kronecker product structure of the mass matrix to obtain linear computational cost solver. We show that our robust loss solver for both linear and non-linear case is four times slower than then the explicit dynamics solver using the B-spline based discretizations to solve a linear problem.}

As an interesting case study, we focus on the pollution propagation from snowmobiles at Spitsbergen. We simulate the accumulation of pollution generated by snowmobiles at ground level due to the vertical temperature profiles at Spitsbergen using Physics Informed Neural Networks (PINN).
We also present detailed measurements of pollution emitted by snowmobiles in Spitsbergen using Airly sensors.

The structure of the paper is as follows. We derive the collocation-based robust variational PINN formulation of the advection-diffusion model in Section 2. 
The following Section 3 contains the proofs of the robustness of the method, showing the boundedness and inf-sup stability of the corresponding weak formulation.
\revision{In Section 4 we verify the method on model problem with manufactured solution, and we compare its efficiency to IGA-ADS \cite{10.1007/978-3-031-35995-8_13,LOS2024102238} solver.}
Next, in Section 5, we focus on the in-field measurements performed from snowmobiles in Longyearbyen. \revision{Section 6 contains the numerical experiments with linear and non-linear model, and the qualitative comparison with the in-field measurements. We conclude the paper in Section 7 and provide the code description and the code links in the Appendix.}

\section{Neural networks estimating the pollution propagation from snowmobiles} \label{sec:pinnsec}

\revision{We consider the non-stationary (transient) advection--diffusion equation defined on the spatial domain
$\Omega=[0,L_x]\times[0,L_y]$, $t\in[0,T]=I,
$
for the pollutant concentration
$u \colon \Omega\times[0,T]\rightarrow\mathbb{R}$. 
The governing equation is
\begin{equation}
\frac{\partial u}{\partial t}
+
v_y(y)\frac{\partial u}{\partial y}
-
K_x\frac{\partial^2 u}{\partial x^2}
-
K_y\frac{\partial^2 u}{\partial y^2}
=
S(x,y,t),
\label{PDE}
\end{equation}
where $K_x$ and $K_y$ denote the diffusion coefficients in the horizontal and vertical directions, $v_y(y)$ is the vertical advection velocity, $S(x,y,t)$ represents a moving pollution source. These equations can be also rewritten in a more compact way
    \begin{equation}
    \frac{\partial u(\mathbf{x},t)}{\partial t}
    -\nabla \cdot (\epsilon\nabla u(\mathbf{x},t)) 
    +{\bf b} \cdot \nabla u(\mathbf{x},t) = S(\mathbf{x},t),
    \mbox{for}\;(\mathbf{x},t)\in \Omega \times I, \label{eq:advection}
    \end{equation}
    with  ${\bf b}=(0,v_y(y))$, and $\epsilon=\begin{bmatrix} K_x & 0 \\ 0 & K_y \end{bmatrix}=\begin{bmatrix} 1.0 & 0 \\ 0 & 0.1 \end{bmatrix}$.}
\revision{In the present model no horizontal advection is assumed,
$v_x=0$,
while the vertical transport velocity is prescribed as
\begin{equation}
v_y(y)=
\begin{cases}
4, & y\ge 0.1,\\
0, & y<0.1.
\end{cases}
\end{equation}
Consequently, the pollutant is transported upward only above the near-ground layer.}

\revision{The pollution source models emissions generated by moving snowmobiles (or other moving vehicles). The source propagates in the positive $x$ direction with constant velocity while its intensity decays in the transverse ($y$) direction.
The maximum pollution intensity at height $y$ is described by the Gaussian profile
\begin{equation}
H(y)
=
H_{\max}
\exp\!\left(
-\frac{y^2}{2\sigma_y^2}
\right),
\end{equation}
where
$
H_{\max}=0.5,
\sigma_y=0.4.
$
The moving front position is
$
x_f(t)=Vt,
$
with constant velocity
$V=10.
$
To obtain a smooth moving front, we define
$
\alpha(x,t)
=
\frac{x_f(t)-x}{W},
$
where
$
W=0.8
$
controls the width of the transition zone.
The normalized variable is clipped to the interval $[0,1]$,
$
\hat{\alpha}
=
\min\left(1,\max(0,\alpha)\right),
$
and a smooth cosine ramp is used,
$
R(\hat{\alpha})
=
\frac12
\left(
1-\cos(\pi\hat{\alpha})
\right).
$
The resulting moving pollution source is therefore
\begin{equation}
S(x,y,t)
=
\begin{cases}
H(y)\,R(\hat{\alpha}),
&
y\le0.2,
\\
0,
&
y>0.2.
\end{cases}
\label{eq:source}
\end{equation}
Thus, the source occupies a narrow region near the ground while moving along the positive $x$ direction with constant speed.}


    
    \subsection{Physics Informed Neural Networks} \label{sec:pinns}
    
    To numerically solve~\eqref{PDE}, we first employ PINNs~\cite{raissi2019physics,cai2021physics,WOS:000526518300074} to approximate the solution of partial differential equations  (PDEs) using neural networks. \revision{We employ a fully connected neural network that represents the pollution scalar field.} It assumes that the neural network input is the point in the space-time domain $(\mathbf{x},t)=(x,y,t)$ and the output from the neural network is the concentration of pollution at point $\mathbf{x}$ at time moment $t$, denoted by $u_\theta(\mathbf{x},t)$.    \revision{Namely, \begin{equation}
    u_\theta({\bf x},t) = {\cal NN}({\bf x},t) = \sigma_n(A_n \sigma_{n-1} ( \cdots \sigma_1( 
    A_1 [{\bf x},t]^T
    + \beta_1) \cdots) + \beta_n)
\end{equation}
where $\{A_i\}_{i=1,...,n}$ are layer weights,  $\sigma$ are nonlinear activation functions, and $\{\beta_i\}_{i=1,...,n}$ are biases. In general, we denote by $\theta = \{A_i,\beta_i\}_i$ all the parameters of the neural network.}
    It minimizes the PDE residual \eqref{eq:advection}:

    \begin{equation}\label{LossPDE}
    \begin{aligned}
    {\cal RES}_{PDE}(u_\theta(\mathbf{x},t)) &= \left(\right.
    \frac{\partial u_\theta(\mathbf{x},t)}{\partial t}
    -\nabla \cdot (\epsilon\nabla u_\theta(\mathbf{x},t)) & \left.
    +{\bf b} \cdot \nabla u_\theta(\mathbf{x},t) -\revision{S}(\mathbf{x},t)\right),
    \\ & & \mbox{for}\;(\mathbf{x},t)\in \Omega \times I,\\
    {\cal L}_{PDE} &= \sum_{(\mathbf{x}_i,t_i)}\left(    {\cal RES}_{PDE}(u(\mathbf{x}_i,t_i))\right)^2,
    \end{aligned}
    \end{equation}
    
\noindent together with the initial condition residual:    

    \begin{equation}\label{LossInitial}
    \begin{aligned}
    {\cal RES}_{Init}(u_\theta(\mathbf{x},0)) &= \left(
    u_\theta(\mathbf{x},0)-u_0(\mathbf{x})\right),
    \; \mbox{for}\;\mathbf{x}\in \Omega,\\
    {\cal L}_{Init} &= \sum_{(\mathbf{x}_i,0)}\left(    {\cal RES}_{Init}(u(\mathbf{x}_i,0))\right)^2,
    \end{aligned}
    \end{equation}

\noindent  and the boundary condition residual:

    \begin{equation}\label{LossBC}
    \begin{aligned}
    {\cal RES}_{BC}(u_\theta(\mathbf{x},t)) &= \left(
    \frac{\partial u_\theta\mathbf{x},t)}{\partial {\bf n}}-0\right),
      \; \mbox{for}\;(\mathbf{x},t)\in \partial \Omega \times I,\\
    {\cal L}_{BC} &= \sum_{(\mathbf{x}_i,t_i)}\left(    {\cal RES}_{BC}(u(\mathbf{x}_i,t_i))\right)^2.
    \end{aligned}
    \end{equation}
\revision{where ${\bf n}$ is the vector normal to the boundary.}
Note that the residual loss function is based on the strong residual of the PDE and is computed by selecting a set of collocation points $(\mathbf{x},t)\in \Omega \times I$. Additionally, note that the boundary loss uses selected collocation points located on the boundary $(\mathbf{x},t)\in \partial \Omega \times I$, and the initial loss uses the collocation points selected from the spatial domain $\mathbf{x} \in \Omega$ at the initial time.

    \subsection{Variational Physics Informed Neural Networks}

In this section, we present a transformation of the strong-form residuals of PINN into the weak-form residual of Variational Physics Informed Neural Networks \cite{kharazmi2021hp}. For this purpose, we consider the space-time domain $\Omega \times I$ and perform a shift with respect to the initial boundary condition.
We define the shift $u_{shift}$ such that:
 \begin{equation}
 \begin{aligned}
 u(\mathbf{x},t) &= w(\mathbf{x},t)+u_{shift}(\mathbf{x},t), \\ 
 u_{shift}(\mathbf{x},t) & = (1-\frac{1}{T}t)u_0(\mathbf{x}).
 \label{eq:shift}
 \end{aligned}
 \end{equation}
\revision{In other words, we split the solution into the part $u_{shift}$ that fulfills the initial condition $u_{shift}({\bf x},0)=(1-\frac{0}{T})u_0({\bf x})=u_0({\bf x})$ (using a function $1-\frac{t}{T})$, and the remaining part of the solution denoted by $w({\bf x},t)$. We substitute the decomposition into our equation \eqref{eq:advection}, 
   \begin{equation}\label{PDEweakshifta}
    \begin{aligned}
    \frac{\partial (w+u_{shift})}{\partial t}-&\nabla \cdot(\epsilon\nabla (w+u_{shift})) +{\bf b} \cdot \nabla (w+u_{shift})  = S,&~\mbox{in}\;\Omega \times I\\
    \end{aligned}
    \end{equation}
Since we know the formulas for $u_{shift}$ term we can move it to the right-hand side to obtain}

    \begin{equation}\label{PDEweakshift}
    \begin{aligned}
    \frac{\partial w}{\partial t}-&\nabla \cdot(\epsilon\nabla w) +{\bf b} \cdot \nabla w  =  
     &\underbrace{\revision{S}+\frac{\partial u_{shift}}{\partial t}-\nabla \cdot(\epsilon\nabla  u_{shift}) +{\bf b} \cdot \nabla u_{shift}}_{\mathbf{F}},&~\mbox{in}\;\Omega \times I\\
    \end{aligned}
    \end{equation}
    \noindent In the following, we overwrite the notation and replace $w$ with $u$ \revision{ and we have also denoted the new right-hand side by ${\bf F}$.
  We define $V = L^2(0,T; H^1_0(\Omega))$ and select a family of test functions $v \in V$. }
 We multiply the strong formulation (\ref{PDEweakshift}) 
 \revision{   \begin{equation}\label{weak1}
    \begin{aligned}
 \left(   \frac{\partial u}{\partial t}, v \right)_{\Omega \times I} -\left(\nabla \cdot (\epsilon\nabla u),  v\right)_{\Omega \times I}&+\left({\bf b} \cdot \nabla u,v\right)_{\Omega \times I}+(\mathbf{F},v)_{\Omega \times I},&\forall v \in V.\\
    \end{aligned}
    \end{equation}}
 and integrate it by parts \revision{the term 
$-\left(\nabla \cdot (\epsilon\nabla u),  v\right)_{\Omega \times I}=
\left(\epsilon\nabla u,\nabla  v\right)_{\Omega \times I}-(\epsilon \nabla u \cdot {\bf n}, v)_{\partial \Omega \times I}$.}
During the derivation of the weak formulation, we employed the fact that test functions are zero on the boundary to cancel out the boundary integral term \revision{$-(\epsilon \nabla u \cdot {\bf n}, v)_{\partial \Omega \times I}=0$}. \revision{We get}
    \begin{equation}\label{weak}
    \begin{aligned}
 \left(   \frac{\partial u}{\partial t}, v \right)_{\Omega \times I} +\left(\epsilon\nabla u,\nabla  v\right)_{\Omega \times I}
 -(\epsilon \nabla u \cdot {\bf n}, v)_{\partial \Omega \times I} 
 &+\left({\bf b} \cdot \nabla u,v\right)_{\Omega \times I}+(\mathbf{F},v)_{\Omega \times I},&\forall v \in V.\\
    \end{aligned}
    \end{equation}

Since we have eliminated the non-zero initial boundary conditions, we can now introduce a single loss based on the weak residual and test with the functions that are zero on the initial boundary.

    \begin{equation}\label{LossPDEweak}
    \begin{aligned}
    {\cal RES}_{PDE-weak}(u_\theta(\mathbf{x},t),v(\mathbf{x},t)) &= \\
 \left(   \frac{\partial u_\theta}{\partial t}, v \right)_{\Omega \times I} +\left(\epsilon\nabla u_\theta,\nabla  v\right)_{\Omega \times I}&+\left({\bf b} \cdot \nabla u_\theta,v\right)_{\Omega \times I}+(\mathbf{F},v)_{\Omega \times I}\\
    {\cal L}_{PDE-weak} &= \sum_{v(\mathbf{x},t)}\left(    {\cal RES}_{PDE-weak}(u_\theta(\mathbf{x},t),v(\mathbf{x},t))\right)^2.
    \end{aligned}
    \end{equation}

\noindent Note that this loss function is based on the weak-form residual, and it is computed by selecting a set of test functions $v \in V$. Also, having this weak residual, we do not need to minimize the initial and boundary condition residuals, since they are already included in the weak residual. 

    \subsection{Robust Variational Physics Informed Neural Networks} \label{sec:rvpinn}

As has been shown in \cite{ROJAS2024116904}, the VPINN loss is not robust and is not directly related to the true error. 
Thus, to control the convergence of the training, we introduce the Robust Variational Physics Informed Neural Network \cite{ROJAS2024116904} method with the following robust loss function:
    \begin{equation}\label{LossPDErobust}
    \begin{aligned}
    {\cal L}_{PDE-robust} &=  {\cal RES}_{PDE-weak}^T(u_\theta(\mathbf{x},t),v(\mathbf{x},t)){\cal G}_{vpinn}^{-1} {\cal RES}_{PDE-weak}(u_\theta(\mathbf{x},t),v(\mathbf{x},t)),
    \end{aligned}
    \end{equation}

\noindent where ${\cal G}_{vpinn}^{-1}$ is the inverse of the Gram matrix ${\cal G}_{vpinn}$. The Gram matrix is selected using the inner product norm of the test space $V$. In our case, we select: 

    \begin{equation}
    \begin{aligned}
    {{\cal G}_{vpinn}}_{m,n} &=  \left( v_m, v_n \right)_{V } = \int_0^T \int_\Omega \nabla v_m(\mathbf{x},t) \cdot \nabla v_n(\mathbf{x},t) d\mathbf{x} dt,
    \end{aligned}
    \end{equation}

\noindent where $v_m,v_n \in V$ is the selected set of test functions. Thus, the dimension of the Gram matrix ${\cal G}_{vpinn}$ is equal to the dimension of the test space.

    \subsection{Collocation-based Robust Variational Physics Informed Neural Networks} \label{sec:crvpinn}

The last step in our derivation is the replace the Robust VPINN by the Collocation-based PINN \cite{LOS2025107839}. In this way we replace the continuous description of the RVPINN and expensive three-dimensional integration with discrete domain constructed over the discrete set of collocation points. We also replace the gradients of the weak formulation by discrete finite-difference gradients computed over the collocation points.
In order to do that, we first define the set of points:

\begin{equation}
\Omega_h \times I_h := \{ (ih_x,jh_y,kh_t) \in \revision{[0,1]^2\times [0,T]}: 0 \leq i \leq N_x, 0 \leq j \leq N_y, 0 \leq j \leq N_t\},
\end{equation} where $h_x=1/N_x, h_y=1/N_y, h_t=T/N_t$. We consider:
\begin{equation}
D_h := \{ u:\Omega_h \times I_h\longrightarrow\mathbb{R}\}\cong \mathbb{R}^{(N_x+1)(N_y+1)(N_t+1)},
\end{equation} and equip it with the following discrete inner product and induced norm:
\begin{equation}
(u,v)_h := h_xh_yh_t \sum_{p\in \Omega_h \times I_h} u(p)v(p),\quad \|u\|_h^2 :=(u,u)_h, \qquad u,v\in D_h.
\end{equation}
Now, we follow the notation convention below for simplicity:
\begin{equation}
u_{i,j,k}:=u(ih_x,jh_y,kh_t),\qquad 0\leq i\leq N_x, 0\leq j\leq N_y, 0\leq k\leq N_t,
\end{equation} 
\revision{Let~$\Gamma_h = \partial \Omega \cap \Omega_h$ denote the set of discrete spatial boundary points.}
We also introduce a space \revision{of space-time functions} that \revision{satisfy the} zero-Dirichlet boundary condition
\revision{at the spatial boundaries}
\begin{equation}
D_{0,h} = \revision{\{ u \in D_h : \left. u\right|_{\Gamma_h \times I_h} = 0\}},
\end{equation}
\revision{as well as the space of~$D_{0,h}$ functions that vanish at~$t = 0$:}
\revision{
\begin{equation}
    \Dzh^0 = \{ u \in \Dzh : u(\bm{x}, 0) = 0 \quad \forall \bm{x} \in \Omega_h \}
\end{equation}
}
We introduce a canonical \revision{$\Prodh{\cdot}{\cdot}$-orthogonal} basis for $D_h$ given by a set of functions $\delta_{i,j,k}:\Omega_h \times I_h \longrightarrow\mathbb{R}$, $0\leq i\leq N_x, 0\leq j\leq N_y, 0\leq k\leq N_t, $ that behave as Kronecker deltas over $\Omega_h\times I_h$,

\begin{equation}
\label{eq:test}
\delta_{i,j,k}(\mathbf{x},t)=
\displaystyle{\left\{\begin{aligned}
&1 \quad \textrm{ if }\mathbf{x}=x_{i,j,k}, \\
&0 \quad \textrm{ if }\mathbf{x}\neq x_{i,j,k}.
\end{aligned}
\right.}
\end{equation} 
We introduce the finite difference gradient \revision{operators} given by
\begin{align}
\revision{\Dp} u_{i,j,k} := &\left(\revision{\Dxp} u_{i,j,k},\revision{\Dyp} u_{i,j,k},\revision{\Dtp} u_{i,j,k}\right)  :=  \\
&\left(\frac{u_{i+1,j,k}-u_{i,j,k}}{h_x},\frac{u_{i,j+1,k}-u_{i,j,k}}{h_y},\frac{u_{i,j,k+1}-u_{i,j,k}}{h_t},\right),\\
\revision{\Dm} u_{i,j,k} := &\left(\revision{\Dxm} u_{i,j,k},\revision{\Dym} u_{i,j,k},\revision{\Dtm} u_{i,j,k}\right) := \\
&\left(\frac{u_{i,,k}-u_{i-1,j,k}}{h_x},\frac{u_{i,j,k}-u_{i,j-1,k}}{h_y},\frac{u_{i,j,k}-u_{i,j,k-1}}{h_t}\right),
\\
\revision{\Dc u \coloneqq} & \revision{\frac{1}{2}\left(\Dp u + \Dm u\right)}
\end{align} for $0\leq i\pm 1, j\pm 1\leq N$.
\revision{In some places we will restrict the derivatives to the spatial directions~$x$ and~$y$,
and denote~$\Dp[\bm{x}] = (\Dxp, \Dyp)$ and~$\Dm[\bm{x}] = (\Dxm, \Dym)$}.
As a result, we can define the following discrete inner product according to these gradient values: 
\begin{align}
\label{eq:H1}
(u,v)_{\nabla,h} &:= 
(\revision{\Dxp}u,\revision{\Dxp}v)_{h} + 
(\revision{\Dyp}u,\revision{\Dyp}v)_{h}
\\
&\phantom{:}= (\revision{\Dxm}u,\revision{\Dxm}v)_{h} + 
(\revision{\Dym}u,\revision{\Dym}v)_{h},
\end{align} with the corresponding induced norm
\begin{equation}
\label{eq:H1norm}
\|u\|_{\nabla,h}^2 :=(u,u)_{\nabla,h} = \Vert \revision{\Dxp} u\Vert_{h}^2 + \Vert \revision{\Dyp} u\Vert_{h}^2
\end{equation} 

We rewrite now the weak formulation into the discrete weak form. We start from the discrete strong form.

\begin{equation}
  \revision{\Dtc}u +  \beta_x  \revision{\Dxc}u + \beta_y \revision{\Dyc} u - \epsilon\Delta_h u = f,
\end{equation} where $f,\beta_x,\beta_y \in \revision{\Dh}$ are given source and coefficient functions, and $\Delta_h$ is the 2D discrete Laplacian defined pointwise as:
\begin{equation}
    \Delta_h u_{i,j,k}:=\frac{u_{i+1,j,k}-2u_{i,j,k}+u_{i-1,j,k}}{{h_x}^2}+\frac{u_{i,j+1,k}-2u_{i,j,k}+u_{i,j-1,k}}{{h_y}^2}
\end{equation} 
Testing with Kronecker Deltas:
$v \in D_{0,h}$,
\begin{equation}
\begin{aligned}
( \revision{\Dtc}u +\beta_x \revision{\Dxc}u + \beta_y \revision{\Dyc}u 
-\epsilon \Delta_h u,v)_{h} =
(f,v)_{h},
\end{aligned}
\label{eq:weak}
\end{equation} 
We perform discrete integration by parts to the term with
the Laplacian,
\revision{noting that~$\Delta_h = \Dm[\bm{x}] \circ \Dp[\bm{x}]$}, to obtain the following discrete weak variational reformulation: find $u\in \revision{\Dzh^0}$ such that:
\begin{equation}
\begin{aligned}\label{eq:discweak}
\overbrace{(\revision{\Dtc u}+\beta_x \revision{\Dxc u} + \beta_y \revision{\Dyc u}, v)_{h} +
\epsilon(\revision{\Dp[\bm{x}]} u, \revision{\Dp[\bm{x}]}v)_{h}}^{b(u,v)} 
= \overbrace{(f,v)_{h}}^{l(v)},\qquad \forall v\in D_{0,h},
\end{aligned}
\end{equation} where $b$ and $l$ are the corresponding discrete bilinear and linear forms over \revision{$\Dzh^0 \times \Dzh$} and $D_{0,h}$, respectively,
\revision{and~$\Dp[\bm{x}] = (\Dxp, \Dyp)$ denotes the spatial gradient}.

Following \cite{LOS2025107839} we construct the robust collocation-based  loss function as:
\begin{equation}
\text{RES}(u) = \{b(u, r(u)_{i,j})-l(r(u)_{i,j})\}_{0 < i< N_x,0 < j< N_y 0 < k< N_t}
\end{equation}
\begin{equation}
\begin{aligned}
  {\cal L}_{crvpinn}=  \text{RES}(u)^T\; \mathbf{G}^{-1} \; \text{RES}(u).
\end{aligned}
\end{equation}
where~$\mathbf{G}$ is the Gram matrix of the inner product $(\cdot,\cdot)_{\nabla,h}$. 


\section{\revision{Error Analysis and Residual Minimization}}

In this section, we provide the detailed mathematical justification for minimizing the weak residual norm. We establish that the norm of the Riesz representative of the residual is spectrally equivalent to the true error norm for the Space-Time Convection-Diffusion equation.

\revision{To carry out the proof, we need to establish some facts about the discrete calculus
introduced above first.}
\revision{
Let~$\tau_\alpha^{\pm}$ denote the shift operator in direction~$\alpha$,
that is
\begin{equation}
    \left(\Txp u\right)_{ijk} =
    \begin{cases}
    u_{i+1,j,k} & \text{for }i < N_x \\
    0 & \text{for }i = N_x
    \end{cases}
    \quad
    \left(\Txm u\right)_{ijk} = 
    \begin{cases}
    u_{i-1,j,k} & \text{for }i > 0 \\
    0 & \text{for }i = 0
    \end{cases},
\end{equation}
and similarly for~$\tau_y^\pm$ and~$\tau_t^\pm$.}
\revision{
\begin{lemma}
\label{lem:shifts-are-adjoin}
Shift operators~$\Tp{\alpha}$ and~$\Tm{\alpha}$ are adjoint,
that is~$\Prodh{\Tp{\alpha} u}{v} = \Prodh{u}{\Tm{\alpha} v}$
for all~$u, v \in \Dh$.
\end{lemma}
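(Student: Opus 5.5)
The claim follows from a direct computation with the definition of the discrete inner product $\Prodh{\cdot}{\cdot}$ as a weighted sum over the grid $\Omega_h \times I_h$, followed by a reindexing of the summation. The three directions are handled identically, so I would treat $\alpha = x$ in detail and note that $y$ and $t$ follow by replacing $i$, $N_x$ with $j$, $N_y$ or $k$, $N_t$. The constant weight $h_xh_yh_t$ plays no role and factors out.

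First I write
\begin{equation*}
\Prodh{\Txp u}{v} = h_xh_yh_t \sum_{j=0}^{N_y}\sum_{k=0}^{N_t} \sum_{i=0}^{N_x} (\Txp u)_{ijk}\, v_{ijk}.
\end{equation*}
By the definition of $\Txp$, the term with $i = N_x$ vanishes. The remaining inner sum is therefore $\sum_{i=0}^{N_x-1} u_{i+1,j,k} v_{ijk}$. Next I substitute $i' = i+1$, which turns it into $\sum_{i'=1}^{N_x} u_{i'jk} v_{i'-1,j,k}$. For $i' \ge 1$ we have $v_{i'-1,j,k} = (\Txm v)_{i'jk}$. The missing index $i'=0$ can be added back for free, because $(\Txm v)_{0jk} = 0$ by definition. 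Thus the inner sum equals $\sum_{i'=0}^{N_x} u_{i'jk}(\Txm v)_{i'jk}$, and summing over $j,k$ and restoring the weight gives $\Prodh{u}{\Txm v}$.

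The only point needing care, and the closest thing to an obstacle, is the boundary bookkeeping. The zero-extension conventions in the definitions of $\Tp{\alpha}$ and $\Tm{\alpha}$ must exactly cancel the two boundary terms: the $i=N_x$ term on the left and the $i=0$ term on the right. I would check both endpoints explicitly. Note that this works for arbitrary $u, v \in \Dh$, with no boundary conditions required. This is precisely why the shifts were defined with zero padding rather than, for example, periodically.
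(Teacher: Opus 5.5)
Your proof is correct and is essentially the paper's argument. The paper uses bilinearity to reduce to $u=\delta_{ijk}$ and checks two cases. For $i=0$ both sides vanish, since $\Txp\delta_{0jk}\equiv 0$ and $(\Txm v)_{0jk}=0$; for $i\neq 0$ both sides equal $h_xh_yh_t\, v_{i-1,j,k}$. This is your reindexing carried out term by term, with the same two zero-padding boundary cases.
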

\begin{proof}
    We will prove it for~$\alpha = x$; for other directions, the proof is the same.
    Since the scalar product is bilinear, it is enough to consider the case
    when~$u$ is a basis function, 
    that is, $u = \delta_{ijk}$ for some~$0 \leq i, j, k \leq N$.
    If~$i = 0$, then~$\Txp \delta_{ijk} \equiv 0$, so~$\Prodh{\Txp \delta_{ijk}}{v} = 0$.
    On the other hand, $\left(\Txm v\right)_{ijk} = 0$,
    and consequently~$\Prodh{\delta_{ijk}}{\Txp v} = 0$ as well.
    For~$i \neq 0$, function~$\Txp \delta_{ijk}$ is nonzero only at the point
    corresponding to~$(i - 1, j, k)$,
    so
    \begin{equation}
        \frac{1}{h_xh_yh_t}\Prodh{\Txp \delta_{ijk}}{v} = v_{i-1,j,k} = \,\left(\Txm v\right)_{ijk}
        = \frac{1}{h_xh_yh_t}\Prodh{\delta_{ijk}}{\Txm v}
    \end{equation}
    which completes the proof.
\end{proof}
}

\revision{
\begin{lemma}
\label{lem:shift-flips-gradient-polarity}
$\Tm{\alpha} \circ \Dp[\alpha] = \Dm[\alpha]$.
\end{lemma}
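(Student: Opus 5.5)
The plan is to check the identity pointwise at every node $(i,j,k)$, working only with $\alpha = x$; the directions $y$ and $t$ are handled by the same computation with indices relabelled. I will use nothing beyond the definitions of $\Txm$ and of the one-sided differences $\Dxp$ and $\Dxm$. Lemma~\ref{lem:shifts-are-adjoin} is not needed here. I will split the index range into the interior case $1 \le i \le N_x$ and the boundary case $i = 0$.

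For $1 \le i \le N_x$, the definition of $\Txm$ gives $\left(\Txm \Dxp u\right)_{ijk} = \left(\Dxp u\right)_{i-1,j,k}$. Since $0 \le i-1 \le N_x - 1$, the forward difference at $i-1$ uses only $u_{i-1,j,k}$ and $u_{i,j,k}$, both of which exist, so no convention for values outside the grid enters. This gives
\begin{equation}
\left(\Txm \Dxp u\right)_{ijk} = \frac{u_{i,j,k} - u_{i-1,j,k}}{h_x} = \left(\Dxm u\right)_{ijk}.
\end{equation}
For the same reason, the value of $\Dxp u$ at the last node $i = N_x$ never enters this composition. Whatever convention is used there is therefore irrelevant to the lemma.

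The main obstacle is the boundary node $i = 0$. There the left-hand side is $0$ by the definition of $\Txm$, while $\Dxm u_{0,j,k}$ would formally involve the undefined value $u_{-1,j,k}$. I will make explicit that the backward difference is defined only for $i \ge 1$, as the stated restriction $0 \le i \pm 1$ indicates. It must then be extended by zero at $i = 0$, in the same way that the shift operators are extended by zero. This extension is exactly the convention for which the lemma holds, so at $i = 0$ both sides equal $0$.

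I would also remark that the naive alternative of padding $u$ with $u_{-1,j,k} = 0$ would break the identity unless $u$ vanishes on the spatial boundary. For the functions in $\Dzh$ used later this distinction disappears, because there $u_{0,j,k} = 0$ and both conventions give $0$ at $i = 0$. With the boundary convention fixed, the two cases together give $\Txm \circ \Dxp = \Dxm$ on all of $\Dh$.
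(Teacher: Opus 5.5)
Your proof is correct and matches the paper's: a pointwise check for $i>0$ using $(\Txm \Dxp u)_{ijk} = (\Dxp u)_{i-1,j,k} = (\Dxm u)_{ijk}$, together with the observation that both sides vanish at $i=0$. Your remark that this boundary step requires the convention $(\Dxm u)_{0jk}=0$, rather than zero-padding $u_{-1,j,k}=0$, is a useful clarification of an assumption the paper leaves implicit.
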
%
\begin{proof}
    Let~$u \in \Dh$, then for~$i > 0$ we have
    \begin{equation}
        \left(\Txm \Dxp u\right)_{ijk} = \left(\Dxp u\right)_{i-1,j,k} = h_x^{-1}(u_{ijk} - u_{i-1,j,k})
        = \left(\Dxm u\right)_{ijk}.
    \end{equation}
    For~$i = 0$, both sides are zero.
    Proof for other directions is the same.
\end{proof}
}

\revision{
\begin{lemma}[Discrete product rule]
\label{lem:discrete-product-rule}
Any pair~$u,v \in \Dh$ satisfies
\begin{equation}
\begin{aligned}
  \Dp[\alpha] (u v) &= \left(\Dp[\alpha] u\right)\,v + \Tp{\alpha}u\left(\Dp[\alpha] v\right) \\
  \Dm[\alpha] (u v) &= \left(\Dm[\alpha] u\right)\,v + \Tm{\alpha}u\left(\Dm[\alpha] v\right).
\end{aligned}
\end{equation}
\end{lemma}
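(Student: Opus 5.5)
The plan is to prove both identities pointwise at an arbitrary grid index~$(i,j,k)$, working in direction~$\alpha = x$ only; the directions~$y$ and~$t$ are identical up to renaming the index. The first step is to express the difference operators through the shift operators introduced above. I will adopt the convention, consistent with the definition of~$\Txp$ and~$\Txm$ (values outside the grid are taken to be zero), that
\begin{equation*}
  \Dp[\alpha] = h_\alpha^{-1}\left(\Tp{\alpha} - I\right), \qquad
  \Dm[\alpha] = h_\alpha^{-1}\left(I - \Tm{\alpha}\right),
\end{equation*}
where~$I$ is the identity on~$\Dh$. In the interior this agrees with the formulas given earlier; at the boundary indices it is what fixes the otherwise undefined out-of-range values.

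The key observation is that shifts are multiplicative: $\Tp{\alpha}(uv) = (\Tp{\alpha}u)(\Tp{\alpha}v)$ and likewise for~$\Tm{\alpha}$. I will check this in two cases. For~$i<N_x$, both sides at~$(i,j,k)$ equal~$u_{i+1,j,k}v_{i+1,j,k}$. For~$i=N_x$, both sides vanish, since each factor on the right is zero. The argument for~$\Tm{\alpha}$ is the same, with the boundary case at~$i=0$.

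With this in hand, the proof reduces to an add-and-subtract identity. For the forward difference,
\begin{equation*}
  h_\alpha\,\Dp[\alpha](uv) = (\Tp{\alpha}u)(\Tp{\alpha}v) - uv
  = (\Tp{\alpha}u - u)\,v + (\Tp{\alpha}u)(\Tp{\alpha}v - v),
\end{equation*}
obtained by inserting~$\pm(\Tp{\alpha}u)v$. Dividing by~$h_\alpha$ gives the first formula. For the backward difference, inserting~$\pm(\Tm{\alpha}u)v$ gives
\begin{equation*}
  h_\alpha\,\Dm[\alpha](uv) = uv - (\Tm{\alpha}u)(\Tm{\alpha}v) = (u - \Tm{\alpha}u)\,v + (\Tm{\alpha}u)(v - \Tm{\alpha}v),
\end{equation*}
which is the second formula.

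The only real obstacle is the boundary convention. The earlier explicit formulas for~$\Dp$ and~$\Dm$ refer to out-of-range values such as~$u_{N_x+1,j,k}$, so I will state clearly at the start that these are defined through the shift operators (zero extension). I will then check that the multiplicativity of~$\Tp{\alpha}$ and~$\Tm{\alpha}$ survives at~$i=N_x$ and~$i=0$ respectively. Once this is settled, the remaining algebra holds at every grid point without further case distinction.
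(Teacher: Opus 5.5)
Your proof is correct and is essentially the paper's argument: you insert $\pm(\Txp u)\,v$ (resp.\ $\pm(\Txm u)\,v$) at interior indices, exactly as the paper does, and only phrase it as an operator identity using the multiplicativity of the shifts. The one real difference is the boundary index: the paper just notes that both sides vanish at $i=N_x$, so it implicitly takes $\Dxp u=0$ there (and $\Dxm u=0$ at $i=0$, which Lemma~\ref{lem:shift-flips-gradient-polarity} needs), whereas your zero-extension convention $\Dp[\alpha]=h_\alpha^{-1}(\Tp{\alpha}-I)$ also makes the identity hold but gives $(\Dxm u)_{0,j,k}=u_{0,j,k}/h_x\neq 0=(\Txm\Dxp u)_{0,j,k}$, so it is not the convention the paper's subsequent lemmas rely on.
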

\begin{proof}
    For~$i < N$ we have
    \begin{equation}
    \begin{aligned}
        \Dxp(u v)_{ijk} &= u_{i+1,j,k} v_{i+1,j,k} - u_{ijk}v_{ijk} \\&=
        u_{i+1,j,k}v_{i+1,j,k} - u_{i+1,j,k}v_{ijk} + u_{i+1,j,k}v_{ijk} - u_{ijk}v_{ijk} \\&=
        u_{i+1,j,k}(\nabla_{x+}v)_{ijk} + (\nabla_{x+}u)_{ijk}v_{ijk} \\&=
        \left(\Txp u\right)_{ijk}(\nabla_{x+}v)_{ijk} + (\nabla_{x+}u)_{ijk}v_{ijk}.
    \end{aligned}
    \end{equation}
    For~$i = N$, both sides are zero.
    Proofs for other directions and for~$\Dm[\alpha]$ are the same.
\end{proof}
}

\revision{
\begin{lemma}
\label{lem:zero-gradient-integral}
Any~$u \in \Dzh$ satisfies~$\Prodh{\Dp[\alpha] u}{1} = 0$ for~$\alpha \in \{x, y\}$.
\end{lemma}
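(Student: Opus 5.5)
The plan is a direct telescoping computation in direction $\alpha = x$; the case $\alpha = y$ is identical after exchanging the roles of the indices. The key point is that the forward difference at the last grid index is defined to be zero, as the proof of the discrete product rule (Lemma~\ref{lem:discrete-product-rule}) also uses. This is consistent with~$\Dxp = h_x^{-1}(\Txp - I)$ on interior indices.

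First I write out the inner product explicitly:
\begin{equation}
\Prodh{\Dxp u}{1} = h_xh_yh_t \sum_{j=0}^{N_y}\sum_{k=0}^{N_t} \sum_{i=0}^{N_x-1} \frac{u_{i+1,j,k}-u_{i,j,k}}{h_x},
\end{equation}
where the term $i = N_x$ contributes nothing. For each fixed pair~$(j,k)$ the inner sum telescopes to $h_x^{-1}\left(u_{N_x,j,k} - u_{0,j,k}\right)$.

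Both points $(0, jh_y, kh_t)$ and $(N_xh_x, jh_y, kh_t) = (1, jh_y, kh_t)$ lie in~$\Gamma_h \times I_h$. Since $u \in \Dzh$ vanishes there, every telescoped term is zero, and so the whole sum is zero.

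Alternatively, one could argue via Lemma~\ref{lem:shifts-are-adjoin}: write $\Prodh{\Txp u}{1} = \Prodh{u}{\Txm 1}$. Here $\Txm 1$ equals~$1$ except on the face $i=0$, where~$u$ vanishes anyway. The remaining boundary face $i=N_x$ then has to be handled carefully because of the convention there. This bookkeeping at the ends is the only real subtlety, namely checking that the convention at~$i=N_x$ and the Dirichlet condition at both ends make every boundary term vanish. The direct telescoping makes this bookkeeping transparent, so I will present that version.
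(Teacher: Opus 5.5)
Your proof is correct and follows the same route as the paper: the paper also telescopes the sum over $i$ for each fixed $(j,k)$ down to $u_{N_x,j,k}-u_{0,j,k}$ and then uses $u|_{\Gamma_h}=0$. Your explicit treatment of the $i=N_x$ term is cleaner than the paper's display, which writes $\sum_{i=0}^{N}[u_{i+1,j,k}-u_{ijk}]$ and silently drops the out-of-range term; for $u\in\Dzh$ that term vanishes under either convention anyway, since $u_{N_x,j,k}=0$.
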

\begin{proof}
    \begin{equation}
    \begin{aligned}
        \Prodh{\Dxp u}{1} &= h_x h_y h_t \sum_{(\bm{x}, t) \in \Omega_h \times I_h} (\Dxp u)(\bm{x}, t) \\
        &= h_y h_t\sum_{j,k = 0}^N \left\{ \sum_{i = 0}^N \left[u_{i+1,j,k} - u_{ijk}\right] \right\} \\
        &= h_y h_t\sum_{j,k = 0}^N \left[ u_{Njk} - u_{0jk} \right] = 0,
    \end{aligned}
    \end{equation}
    since~$\left.u\right|_{\Gamma_h} = 0$.
\end{proof}
}

\revision{
\begin{lemma}[Discrete integration by parts]
\label{lem:discrete-by-parts}
Any pair~$u,v\in \Dh$ such that~$uv \in \Dzh$ satisfies
\begin{equation}
    \Prodh{\Dp[\alpha] u}{v} = - \Prodh{u}{\Dm[\alpha] v}
\end{equation}
for~$\alpha \in \{x, y\}$.
\end{lemma}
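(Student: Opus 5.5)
The plan is to combine the discrete product rule (Lemma~\ref{lem:discrete-product-rule}) with the vanishing of discrete gradient integrals (Lemma~\ref{lem:zero-gradient-integral}), and then use the shift lemmas to move the shift operator from $v$ onto $u$. I carry it out for $\alpha = x$; the case $\alpha = y$ is identical.

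First apply the product rule in its $\Dm$ form, but with the roles chosen so that the derivative falls on $v$:
\begin{equation}
  \Dm[\alpha](uv) = \left(\Dm[\alpha] v\right) u + \Tm{\alpha} v \left(\Dm[\alpha] u\right).
\end{equation}
Since $uv \in \Dzh$, we would like $\Prodh{\Dm[\alpha](uv)}{1} = 0$. Lemma~\ref{lem:zero-gradient-integral} is stated only for $\Dp$, so I will either repeat its telescoping argument for $\Dm$, or use Lemma~\ref{lem:shift-flips-gradient-polarity} together with Lemma~\ref{lem:shifts-are-adjoin}:
\begin{equation}
  \Prodh{\Dm[\alpha] w}{1} = \Prodh{\Tm{\alpha}\Dp[\alpha] w}{1} = \Prodh{\Dp[\alpha] w}{\Tp{\alpha} 1}.
\end{equation}
Here $\Tp{\alpha}1$ equals $1$ except on the last layer, where it is $0$. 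Thus it equals $\Prodh{\Dp[\alpha] w}{1}$ up to the boundary layer $i = N_x$, and there $\Dp[\alpha] w$ is zero by convention, since both sides of the product rule vanish there. Hence the integral is $0$, and integrating the product rule gives
\begin{equation}
  \Prodh{u}{\Dm[\alpha] v} = -\Prodh{\Tm{\alpha} v}{\Dm[\alpha] u}.
\end{equation}

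Next rewrite the right-hand side with the adjointness and polarity lemmas:
\begin{equation}
  \Prodh{\Tm{\alpha} v}{\Dm[\alpha] u} = \Prodh{\Tm{\alpha} v}{\Tm{\alpha}\Dp[\alpha] u} = \Prodh{\Tp{\alpha}\Tm{\alpha} v}{\Dp[\alpha] u}.
\end{equation}
The operator $\Tp{\alpha}\Tm{\alpha}$ is the identity except on the layer $i = N_x$, where it gives $0$. But $\Dp[\alpha] u$ vanishes on that layer, so the product equals $\Prodh{v}{\Dp[\alpha] u}$. Combining with the previous display yields $\Prodh{\Dp[\alpha] u}{v} = -\Prodh{u}{\Dm[\alpha] v}$.

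As a fallback, the identity can be checked directly by summation by parts in the index $i$. The interior terms telescope, leaving the boundary contribution $u_{N_x}v_{N_x} - u_0 v_0$ (weighted by $h_y h_t$ and summed over $j,k$). This contribution vanishes because $uv \in \Dzh$.

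The main obstacle is the bookkeeping at the boundary layers $i = 0$ and $i = N_x$. The operators $\Dp$ and $\Dm$ are truncated to zero at opposite ends, and the shifts are truncated as well, so each identity above holds only up to these layers. The key is to check that every discrepancy multiplies a quantity that is zero there: either a truncated gradient, or the product $uv$, which vanishes on $\Gamma_h$. This is exactly where the hypothesis $uv \in \Dzh$, rather than $u \in \Dzh$ or $v \in \Dzh$ separately, gets used.
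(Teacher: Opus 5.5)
Your proof is correct and uses the same ingredients as the paper: the product rule, the vanishing of the discrete gradient integral of $uv\in D_{0,h}$, adjointness of the shifts, and $\tau_\alpha^{-}\nabla_\alpha^{+}=\nabla_\alpha^{-}$, only mirrored. The paper applies the product rule in its $\nabla_\alpha^{+}$ form, $0=(\nabla_\alpha^{+}(uv),1)_h=(\nabla_\alpha^{+}u,v)_h+(\tau_\alpha^{+}u,\nabla_\alpha^{+}v)_h$ with $(\tau_\alpha^{+}u,\nabla_\alpha^{+}v)_h=(u,\tau_\alpha^{-}\nabla_\alpha^{+}v)_h=(u,\nabla_\alpha^{-}v)_h$, so \cref{lem:zero-gradient-integral} applies verbatim and every step is exact, whereas your $\nabla_\alpha^{-}$ version needs two extra boundary-layer checks (a $\nabla_\alpha^{-}$ analogue of \cref{lem:zero-gradient-integral} and $\tau_\alpha^{+}\tau_\alpha^{-}=\mathrm{id}$ away from $i=N_x$), which you handle correctly via the convention $(\nabla_\alpha^{+}w)_{N_x}=0$.
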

\begin{proof}
    By~\cref{lem:zero-gradient-integral}, $\Prodh{\Dp[\alpha](uv)}{1} = 0$.
    Combining it with~\cref{lem:discrete-product-rule} we obtain
    \begin{equation}
        \Prodh{\Dp[\alpha] u}{v} + \Prodh{\Tp{\alpha}u}{\Dp[\alpha] v} =
        \Prodh{\Dp[\alpha](uv)}{1} = 0
    \end{equation}
    so~$\Prodh{\Dp[\alpha] u}{v} = - \Prodh{\Tp{\alpha}u}{\Dp[\alpha] v}$.
    By~\cref{lem:shifts-are-adjoin} we get
    $\Prodh{\Tp{\alpha}u}{\Dp[\alpha] v} = \Prodh{u}{\Tm{\alpha}\Dp[\alpha] v}$,
    and finally~\cref{lem:shift-flips-gradient-polarity}
    gives us~$\Prodh{u}{\Tm{\alpha}\Dp[\alpha] v} = \Prodh{u}{\Dm[\alpha] v}$,
    which proves the desired equality.
\end{proof}
}

\revision{
\begin{lemma}[discrete Poincaré inequality]
\label{lem:poincare-inequality}
There exists a constant~$C_p$ independent of the grid size such that
\begin{equation}
    \Normh{u} \leq C_p\Normdh{u}
\end{equation}
for every~$u \in \Dzh$.
\end{lemma}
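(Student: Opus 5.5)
The plan is to reduce the space-time statement to a one-dimensional inequality along grid lines in the $x$ direction, and then sum over the remaining indices. Since $\Normdh{u}^2 = \Normh{\Dxp u}^2 + \Normh{\Dyp u}^2 \geq \Normh{\Dxp u}^2$, it suffices to show $\Normh{u} \leq C_p \Normh{\Dxp u}$ for every $u \in \Dzh$ with $C_p$ depending only on the domain size. The $y$ and $t$ indices play no role, so we fix $j,k$ and work with the sequence $i \mapsto u_{i,j,k}$. This sequence vanishes at $i=0$ and $i=N_x$ because $u$ vanishes on $\Gamma_h \times I_h$.

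For fixed $j,k$ and $0 \leq i \leq N_x$, we write a discrete fundamental theorem of calculus, a telescoping sum starting from the boundary value $u_{0,j,k}=0$:
\begin{equation*}
u_{i,j,k} = \sum_{m=0}^{i-1} \left(u_{m+1,j,k}-u_{m,j,k}\right) = h_x \sum_{m=0}^{i-1} \left(\Dxp u\right)_{m,j,k}.
\end{equation*}
By the Cauchy--Schwarz inequality,
\begin{equation*}
u_{i,j,k}^2 \leq h_x^2\, i \sum_{m=0}^{N_x-1} \left(\Dxp u\right)_{m,j,k}^2 \leq h_x \sum_{m=0}^{N_x} \left(\Dxp u\right)_{m,j,k}^2,
\end{equation*}
where we used $i h_x \leq N_x h_x = 1$. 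Summing over $i=0,\dots,N_x$ gives at most $N_x+1$ identical terms. Multiplying by $h_x$ then gives
\begin{equation*}
h_x\sum_{i} u_{i,j,k}^2 \leq (N_x+1)h_x \cdot h_x \sum_m \left(\Dxp u\right)_{m,j,k}^2 \leq 2\, h_x \sum_m \left(\Dxp u\right)_{m,j,k}^2 .
\end{equation*}
We then multiply by $h_y h_t$ and sum over $j,k$. This yields $\Normh{u}^2 \leq 2 \Normh{\Dxp u}^2 \leq 2\Normdh{u}^2$, so $C_p=\sqrt{2}$ works (or $\sqrt{2}L_x$ on a general interval).

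The only point needing care is the boundary convention in the definition of $\Dxp$. The paper's shift operators set values at $i=N_x$ to zero, so the term $(\Dxp u)_{N_x,j,k}$ is either excluded or defined consistently. In the telescoping sum we only use indices $m \leq N_x-1$, where $\Dxp$ is the genuine forward difference, so adding the extra $m=N_x$ term only enlarges the right-hand side and the inequality survives under either convention. The main obstacle is therefore only bookkeeping: making sure the constant is independent of $N_x, N_y, N_t$. The bound $(N_x+1)h_x \leq 2$ handles this, since $N_x \geq 1$.
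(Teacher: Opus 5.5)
Your proof is correct, and it takes a different route from the paper's.

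\textbf{The paper's route.} It uses a multiplier (summation-by-parts) argument that mirrors $\int u^2 = -\int x\,\partial_x(u^2)\,dx$. It takes $g_{ijk}=ih_x$ and writes $\Normh{u}^2=\Prodh{\Dxm g}{u^2}$. It then moves the difference onto $u^2$ with its discrete integration-by-parts lemma and expands $\Dxp(u^2)=(u+\Txp u)\Dxp u$ with the discrete product rule. Cauchy--Schwarz then gives $C_p=2$.

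\textbf{Your route.} You use the discrete fundamental theorem of calculus along each $x$-line, telescoping from $u_{0,j,k}=0$, followed by Cauchy--Schwarz. This needs none of the shift, product-rule or integration-by-parts machinery. It is insensitive to the convention for $\Dxp$ at $i=N_x$. It gives the slightly better constant $\sqrt{2}$ and makes the scaling $C_p\sim L_x$ explicit.

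\textbf{Comparison.} The paper's approach has the advantage of reusing discrete calculus it needs anyway for the stability proof. Its final Cauchy--Schwarz step, however, must use the pointwise bound $\|g\|_\infty\le 1$, which holds because $0\le ih_x\le 1$, rather than $\Normh{g}$. The three-factor estimate $|\Prodh{g}{ab}|\le\Normh{g}\Normh{a}\Normh{b}$ fails in general for the $h$-weighted product: for single-point support it is off by a factor $(h_xh_yh_t)^{1/2}$. The sign produced by the integration by parts is also dropped there, which is harmless once absolute values are taken. Your elementary argument avoids both issues.
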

\begin{proof}
    Let~$g \in \Dh$ be given by~$g_{ijk} = ih$.
    Then~$\left(\Dxm g\right)_{ijk} = 1$ unless~$i = N$.
    Therefore, by~\cref{lem:discrete-by-parts}
    \begin{equation}
        \Normh{u}^2 = \Prodh{\Dxm g}{u^2} = \Prodh{g}{\Dxp (u^2)}.
    \end{equation}
    Using~\cref{lem:discrete-product-rule} we get
    \begin{equation}
        \Dxp u^2 = (u + \Txp u) \Dxp u.
    \end{equation}
    Since~$\Normh{\Txp u} = \Normh{u}$,
    using Cauchy-Schwarz inequality we obtain
    \begin{equation}
        \Normh{u}^2 = \Prodh{g}{(u + \Txp u) \Dxp u} \leq 2 \Normh{g} \Normh{u} \Normh{\Dxp u}
    \end{equation}
    and thus~$\Normh{u} \leq 2 \Normh{g} \Normh{\Dxp} \leq 2 \Normdh{u}$,
    since~$\Normh{g} \leq \Normh{1} = 1$.
    \qed
\end{proof}
}

\revision{
\begin{lemma}
\label{lem:positive-product}
For any~$u \in \Dzh^0$ we have~$\Prodh{\Dtc u}{u} \geq 0$.
\end{lemma}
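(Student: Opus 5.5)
The plan is a direct computation in the time direction. Because $(\cdot,\cdot)_h$ is a sum over grid points and $\Dtc$ acts only on the index $k$, I fix the spatial indices $(i,j)$ and write
\begin{equation*}
\Prodh{\Dtc u}{u} = h_xh_yh_t \sum_{i,j} \sum_{k=0}^{N_t} (\Dtc u)_{ijk}\, u_{ijk}.
\end{equation*}
It then suffices to show that each one-dimensional time sum is nonnegative. I will in fact show it equals $u_{ijN_t}^2/(2h_t)$.

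First I fix the boundary conventions for the one-sided differences at the ends of the time interval. I take them consistent with the shift operators and with the proofs of \cref{lem:shift-flips-gradient-polarity} and \cref{lem:discrete-product-rule}. This means $(\Dtm u)_{ij0}=0$. For $\Dtp$ at $k=N_t$ I use the convention forced by the shift, $(\Tp{t}u)_{ijN_t}=0$, giving $(\Dtp u)_{ijN_t} = -u_{ijN_t}/h_t$. The alternative convention, in which both sides are set to zero as in \cref{lem:discrete-product-rule}, is treated at the end. With these conventions, suppressing $i,j$ and writing $N=N_t$, $h=h_t$, I expand $2h\sum_k (\Dtc u)_k u_k$ into three groups of terms.
\begin{itemize}
\item The interior indices $1\le k\le N-1$ contribute $\sum_{k=1}^{N-1}(u_{k+1}-u_{k-1})u_k$. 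This is the telescoping sum $\sum_{k=1}^{N-1}\left(u_{k+1}u_k - u_k u_{k-1}\right) = u_Nu_{N-1} - u_1u_0$.
\item The endpoint $k=0$ contributes $(u_1-u_0)u_0$, which vanishes because $u\in\Dzh^0$ gives $u_0=0$. The same fact kills the term $u_1u_0$ above.
\item The endpoint $k=N$ contributes $(u_N - u_{N-1})u_N$ plus possibly a convention-dependent term.
\end{itemize}
Adding the groups gives $u_Nu_{N-1} + u_N^2 - u_{N-1}u_N = u_N^2 \ge 0$. Hence
\begin{equation*}
\Prodh{\Dtc u}{u} = \tfrac12 h_xh_y\sum_{i,j} u_{ijN_t}^2 \ge 0.
\end{equation*}
This is the discrete analogue of $\int_0^T \partial_t u\,u = \tfrac12\|u(T)\|^2 - \tfrac12\|u(0)\|^2$. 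The initial term drops out precisely because $u$ vanishes at $t=0$.

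I could instead phrase the same computation via the adjointness in \cref{lem:shifts-are-adjoin}. Writing $2h\Dtc = \Tp{t} - \Tm{t}$ in the interior, the operator is skew-adjoint up to boundary rows, so $\Prodh{\Dtc u}{u}$ reduces to boundary contributions at $k=0$ and $k=N_t$. Either way the core is the telescoping identity.

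The main obstacle is bookkeeping the boundary rows $k=0$ and $k=N_t$ correctly, since the paper's definitions of $\Dp$ and $\Dm$ at the last and first indices are somewhat implicit. The $k=0$ row is harmless whichever convention is used, because $u_0=0$. If instead $\Dtp$ is set to zero at $k=N_t$, the last row contributes $(u_N-u_{N-1})u_N/(2h)$. The total becomes $\tfrac12 u_N^2 + \tfrac12 u_Nu_{N-1} + \tfrac12(u_N^2 - u_Nu_{N-1})$, which again equals $u_N^2$. I would verify this carefully, because sign-definiteness depends on it. If some other convention produced a cross term like $-u_Nu_{N-1}$ left uncancelled, I would absorb it using $ab\le \tfrac12(a^2+b^2)$ together with the interior telescoping. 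I do not expect that to be needed.
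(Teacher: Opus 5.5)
Your argument is the same discrete summation by parts in time as the paper's. The paper gets $2\Prodh{\Dtc u}{u}=\Prodh{\Dtp(u^2)}{1}$ from Lemmas~\ref{lem:shifts-are-adjoin}--\ref{lem:discrete-product-rule} and then telescopes; you telescope $\sum_k(u_{k+1}-u_{k-1})u_k$ by hand. Both arrive at $\Prodh{\Dtc u}{u}=\tfrac12 h_xh_y\sum_{\bm{x}}\bigl[u(\bm{x},T)^2-u(\bm{x},0)^2\bigr]$. The conclusion is right, but the endpoint bookkeeping you flagged as the main obstacle contains errors.

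The $k=N_t$ row you add up, $(u_N-u_{N-1})u_N$, is the row for the convention $(\Dtp u)_{ijN_t}=0$. That is the paper's convention: the proof of \cref{lem:discrete-product-rule} says both sides vanish at $i=N$, and the last display of the paper's proof evaluates $\Prodh{\Dtp(u^2)}{1}$ the same way. With that convention your identity $\Prodh{\Dtc u}{u}=\tfrac12h_xh_y\sum_{i,j}u_{ijN_t}^2$ is exactly the paper's.

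You instead declare $(\Dtp u)_{ijN_t}=-u_{ijN_t}/h_t$. Under that convention the row also contains a $-u_N^2$ term, which you call convention-dependent and then drop. The row is really $-u_{N-1}u_N$, which cancels the telescoped interior term, so $\Prodh{\Dtc u}{u}=0$. You can also see this directly: with your convention, $2h_t\Dtc$ coincides with $\Ttp-\Ttm$ except in the $k=0$ row, where $u$ vanishes, and $\Ttp-\Ttm$ is skew by \cref{lem:shifts-are-adjoin}. So nonnegativity holds either way, but your closed form fails under your own convention.

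The tally $\tfrac12u_N^2+\tfrac12u_Nu_{N-1}+\tfrac12(u_N^2-u_Nu_{N-1})$ in your last paragraph also contains a spurious $\tfrac12u_N^2$. In your $2h_t$-normalization the correct sum is $u_Nu_{N-1}+(u_N-u_{N-1})u_N=u_N^2$.

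The paper's operator route avoids this kind of slip. The end conventions are absorbed into the three lemmas, whose identities hold under either convention. They surface only once, when $\Prodh{\Dtp(u^2)}{1}$ is telescoped.
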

\begin{proof}
    By~\cref{lem:discrete-product-rule}, we have
    \begin{equation}
        \Dtp(u^2) = \left(\Dtp u\right) (u + \Ttp u)
    \end{equation}
    so
    \begin{equation}
        \Prodh{\Dtp (u^2)}{1} = \Prodh{\Dtp u}{u} + \Prodh{\Dtp u}{ \Ttp u}
    \end{equation}
    Using~\cref{lem:shifts-are-adjoin} and~\cref{lem:shift-flips-gradient-polarity} we get
    \begin{equation}
        \Prodh{\Dtp u}{\Ttp u} = \Prodh{\Ttm \Dtp u}{u} = \Prodh{\Dtm u}{u}
    \end{equation}
    so that
    \begin{equation}
    \begin{aligned}
        \Prodh{\Dtp (u^2)}{1} &= \Prodh{\Dtp u}{u} + \Prodh{\Dtm u}{u} = 2 \Prodh{\Dtc u}{u}
    \end{aligned}
    \end{equation}
    But
    \begin{equation}
        \Prodh{\Dtp (u^2)}{1} = h_x h_y \sum_{\bm{x} \in \Omega_h}\left[u^2(\bm{x}, T)^2 - u^2(\bm{x}, 0)\right] \geq 0
    \end{equation}
    since~$u(\bm{x}, 0) = 0$.
\end{proof}
}

\revision{
\textbf{Functional Spaces and Problem Setting.}
Let us define the discrete trial and test spaces as:
\begin{equation}
    \mathbb{U} = \Dzh^0,\quad \mathbb{V} = \Dzh
\end{equation}
with norms
\begin{equation}
\begin{aligned}
    \NormV{v}^2 &= \Normh{\Dp[\bm{x}] v}^2 \coloneqq \Normh{\Dxp v}^2 + \Normh{\Dyp v}^2 \\
    \NormU{u}^2 &= \Norm{\mathbb{V}'}{\Dtc u}^2 + \Normh{\Dp[\bm{x}] u}^2
\end{aligned}
\end{equation}
The variational problem is to find~$u^\star \in \mathbb{U}$ such that~$b(u^\star, v) = l(v)$ for all~$v \in \mathbb{V}$.
}

\textbf{Riesz Representation of the Residual.} Let $u_\theta \in \mathbb{U}$ be an approximate solution. The weak residual functional $\mathcal{R}(u_\theta) \in \mathbb{V}'$ is defined by $\langle \mathcal{R}(u_\theta), v \rangle = b(u_\theta, v) - l(v)$. By the Riesz Representation Theorem, there exists a unique $r_\theta \in \mathbb{V}$ such that $(r_\theta, v)_{\mathbb{V}} = b(u_\theta, v) - l(v)$ for all $v \in \mathbb{V}$. Since $b(u^*, v) = l(v)$, this implies the fundamental error identity:
\begin{equation}
    (r_\theta, v)_{\mathbb{V}} = b(u_\theta - u^*, v) = b(e, v), \quad \forall v \in \mathbb{V}, \label{eq:identity}
\end{equation}
where $e = u_\theta - u^*$ is the approximation error.

\textbf{Boundedness and Stability.} To relate $\|r_\theta\|_{\mathbb{V}}$ to $\|e\|_{\mathbb{U}}$, we prove that $B$ is continuous and coercive (inf-sup stable).
\revision{To prove continuity, let us bound each term individually:}
\revision{
\begin{equation}
    b(u, v) = \underbrace{\Prodh{\Dtc u}{v}}_{I_1} +
    \underbrace{\Prodh{\bm{\beta} \cdot \Dc[\bm{x}] u}{v}}_{I_2} +
    \underbrace{\epsilon \Prodh{\Dp[\bm{x}] u}{\Dp[\bm{x}] v}}_{I_3}
\end{equation}
Using Cauchy-Schwartz and~\cref{lem:poincare-inequality}, we obtain
\begin{equation}
    \left| I_1 \right| \leq \Normh{\Dtc u} \Normh{v} \leq C_p\NormU{u}\NormV{v}
\end{equation}
since~$\Normh{v} \leq C_p\NormV{v}$.
For~$I_2$, we have
\begin{equation}
    \left| I_2 \right| \leq \Norm{\infty}{\bm{\beta}} \Normh{\Dc[\bm{x}]u} \Normh{v}
    \leq C_p\Norm{\infty}{\bm{\beta}} \NormU{u} \NormV{v}
\end{equation}
since
\begin{equation}
    2 \Normh{\Dc[\bm{x}]u} = \Normh{\Dp[\bm{x}] u + \Dm[\bm{x}] u} \leq
    \Normh{\Dp[\bm{x}] u} + \Normh{\Dm[\bm{x}] u} = 2 \Normh{\Dp[\bm{x}] u}
\end{equation}
Finally,
\begin{equation}
    \left| I_3 \right| \leq \epsilon \Normh{\Dp[\bm{x}] u} \Normh{\Dp[\bm{x}] v}
    \leq \epsilon \NormU{u} \NormV{v}
\end{equation}
Putting all the estimates together, we are left with
\begin{equation}
b(u, v) \leq (C_p + C_p\Norm{\infty}{\bm{\beta}} + \epsilon)\NormU{u}\NormV{v}
\end{equation}
which proves continuity of~$b$ with a constant~$M = C_P(1 + C_b) + \epsilon$.
}

\revision{
To prove inf-sup stability, we need to show that there exists~$\gamma > 0$ such that
$\sup_{v \in \mathbb{V}} \frac{b(u, v)}{\NormV{v}} \geq \gamma \NormU{u}$
for all~$u \in \mathbb{U}$.
This is equivalent to showing the upper bounds~$\Normh{\Dp[\bm{x}] u}, \Normh{\Dtc u} \lesssim \sup_{v \in \mathbb{V}} \frac{b(u, v)}{\NormV{v}}$.
For the first bound, we can take~$v = u$ to get
\begin{equation}
    b(u, u) = \Prodh{\Dtc u}{u} + \Prodh{\bm{\beta} \cdot \Dc[\bm{x}] u}{u} + \epsilon \Normh{\Dp[\bm{x}] u} ^2
\end{equation}
The first term is non-negative by~\cref{lem:positive-product}.
The second one vanishes, since
\begin{equation}
\begin{aligned}
    \Prodh{\beta_x \Dxp u}{u} &= \Prodh{\Dxp u}{\beta_x u} \\&=
    -\Prodh{u}{\Dxm \left(\beta_x u\right)} \\&=
    -\Prodh{u}{\left(\Dxm u\right)\,\beta_x + \Txm u\left(\Dxm \beta_x\right)} \\&=
    -\Prodh{\beta_x\Dxm u}{u},
\end{aligned}
\end{equation}
so
\begin{equation}
    \Prodh{\beta_x \Dxc u}{u} = 
    \frac{1}{2}
    \left[
        \Prodh{\beta_x \Dxp u}{u} + \Prodh{\beta_x\Dxm u}{u}
    \right] = 0.
\end{equation}
and the same applies to~$\Prodh{\beta_y \Dyc u}{u} = 0$.
This leaves us with
\begin{equation}
    b(u, u) \geq \epsilon \Normh{\Dp[\bm{x}] u}^2
\end{equation}
Since~$\NormV{u} = \Normh{\Dp[\bm{x}] u}$, we have~$\sup_{v \in \mathbb{V}} \frac{b(u, v)}{\NormV{v}} \geq \frac{b(u, u)}{\NormV{u}} \geq \epsilon \Normh{\Dp[\bm{x}] u}$.
}

\revision{
For the second bound,
\begin{equation}
\begin{aligned}
    \Norm{\mathbb{V}'}{\Dtc u} &= \sup_{v \in \mathbb{V}} \frac{\Prodh{\Dtc u}{v}}{\NormV{v}}
    = \sup_{v \in \mathbb{V}} \frac{b(u, v) - \Prodh{\bm{\beta} \cdot \Dc[\bm{x}] u}{v} +
    \epsilon \Prodh{\Dp[\bm{x}] u}{\Dp[\bm{x}] v}}{\NormV{v}} \\
    &\leq \sup_{v \in \mathbb{V}} \frac{b(u, v)}{\NormV{v}} + C_p(1 + \Norm{\infty}{\bm{\beta}}) \Normh{\Dp[\bm{x}] u}
\end{aligned}
\end{equation}
and since~$\Normh{\Dp[\bm{x}] u} \lesssim \sup_{v \in \mathbb{V}} \frac{b(u, v)}{\NormV{v}}$, this proves that~$\Norm{\mathbb{V}'}{\Dtc u} \lesssim \sup_{v \in \mathbb{V}} \frac{b(u, v)}{\NormV{v}}$.
Combining these yields the stability condition $\|u\|_{\mathbb{U}} \le \frac{1}{\gamma} \sup_{v} \frac{b(u,v)}{\|v\|_{\mathbb{V}}}$.
}
Then, we get
\begin{equation}
    \|r_\theta\|_{\mathbb{V}}^2 = (r_\theta, r_\theta)_{\mathbb{V}} = b(e, r_\theta) \leq M \|e\|_{\mathbb{U}} \|r_\theta\|_{\mathbb{V}} \implies \|r_\theta\|_{\mathbb{V}} \leq M \|e\|_{\mathbb{U}}.
\end{equation}
For the \textit{Lower Bound}, we use the Inf-Sup stability of $b$. There exists a $v \in \mathbb{V}$ such that $\gamma \|e\|_{\mathbb{U}} \|v\|_{\mathbb{V}} \leq b(e, v)$. Substituting $b(e,v) = (r_\theta, v)_{\mathbb{V}}$ and applying Cauchy-Schwarz:
\begin{equation}
    \gamma \|e\|_{\mathbb{U}} \|v\|_{\mathbb{V}} \leq (r_\theta, v)_{\mathbb{V}} \leq \|r_\theta\|_{\mathbb{V}} \|v\|_{\mathbb{V}} \implies \|e\|_{\mathbb{U}} \leq \frac{1}{\gamma} \|r_\theta\|_{\mathbb{V}}.
\end{equation}

In conclusion, the derived bounds $\frac{1}{M} \|r_\theta\|_{\mathbb{V}} \leq \|u_\theta - u^*\|_{\mathbb{U}} \leq \frac{1}{\gamma} \|r_\theta\|_{\mathbb{V}}$ confirm that the Riesz residual norm is an equivalent metric to the true error.
The norm of the residual is given by
\begin{equation}
    \|r_M(u_\theta)\|_{\mathbb{V}}^2 = \mathbf{r}(u_\theta)^T \mathbf{G}^{-1} \mathbf{r}(u_\theta) =: \|\mathbf{r}(u_\theta)\|_{\mathbf{G}^{-1}}^2,
\end{equation}
where:
\begin{itemize}
    \item $\mathbf{r}(u_\theta) = [B(u_\theta, v_m) - l(v_m)]_{m=1}^M$ is the vector of weak residuals.
    \item $\mathbf{G} = [(v_m, v_n)_{\mathbb{V}}]_{mn}$ is the space-time Gram matrix with entries defined by the test space inner product:
    \begin{equation}
        \mathbf{G}_{mn} = \Prodh{\Dp[\bm{x}]v_n}{\Dp[\bm{x}] v_m}
    \end{equation}
\end{itemize}

As a result, we end up with a discretized residual minimization scheme given by
\begin{equation}
    \min_{u_\theta \in \mathbb{U}} \|\mathbf{r}(u_\theta)\|_{\mathbf{G}^{-1}}^2.
\end{equation}

\section{\revision{Numerical verification of the method}}

\subsection{\revision{Model problem with manufactured solution}}

\revision{To verify the accuracy of the proposed numerical method, we consider a transient advection--diffusion equation for which an analytical solution is prescribed using the method of manufactured solutions. The governing equation is defined on the unit square
$
\Omega=(0,1)^2, t\in(0,T],
$
and reads
\begin{equation}
\frac{\partial u}{\partial t}
+
\frac{\partial u}{\partial x}
+
\frac{\partial u}{\partial y}
-
{\epsilon}\Delta u
=
f(x,y,t).
\label{eq:model_problem}
\end{equation}
The problem is supplemented with homogeneous Dirichlet boundary conditions,
\begin{equation}
u(x,y,t)=0,
\;
(x,y)\in\partial\Omega,\quad t>0,
\label{eq:dirichlet_bc}
\end{equation}
and the initial condition
\begin{equation}
u(x,y,0)
=
\phi(x)\sin(\pi y).
\label{eq:initial_condition}
\end{equation}}

\revision{The exact solution is prescribed as
\begin{equation}
u_{\mathrm{exact}}(x,y,t)
=\phi(x) \sin(\pi y) e^{-t}
\label{eq:manufactured_solution2}
\end{equation}
where 
\begin{equation}
\phi(x)=\frac{x(1-e^{\frac{1}{\epsilon}})+e^{\frac{x}{\epsilon}}-1}{e^{\frac{x}{\epsilon}}-1}
\label{eq:phi}
\end{equation}
The exact solution satisfies both the initial condition and the homogeneous zero Dirichlet boundary conditions.
Substituting \eqref{eq:manufactured_solution2} into \eqref{eq:model_problem} yields the forcing term
\begin{equation}
f(x,y,t)
=
e^{- t}\Big[
\sin(\pi y)\left(1+(\epsilon (\pi^2-1) \phi(x)\right)+\pi \cos(\pi y) \phi(x)
\Big].
\label{eq:forcing_term}
\end{equation}
This benchmark is used throughout the numerical experiments to assess the convergence and accuracy of the proposed CRVPINN formulation, since the exact solution is available at each collocation point in space and time.}

\revision{We solve the model problem with manufactured solution on Google Colab A100 GPU.}

\revision{Both PINN and CRVPINN converge. However, there is a discrepancy of one order of magnitude between the true error and the PINN loss, as it is illustrated in Figure \ref{fig:conv_pinn}. Introducing the CRVPINN loss makes it related with the true error, as it is presented in Figure \ref{fig:conv_crvpinn}.}

\revision{We employed three fully connected layers with 64 neurons each, 20000 epochs of training, learning rate $\eta=0.001$. We executed the code on A100 Google compute engine with 83 GB of RAM. The training times for CRVPINN method is the following: 232s for $12\times 12 \times 12$ collocation points, 236 s for $25\times 25\times 25$ points, 378s for $50\times 50\times 50$, and 2446s for $100\times 100\times 100$ collocation points. The training time for PINN method is similar (since the Gram matrix is inverted only once at the beginning). Namely, 228s for $12\times 12 \times 12$ collocation points, 229 s for $25\times 25\times 25$ points, 366s for $50\times 50\times 50$, and 2431s for $100\times 100\times 100$ collocation points. }

\begin{figure}
\includegraphics[width=\textwidth]{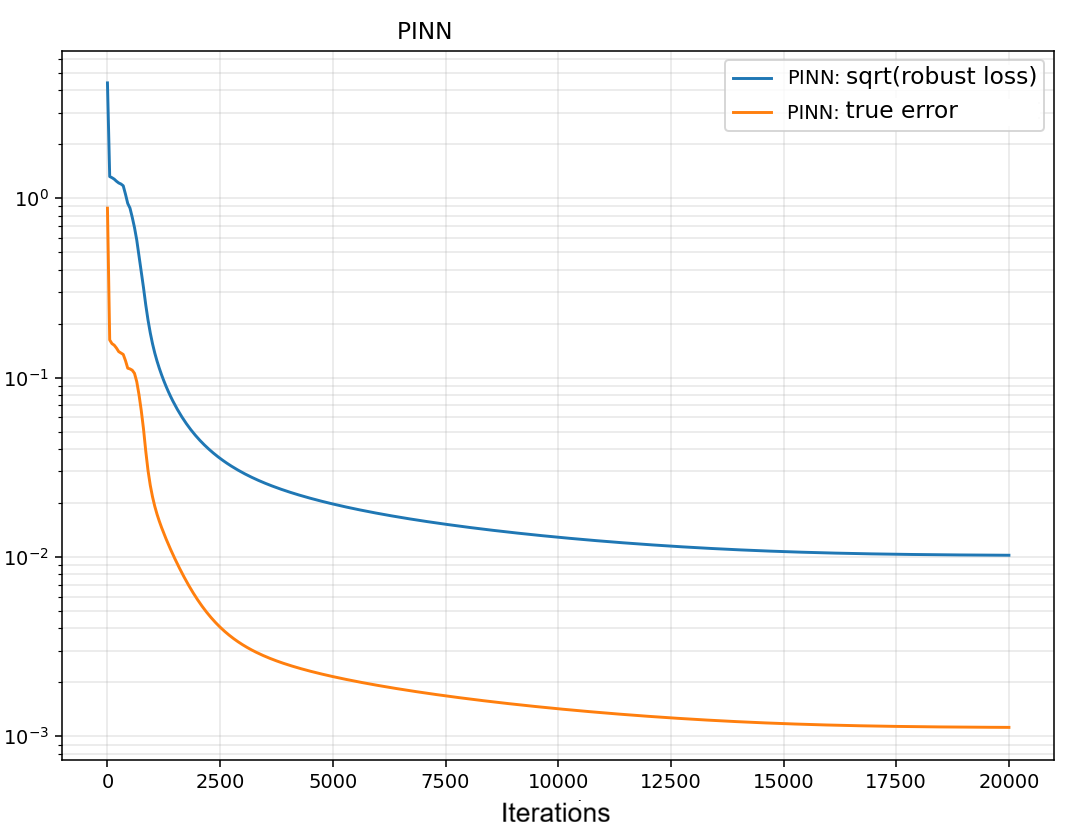}
\caption{Convergence of the PINN training and comparison with the true error.}
\label{fig:conv_pinn}
\end{figure}

\begin{figure}
\includegraphics[width=\textwidth]{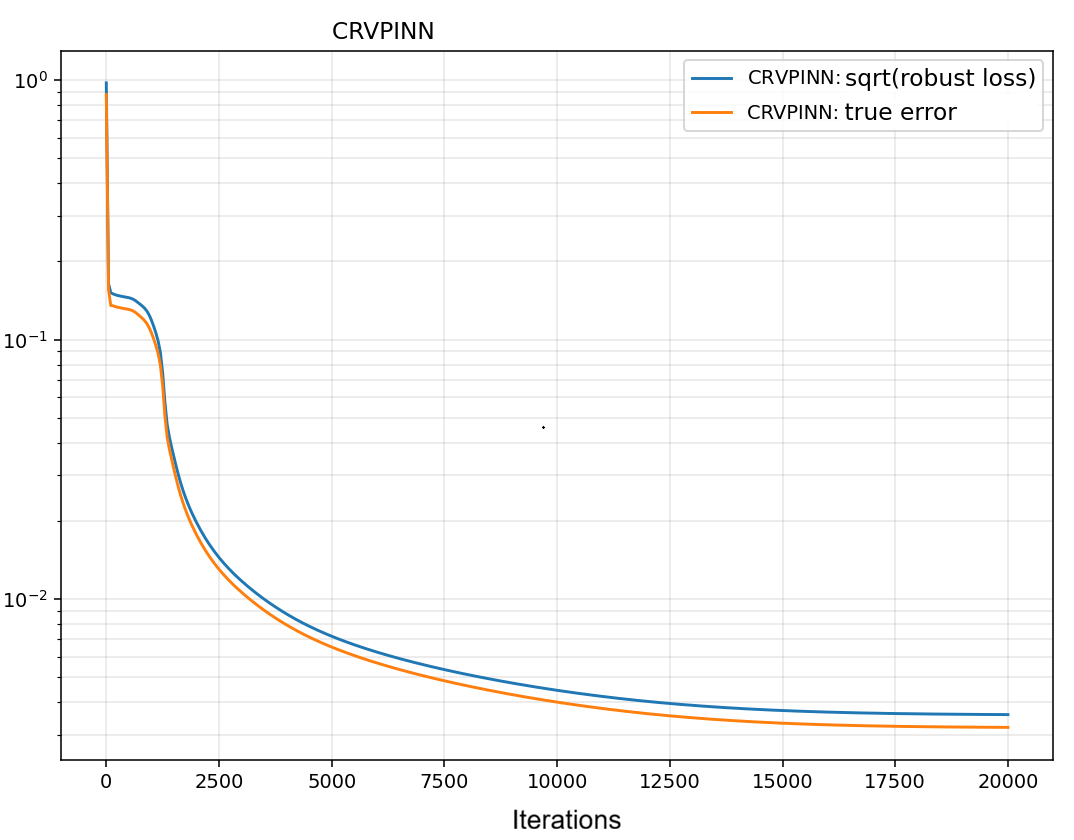}
\caption{Convergence of the CRVPINN training and comparison with the true error.}
\label{fig:conv_crvpinn}
\end{figure}

\revision{We also compare the convergence of the method for increasing number of the collocation points. We run the CRVPINN training for 12, 25, 50, and 100 collocation points in each axis, and we plot the convergence of the true error. This is illustrated in Figure \ref{fig:convergence}. We also present in Figure \ref{fig:ratios} how the loss function converges when we increase the number of collocation points. We can read from these figures that increasing the number of the collocation points results in an increase of the accuracy and better agreement between the loss function and the true error.}

\begin{figure}
\includegraphics[width=\textwidth]{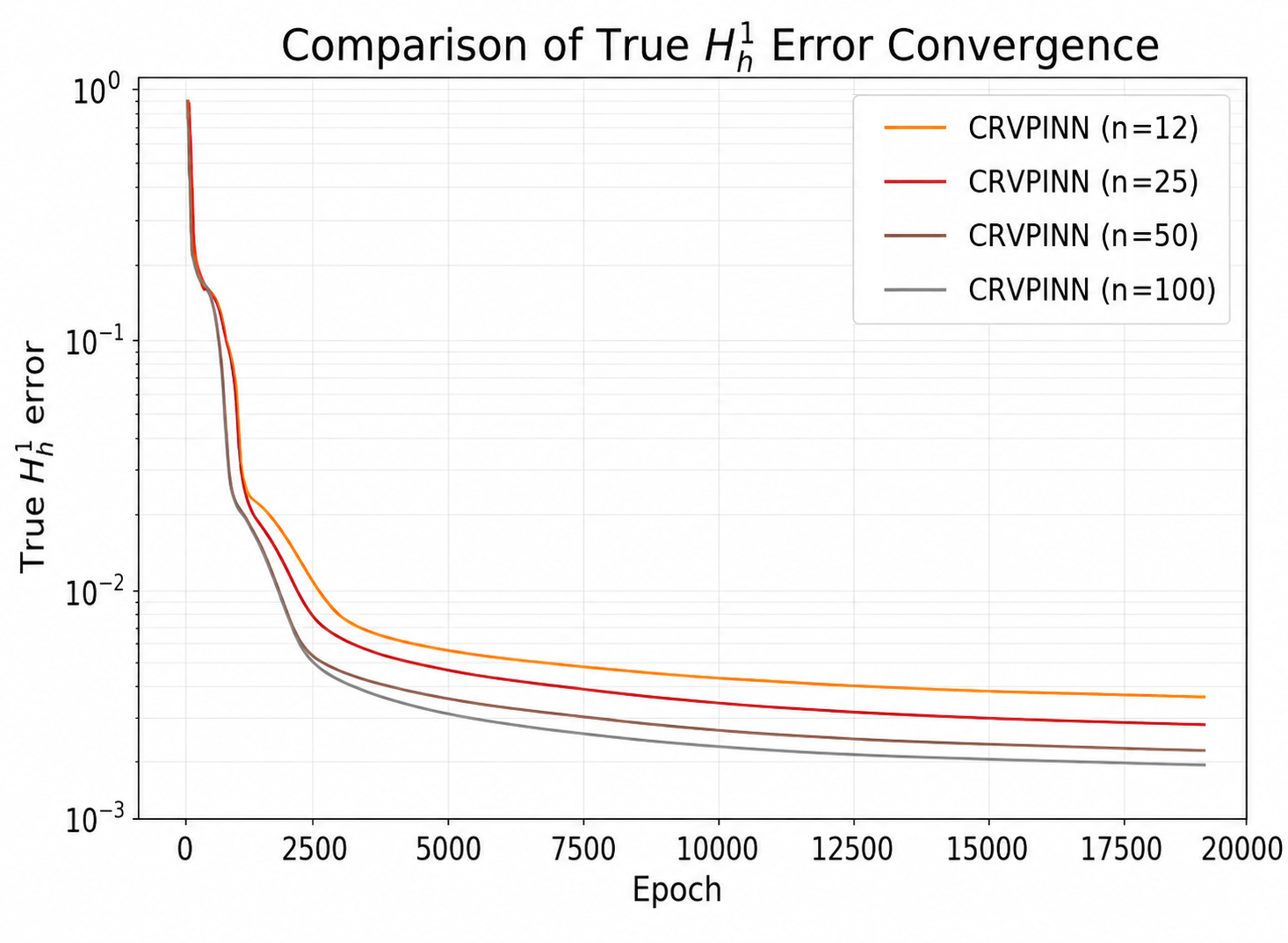}
\caption{\revision{Convergence of the true error for CRVPINN training when increasing the number of the collocation points, from $12\times 12\times 12$, $25\times 25\times 25$, $50\times 50\times 50$, and $100\times 100\times 100$.}}
\label{fig:convergence}
\end{figure}

\begin{figure}
\includegraphics[width=\textwidth]{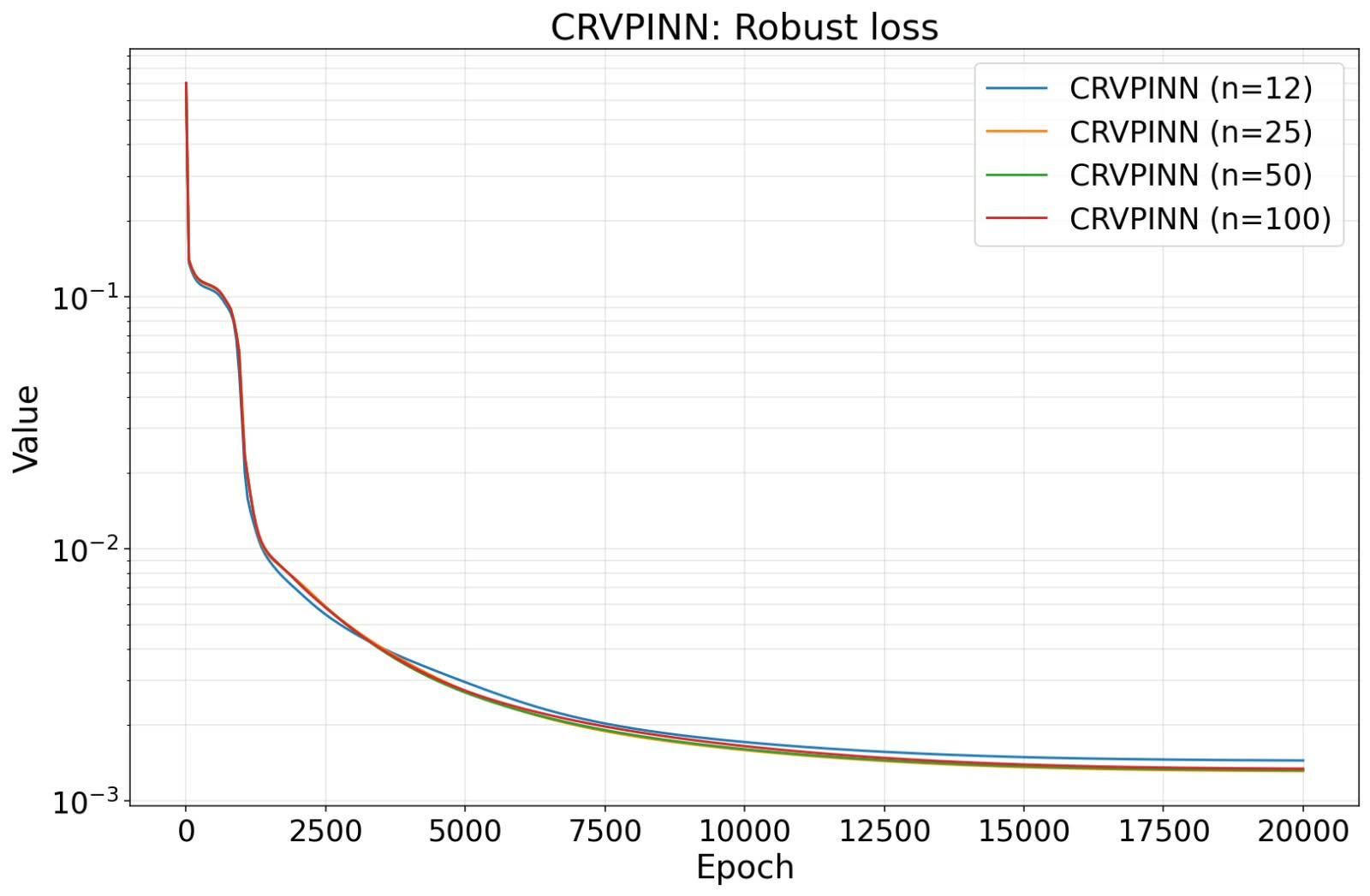}
\caption{\revision{Convergence of the robust loss for CRVPINN training when increasing the number of the collocation points, from $12\times 12\times 12$, $25\times 25\times 25$, $50\times 50\times 50$, and $100\times 100\times 100$.}}
\label{fig:ratios}
\end{figure}

\subsection{\revision{Comparison with IGA-ADS simulation}}

\revision{We will compare our CRPINN solver to the  isogeometric analysis (IGA) alternating direction (ADS) solver we developed in \cite{10.1007/978-3-031-35995-8_13,LOS2024102238}.
The reason for selection of the IGA-ADS solver is that it delivers a linear computational cost solver and smooth, higher-order and continuity approximation of the solution, similarly to the approximation with neural networks.}

\revision{The IGA-ADS employ a sequence of isogeometric $L^2$ projections for the advection-diffusion problem with diffusion $\epsilon=1$ and advection ${\bf b}=(1,1)^T$. We denote $u_t(x,y)=u(x,y,t)$ and employ an explicit time integration scheme with time step size $dt$, 
\begin{eqnarray}
u_{t+1}(x,y) = u_t(x,y)+ dt \left(\frac{\partial u_t(x,y)}{\partial x} + \frac{\partial u_t(x,y)}{\partial y} - \frac{\partial^2  u_t(x,y)}{\partial x^2}- \frac{\partial^2  u_t(x,y)}{\partial y^2} +  f(x,y,t)\right)
\end{eqnarray}
We obtain the weak formulation by testing with B-spline basis functions
\begin{equation}\begin{aligned}
\left(u_{t+1}(x,y),v(x,y)\right) =  \left(u_t(x,y),v(x,y)\right)+  dt \left(\frac{\partial u_t(x,y)}{\partial x}+\frac{\partial u_t(x,y)}{\partial y},v(x,y)\right) \\ + dt\left(\frac{\partial^2  u_t(x,y)}{\partial x^2} + \frac{\partial^2  u_t(x,y)}{\partial y^2},v(x,y)\right)+\left(f(x,y),v(x,y)\right)
\end{aligned}
\end{equation}
We discretize with B-spline basis functions
\begin{eqnarray}
u^{t+1}=\sum_{i=1,...,N_x;j=1,...,N_y} u^{t+1}_{ij} B^x_i B^y_j \inblue{,} \notag \\
u^{t}=\sum_{i=1,...,N_x;j=1,...,N_y} u^{t}_{ij} B^x_i B^y_j \inblue{,}
\end{eqnarray}
and we test with B-spline basis functions
\begin{eqnarray}
\begin{aligned}
\sum_{ij}u^{t+1}_{ij}\left(B^x_iB^y_j,B^x_mB^y_n\right) = & \sum_{ij}u^t_{ij}\left(B^x_iB^y_j,B^x_mB^y_n\right) + \\
& dt \sum_{ij}u^t_{ij}\left(B^y_j\frac{\partial B^x_i}{\partial x},B^x_mB^y_n\right)+
& dt \sum_{ij}u^t_{ij}\left(B^x_i\frac{\partial B^y_j}{\partial y},B^x_mB^y_n\right)+
\\ &  dt \sum_{ij}u^t_{ij}\left(  \frac{\partial B^x_i}{\partial x}B^y_j,\frac{\partial B^x_m}{\partial x}B^y_n \right)+   \\ 
& dt \sum_{ij}u^t_{ij}\left(   B^x_i\frac{\partial B^y_j}{\partial y},B^x_m\frac{\partial B^y_n}{\partial y}\right)  \\
&\left(f(x,y,t), B^x_m B^y_n\right)\\ 
&\quad m=1,...,N_x;n=1,...,N_y\inblue{,}
\label{eq:system}
\end{aligned}
\end{eqnarray}
where $(u,v)=\int_{\Omega} u(x,y)v(x,y)dxdy$. 
We separate directions
\begin{eqnarray}
\begin{aligned}
\sum_{ij}u^{t+1}_{ij}\left(B^x_i,B^x_m\right)_x\left(B^y_j,B^y_n\right)_y = & \sum_{ij}u^t_{ij}\left(B^x_i,B^x_m\right)_x\left(B^y_j,B^y_n\right)_y + \\
& dt \sum_{ij}u^t_{ij}\left( \frac{\partial B^x_i}{\partial y} ,B^x_m\right)_x\left( B^y_j,B^y_n\right)_y +\\
& dt \sum_{ij}u^t_{ij}\left( B^x_i,B^x_m\right)_x \left( \frac{\partial B^y_j}{\partial y} ,B^y_n\right)_y+
\\ & dt \sum_{ij}u^t_{ij}\left(  \frac{\partial B^x_i}{\partial x},\frac{\partial B^x_m}{\partial x} \right)_x\left(  B^y_j,B^y_n \right)_y+   \\ 
& dt \sum_{ij}u^t_{ij}\left(   B^x_i,B^x_m\right)_x
\left(   \frac{\partial B^y_j}{\partial y},\frac{\partial B^y_n}{\partial y}\right)_y\\ 
&\left(f(x,y,t), B^x_m B^y_n\right)\\ 
&\quad m=1,...,N_x;n=1,...,N_y\inblue{,}
\label{eq:system}
\end{aligned}
\end{eqnarray}
We introduce 
\begin{eqnarray}
{\bf M}_x=\{ (B^x_i,B^x_m)_x \}_{im}=\{\int B^x_iB^x_l dx \}_{im}, \nonumber \\
{\bf M}_y=\{ (B^y_j,B^y_n)_y \}_{jn}=\{\int B^y_jB^y_n dy \}_{jn}, \nonumber \\  
{\bf A}_x=\{ (\frac{\partial B^x_i}{\partial x},B^x_m)_x \}_{im}=\{\int \frac{\partial B^x_i}{\partial x}B^x_m dx \}_{im}, \\
{\bf A}_y=\{ (B^y_j,\frac{\partial B^y_n}{\partial y})_y \}_{jn}=\{\int B^y_j \frac{\partial B^y_n}{\partial y}dy \}_{jn} \nonumber \\
{\bf S}_x=\{ (\frac{\partial B^x_i}{\partial x},\frac{\partial B^x_m}{\partial x})_x \}_{im}=\{\int \frac{\partial B^x_i}{\partial x}\frac{\partial B^x_m}{\partial x} dx \}_{im} \nonumber \\
{\bf S}_y=\{ (\frac{\partial B^y_j}{\partial y},\frac{\partial B^y_n}{\partial y})_y \}_{jn}=\{\int \frac{\partial B^y_j}{\partial y} \frac{\partial B^y_n}{\partial y}dy \}_{jn} \notag\end{eqnarray}
We rewrite our formulation using the Kronecker product property of matrices $\mathcal{M} = \mathcal{A}_x \otimes \mathcal{B}_y $ where $\mathcal{M}_{ijmn} = {\mathcal{A}_x}_{im}  {\mathcal{B}_y}_{jn}$ over over a 2D domain $\Omega = \Omega_x \times \Omega_y $
\begin{eqnarray}
\begin{aligned}
{\bf M}_x \otimes {\bf M}_y {\bf u}^{t+1} = & {\bf M}_x\otimes {\bf M}_y {\bf u}^t - 
dt {\bf A}_x \otimes {\bf M}_y {\bf u}^t -
dt {\bf M}_x \otimes {\bf A}_y {\bf u}^t +  
 \\
&  dt  {\bf S}_x\otimes {\bf M}_y {\bf u}^t 
+ dt {\bf M}_x \otimes {\bf S}_y {\bf u}^t  \end{aligned}
\label{eq:explicit}
\end{eqnarray}
We run the IGA-ADS solver for the problem (\ref{eq:explicit}) in a loop, with the time step size limited by the CFL condition. The solver employs the fact that the inverse of the Kronecker product matrix is a Kronecker product of inverses of the component matrices, resulting in a linear computational cost of generation and solution of a single time step.
}

\revision{
We use the computational mesh with $32 \times 32$ finite elements with quadratic B-splines of $C^1$ continuity. 
We test our method on a benchmark advection-diffusion problem with manufactured solution~$u(x, y, t) = e^{-t}\sin(\pi x) \sin(\pi y)$
with diffusion $\epsilon = 1$, advection vector $\mathbf{b} = (1, 1)^T$, and the forcing~$f$ chosen
to match the desired exact solution.
The explicit method requires 100,000 time iterations to simulate the time interval $[0,1]$ in stable way. The difference between the exact and numerical solution computed as
$\| u_{exact}-u_h \|_{\mathbb{V}}$ is 0.0012, similar to the accuracy of CRVPINN.
The total computing time is around 1 minute on 11th Gen Intel Core i7 laptop with 32 GB of RAM.}

\begin{figure}
    \centering
    \includegraphics[width=0.3\linewidth]{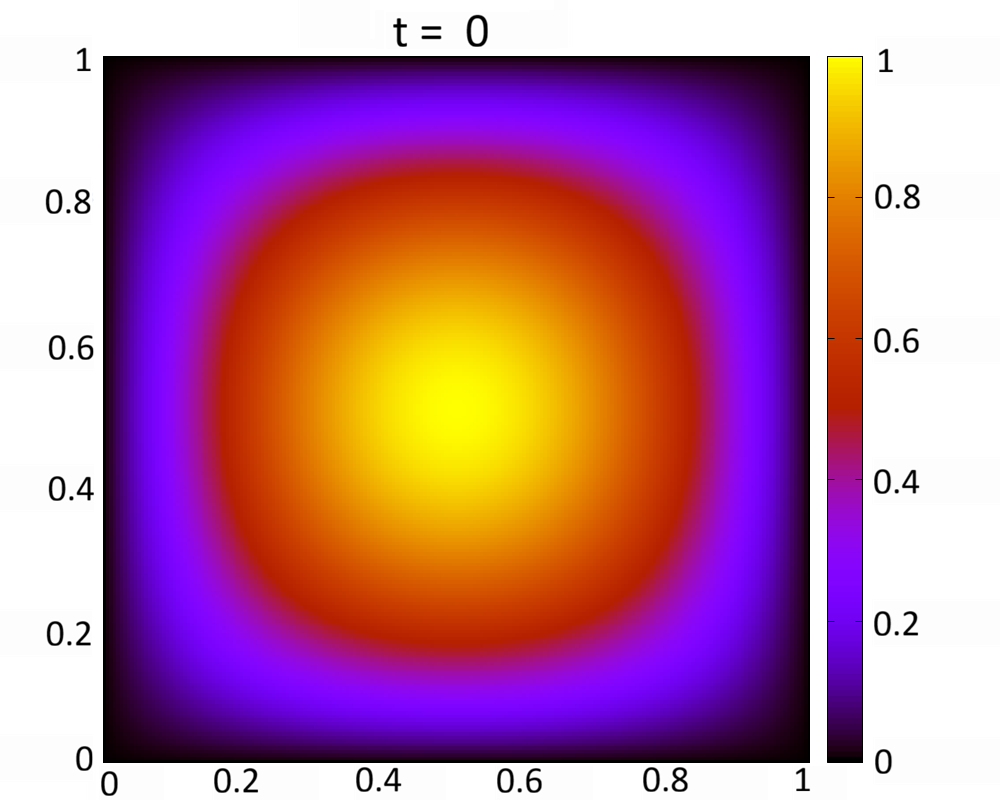}    \includegraphics[width=0.3\linewidth]{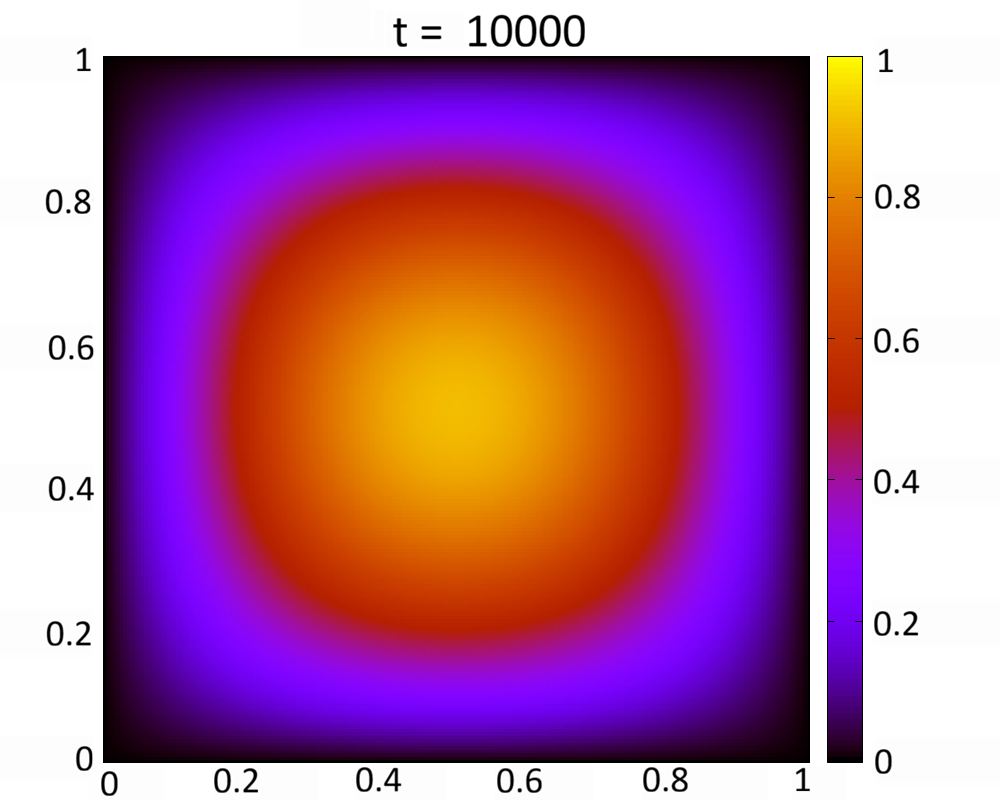}     \includegraphics[width=0.3\linewidth]{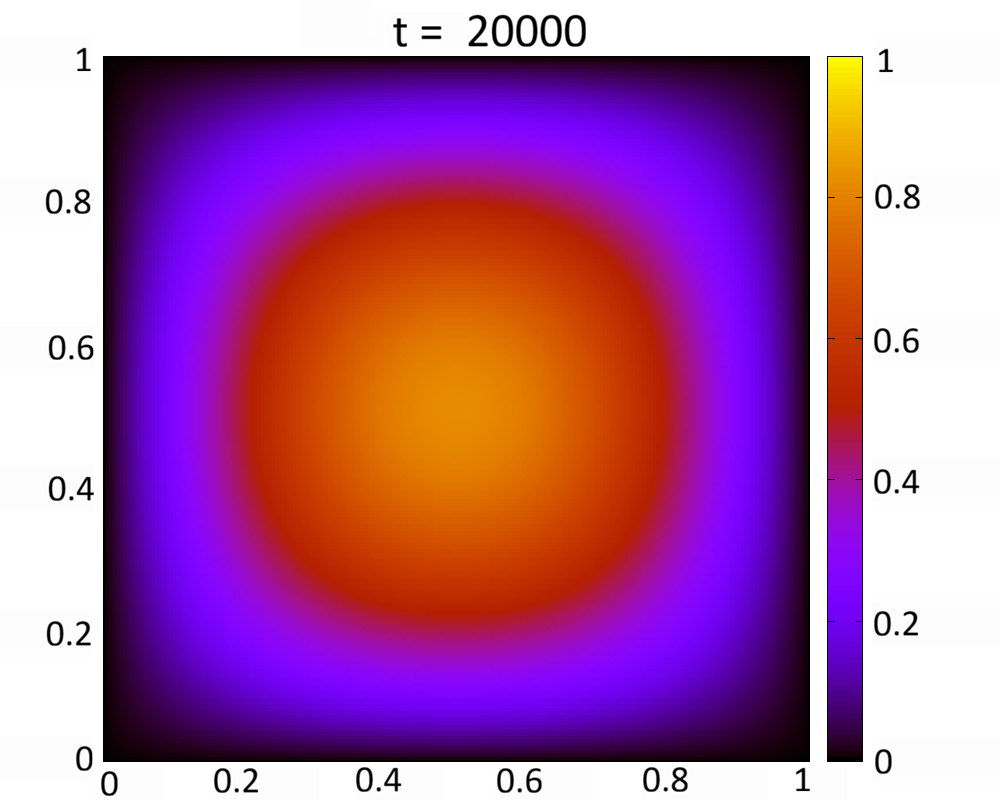} \\
    \includegraphics[width=0.3\linewidth]{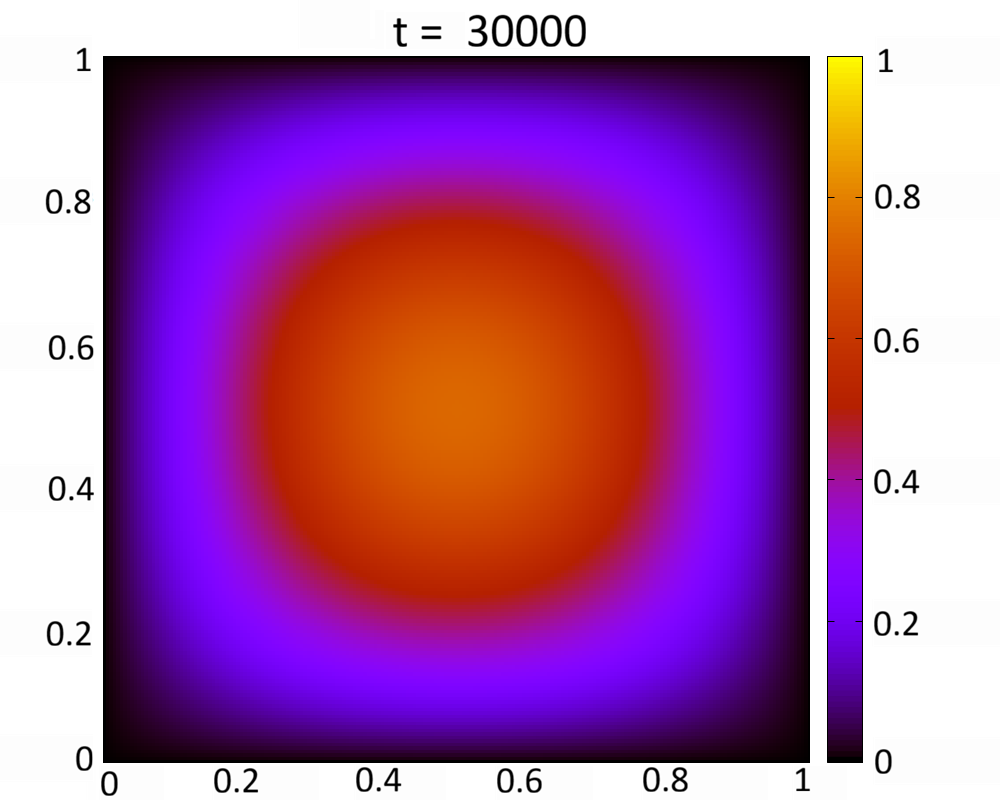}    \includegraphics[width=0.3\linewidth]{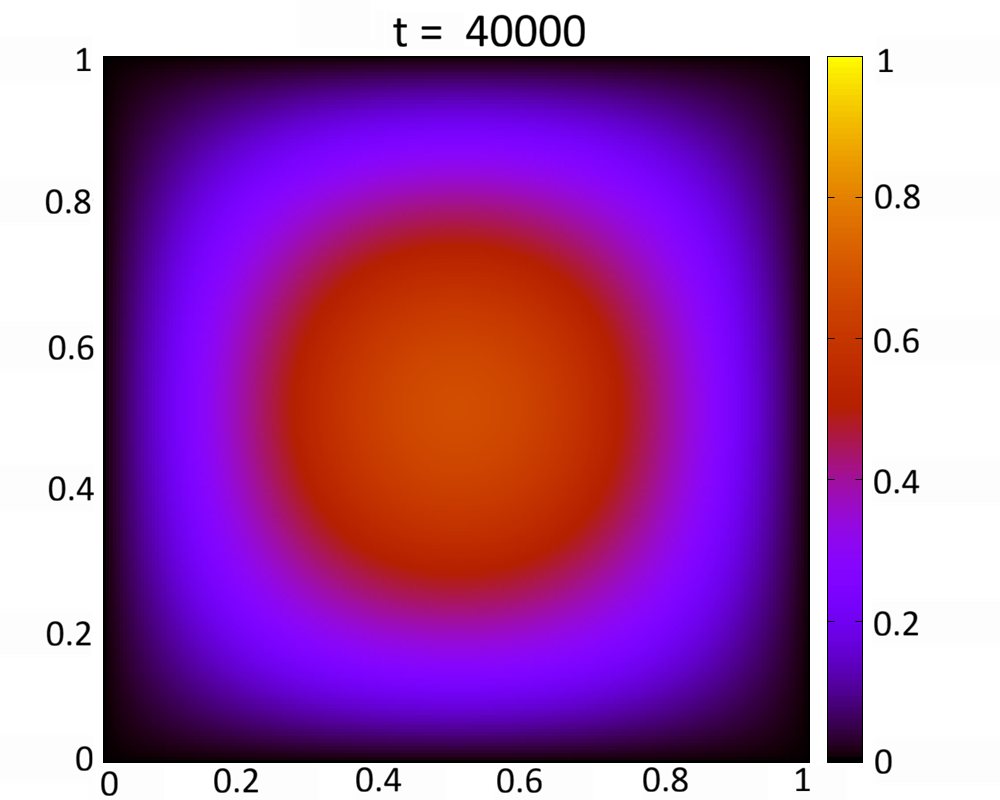}     \includegraphics[width=0.3\linewidth]{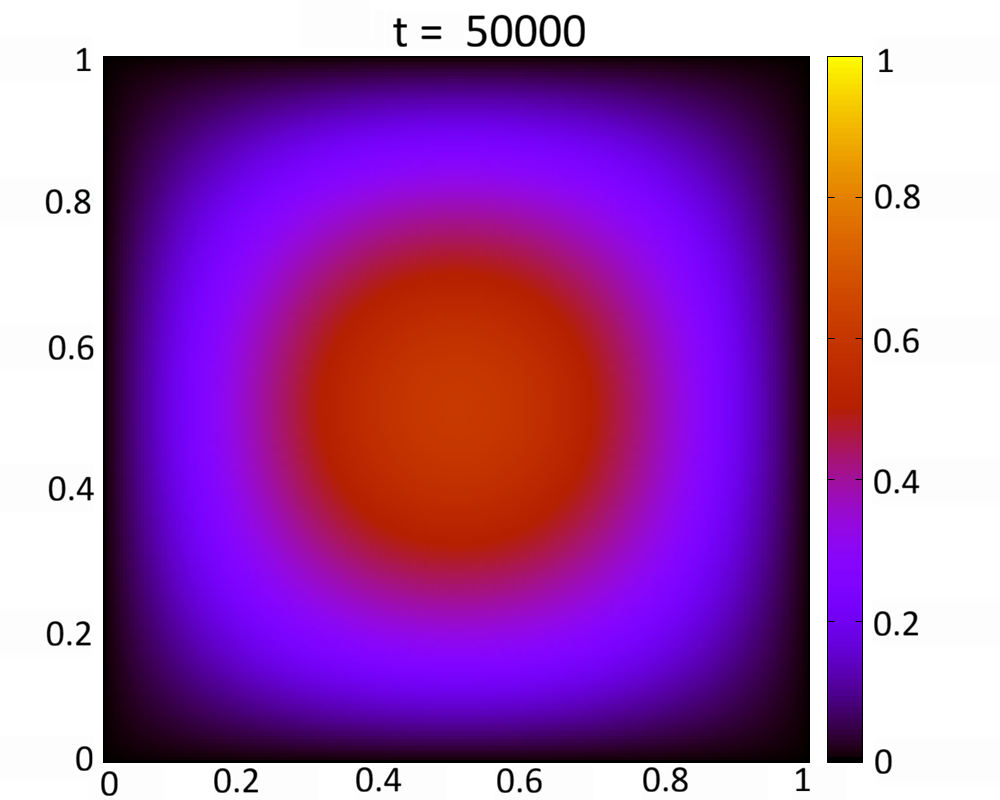} \\
    \includegraphics[width=0.3\linewidth]{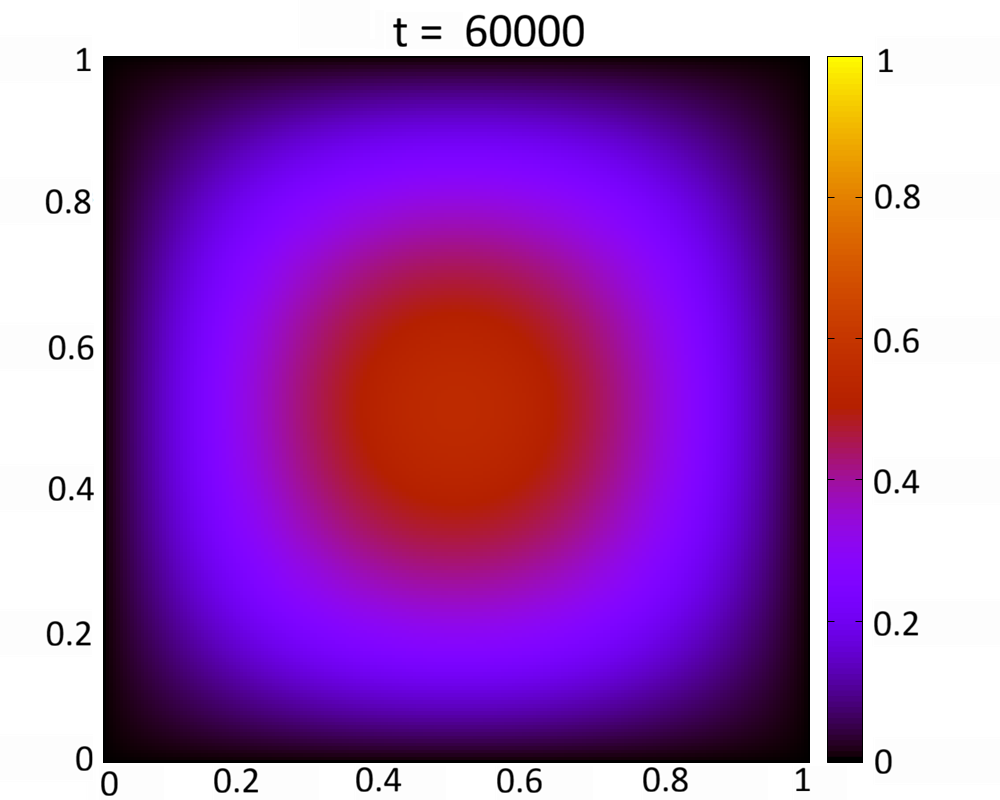}    \includegraphics[width=0.3\linewidth]{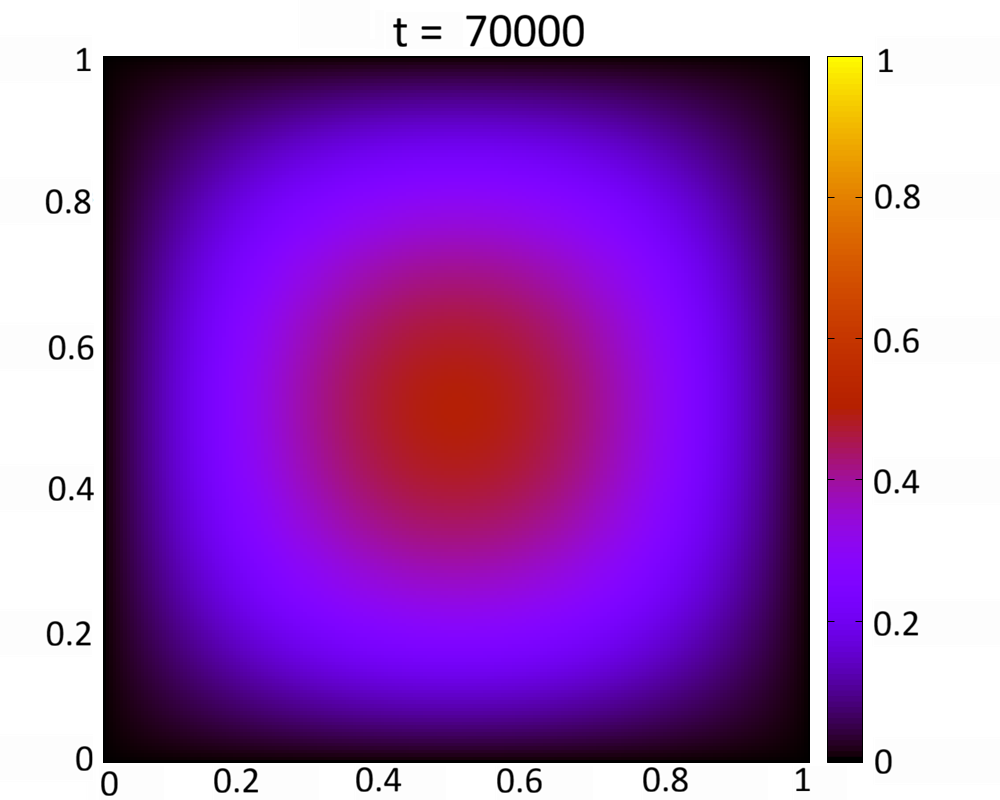}     \includegraphics[width=0.3\linewidth]{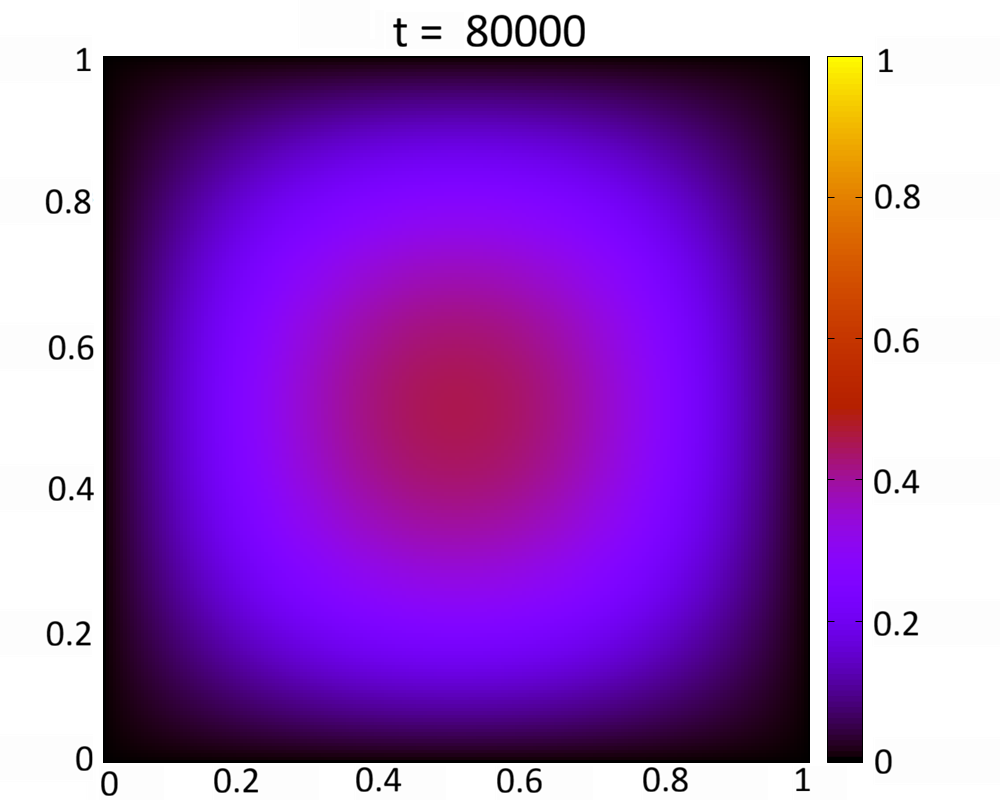} \\
    \includegraphics[width=0.3\linewidth]{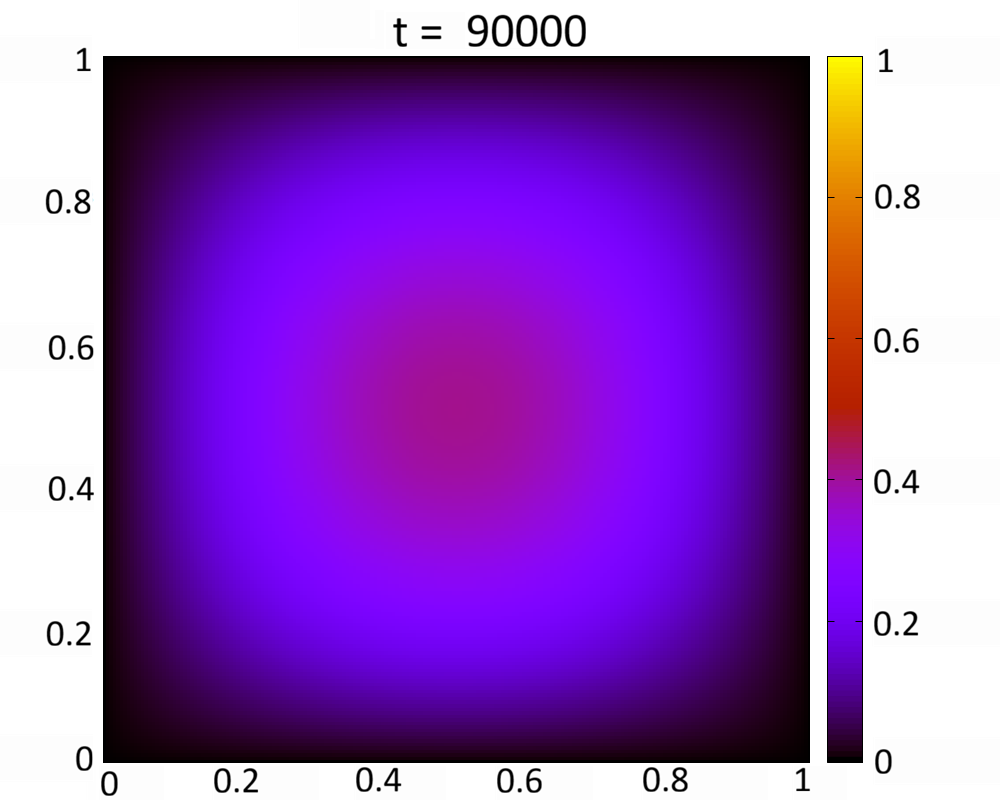}    \includegraphics[width=0.3\linewidth]{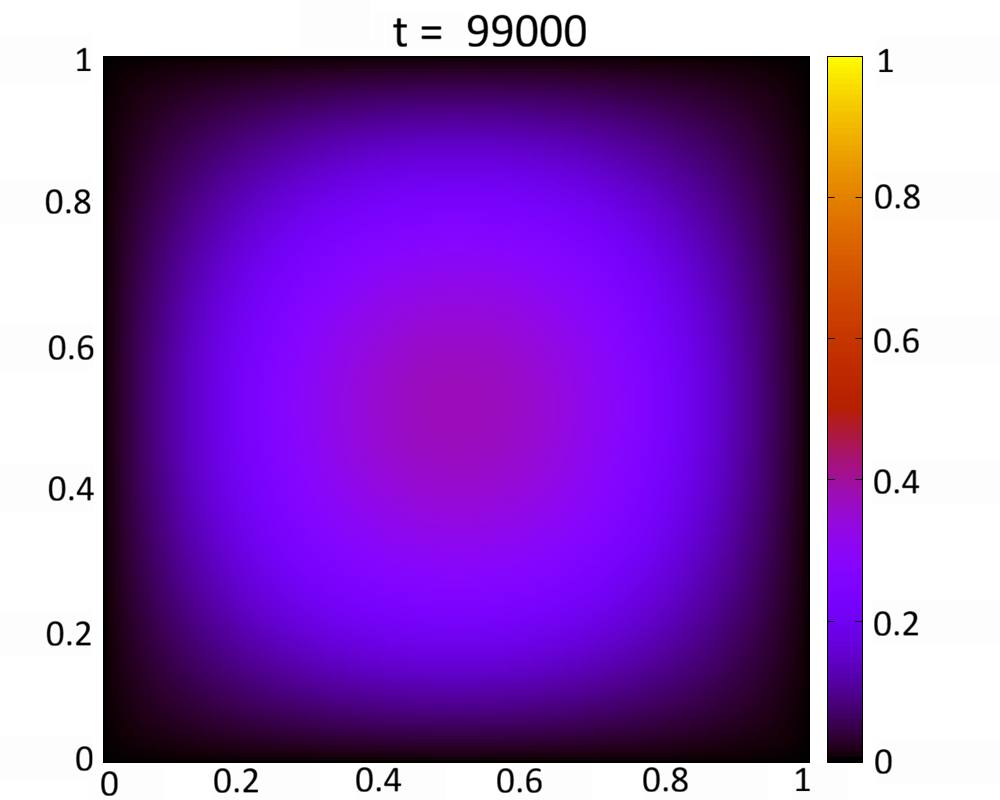}
    \caption{\revision{Snapshots of the IGA-ADS numerical simulation of the manufactured solution advection-diffusion problem.}}
    \label{fig:iga_manufactured}
\end{figure}

\section{In-field measurements of intermittent polluted air}
\label{sec:infield}
To measure the concentration of pollutants in the air, we used two snowmobiles, one behind the other. To the second snowmobile, we attached a  pollution sensor. The sensor for these in-situ air quality measurements  was obtained from  Airly Sp. z o.o. These sensors allow for time-series measurements of air temperature, humidity, pressure,
PM10, PM2.5, PM1, NO$_2$, and O$_3$ values. The sensor utilized was calibrated to deliver a relative error of less than 10 percent. The sensor and the snowmobile are illustrated in Figure~\ref{fig:measureexp}. The measurements have been performed at Adventalen valley\footnote{\tt  https://maps.app.goo.gl/5MiVg6D62yQsJHTQA}.
The measurements were taken on 17/03/2024 using a Yamaha Venture Lite 600cc snowmobile with two-stroke engines burning oil along with gasoline, and an Airly air quality sensor (see Fig.~\ref{fig:measureexp}) starting at 10:30~AM Svalbard local time in 10\nobreakdash-second intervals. The Airly sensor used in our experiment is a multipollutant air quality monitoring device that provides real-time measurements of CO, NO$_2$, O$_3$, and SO$_2$, as well as PM1, PM2.5, and PM10 mass concentrations, and environmental parameters such as pressure, temperature, and relative humidity. 
The devices use electrochemical sensors for gas measurements and a PMS5003 factory-calibrated laser particle counter, which detects particles based on their reflectance.
These sensors count suspended particles of $0.3$, $0.5$, $1.0$, $2.5$, $5.0$, and $10$ $\mu$m \cite{aqmd_airly_sensor, airly_sensors}. 


%
\begin{figure}[!h]
    \centering
    \includegraphics[width=0.5\linewidth]{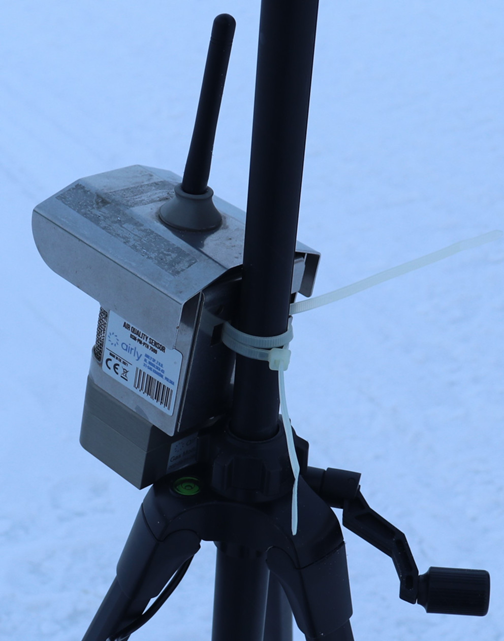} \includegraphics[width=0.358\linewidth]{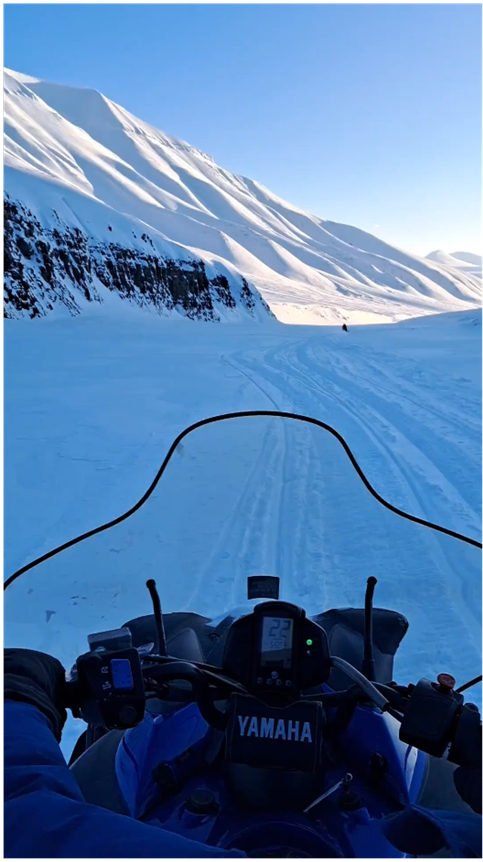}
    \caption{Snowmobiles and the air quality sensor used to take the pollution concentration measurements in Svalbard
on 17/03/2024.}
    \label{fig:measureexp}
\end{figure}

The results of the measurements with moving snowmobiles are presented in Figures ~\ref{fig:meas1}-\ref{fig:meas3}. As one may observe, there is a significant peak during acceleration and another at the end of the measurements, which was due to keeping the working vehicle in place. This concerns the measured NO$_2$ concentration (it goes as high as 40 $\mu$g/m$^3$ at the beginning, and 20-30 $\mu$g/m$^3$ at the end of the measurement), PM2.5  concentration (it reaches up to 250~$\mu$g/m$^3$ at the beginning) and
(\(\text{PM}_{10}\)) concentration (it reaches up to 250~$\mu$g/m$^3$ at the beginning). When the vehicle was moving at an approximately constant speed,  the concentration of NO$_2$ measured oscillated between 0 and 10 $\mu$g/m$^3$, the concentration of PM2.5 measured oscillated between  $0$ and around $5\mu$g/m$^3$, and  the concentration of PM10 measured oscillated between $0$ and 11~$\mu$g/$m^3$.
\begin{figure}[!htb]
    \centering
    \includegraphics[width=0.7\linewidth]{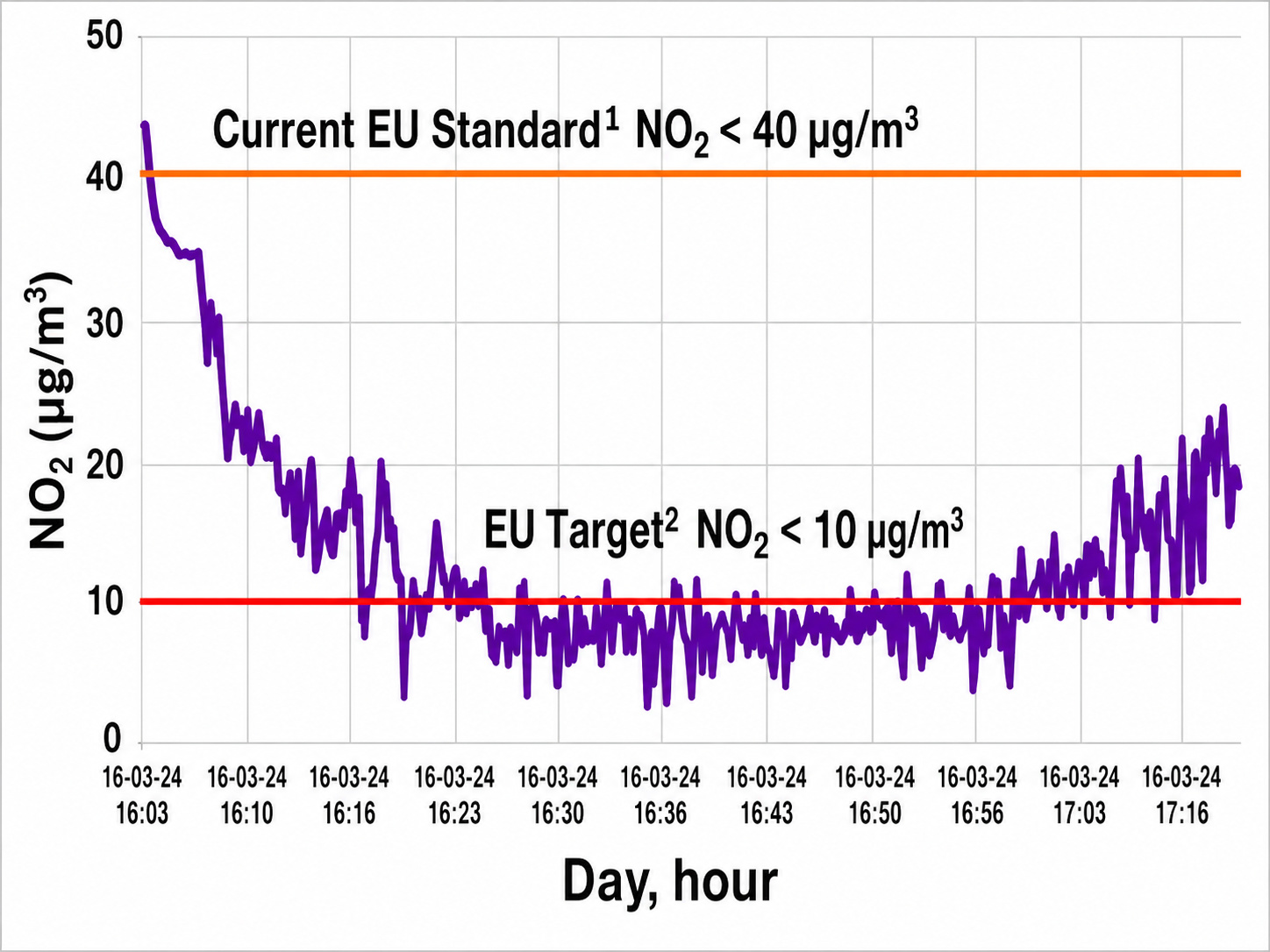}
    \caption{Mobile measurements of NO$_2$ from snowmobiles versus current Directive~\cite{DirectiveUE} and future policy~\cite{RevisionDirectiveUE}.}
    \label{fig:meas1}
\end{figure}
\begin{figure}[!htb]
    \centering
    \includegraphics[width=0.7\linewidth]{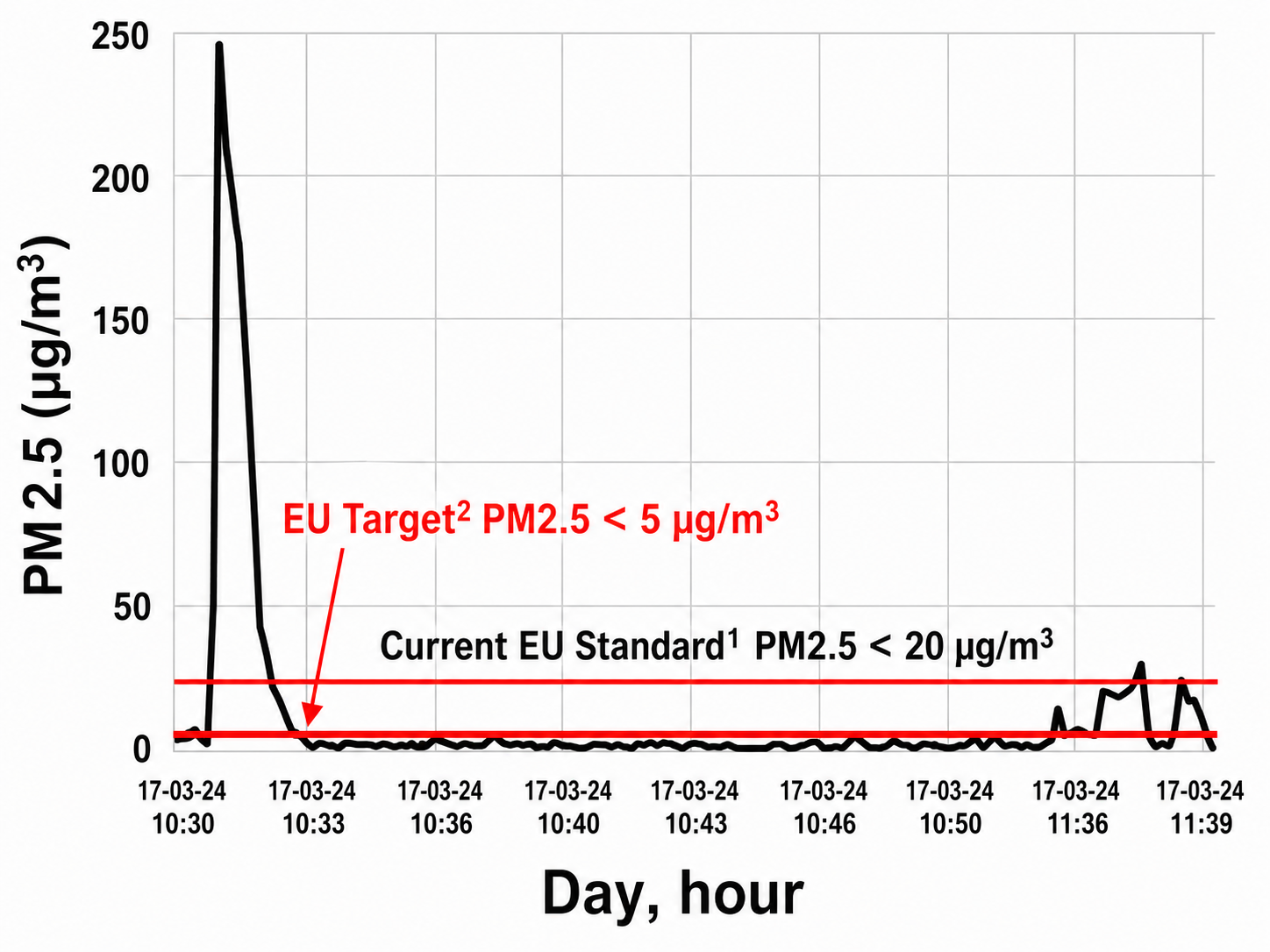}
    \caption{Mobile measurements of PM2.5 from snowmobiles versus current Directive~\cite{DirectiveUE} and future policy~\cite{RevisionDirectiveUE}}
    \label{fig:meas2}
\end{figure}
\begin{figure}[!htb]
    \centering
    \includegraphics[width=0.7\linewidth]{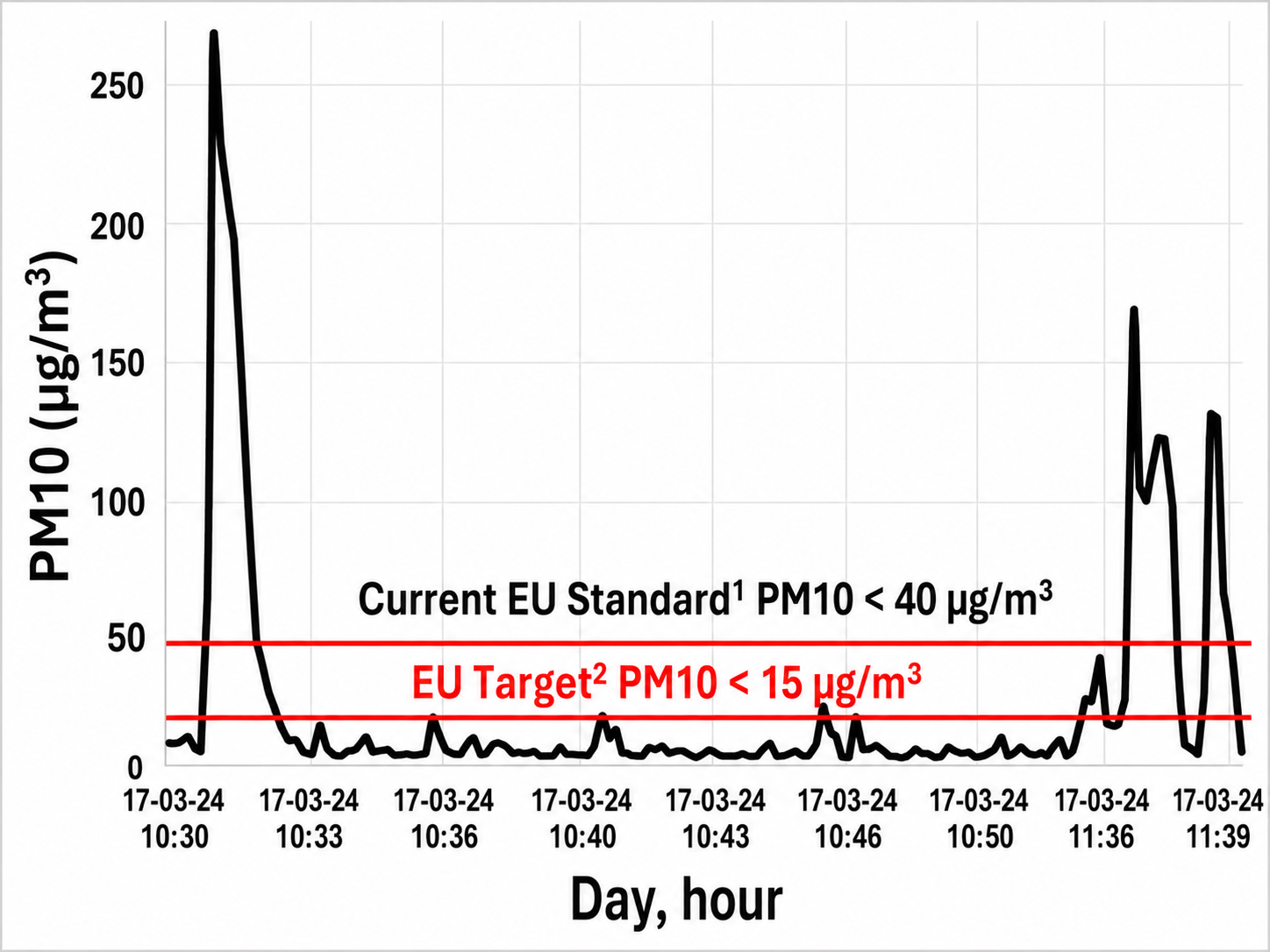}
    \caption{Mobile measurements of PM10 from snowmobiles versus current Directive~\cite{DirectiveUE} and future policy~\cite{RevisionDirectiveUE}.}
    \label{fig:meas3}
\end{figure}

In addition to the moving snowmobile measurements, we perform stationary measurements near Longyearbyen\footnote{\tt  https://maps.app.goo.gl/LBKn3ojpMowcNXUY8} and the results are presented in Figures~\ref{fig:meas4}-\ref{fig:meas6}. As one may observe, the ambient concentration of (\(\text{NO}_{2}\)) measured oscillated between 0 and 10 $\mu g/m^3$, the concentration of (\(\text{PM}_{2.5}\)) measured oscillated between $0$ and around $5 \mu g/m^3$ and the concentration of (\(\text{PM}_{10}\)) measured oscillated between $0$ and 11$\mu$g/m$^3$.
\begin{figure}[!htb]
    \centering
    \includegraphics[width=0.7\linewidth]{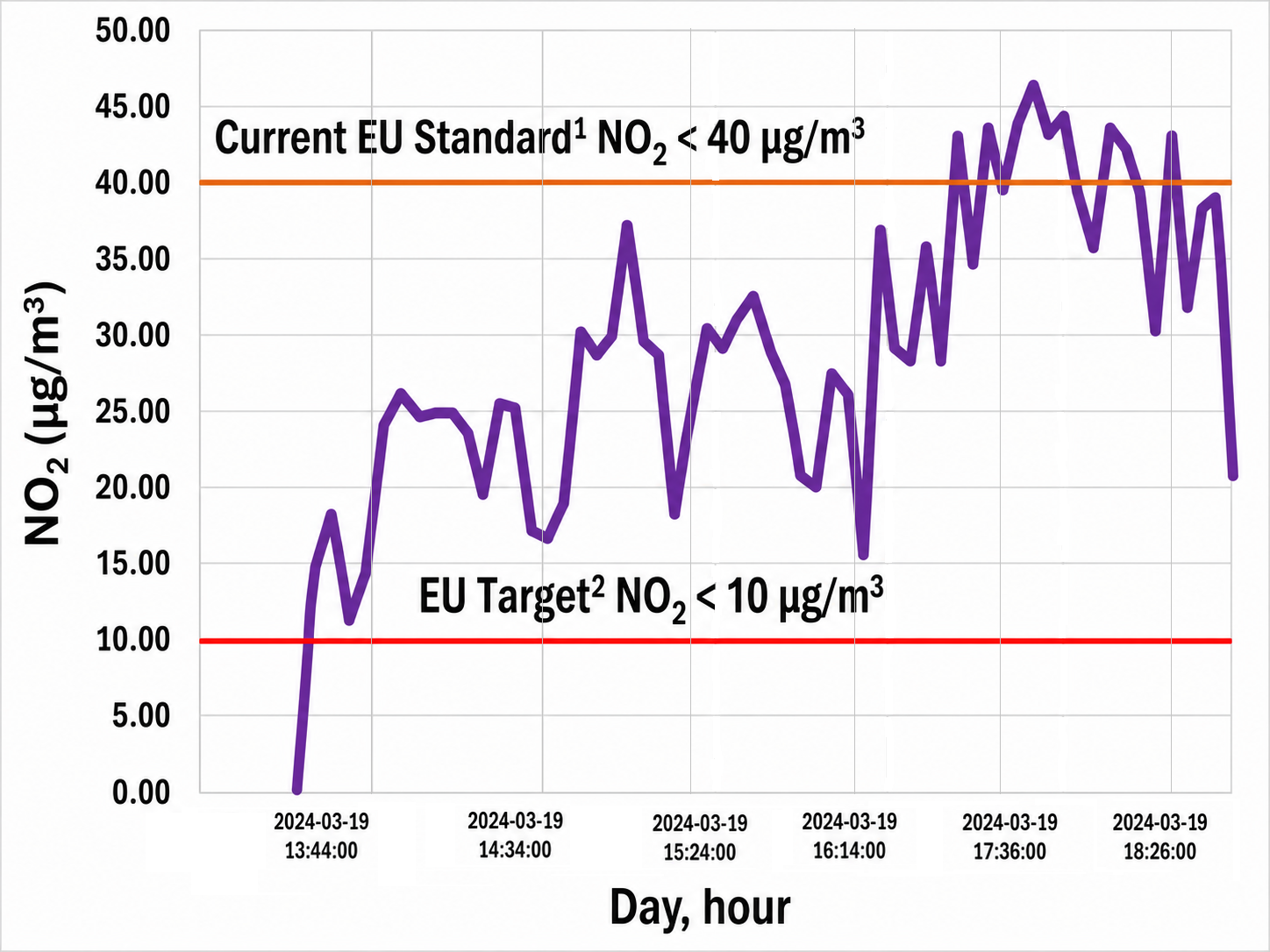}
    \caption{Stationary measurements of NO$_2$ versus current Directive~\cite{DirectiveUE} and future policy~\cite{RevisionDirectiveUE}.}
    \label{fig:meas4}
\end{figure}
\begin{figure}[!htb]
    \centering
    \includegraphics[width=0.7\linewidth]{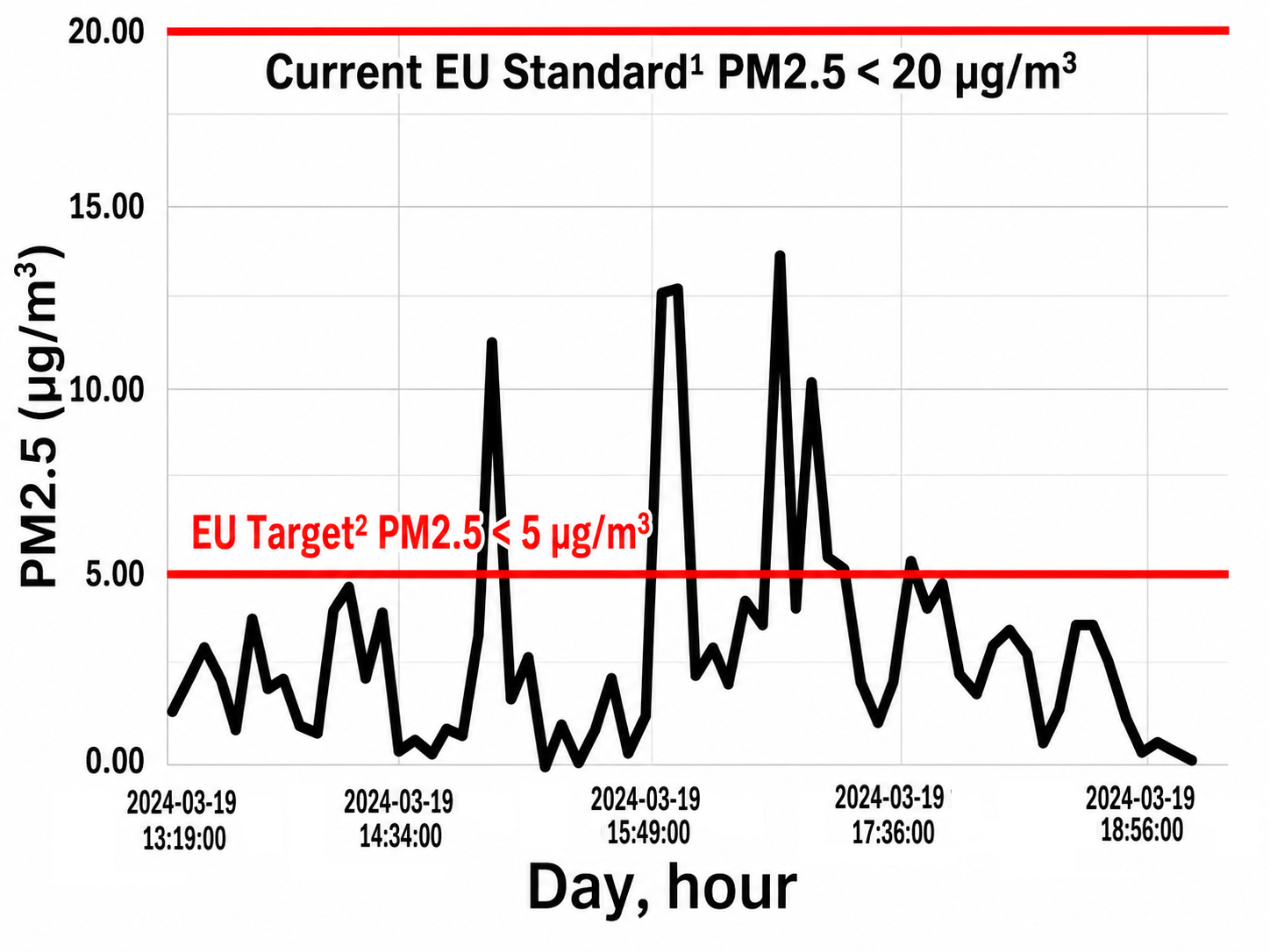}
    \caption{Stationary measurements of PM2.5 versus current Directive~\cite{DirectiveUE} and future policy~\cite{RevisionDirectiveUE}.}
    \label{fig:meas5}
\end{figure}
\begin{figure}[!htb]
    \centering
    \includegraphics[width=0.7\linewidth]{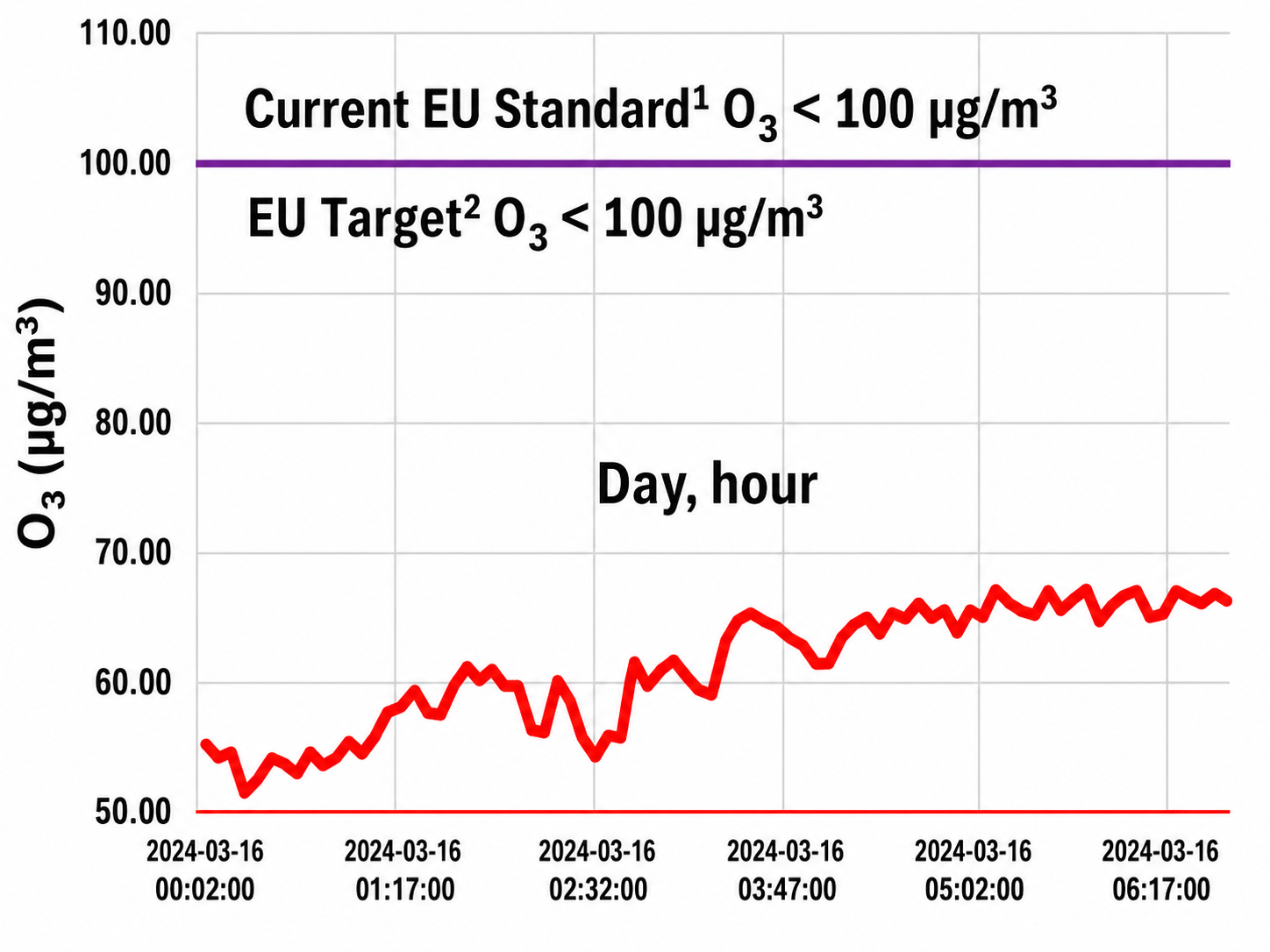}
    \caption{Stationary measurements of O$_3$ versus current Directive~\cite{DirectiveUE} and future policy~\cite{RevisionDirectiveUE}.}
    \label{fig:meas6}
\end{figure}
\begin{figure}[!htb]
    \centering
    \includegraphics[width=0.7\linewidth]{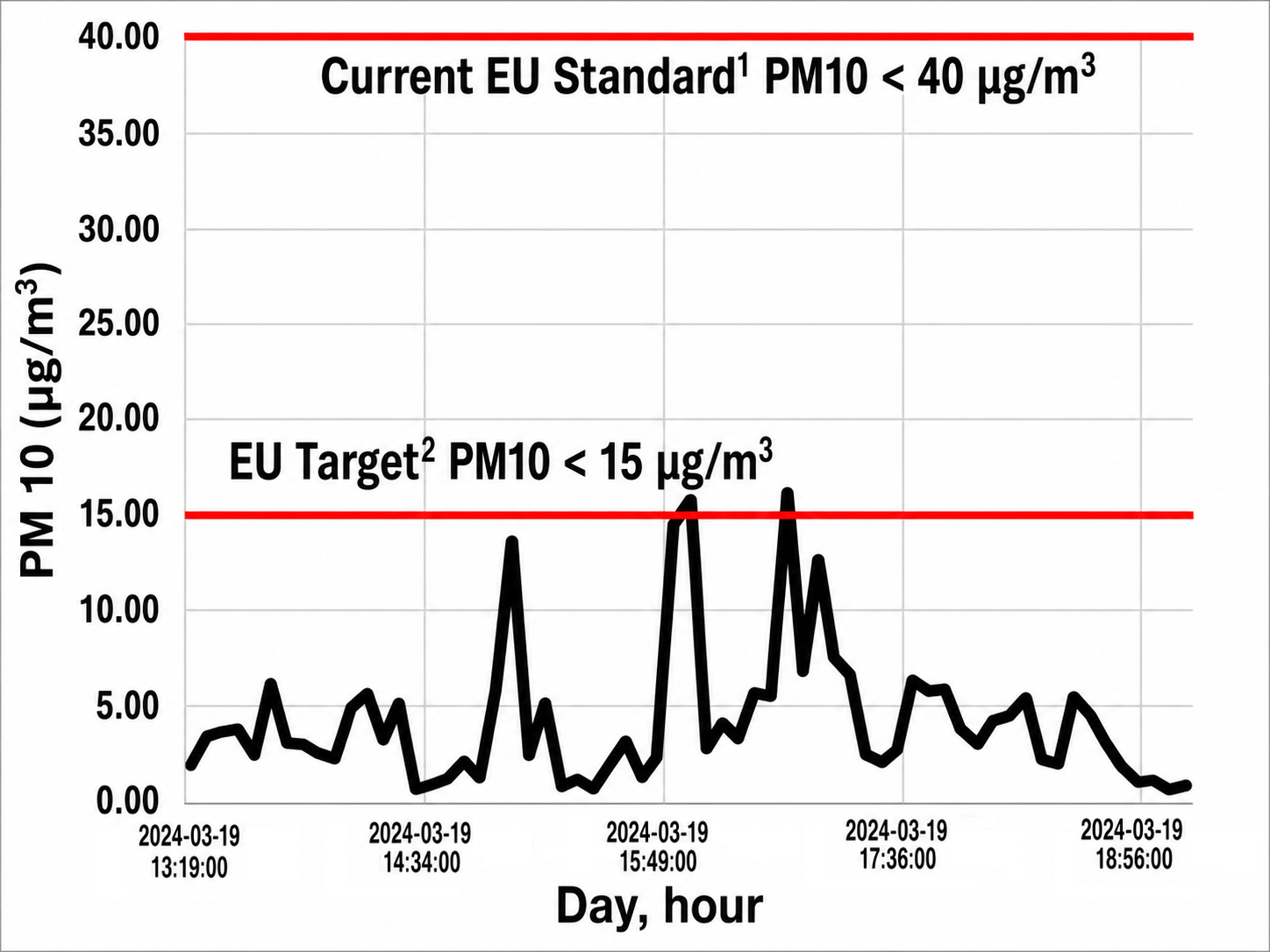}
    \caption{Stationary measurements of PM10 versus current Directive~\cite{DirectiveUE} and future policy~\cite{RevisionDirectiveUE}.}
    \label{fig:meas7}
\end{figure}
Analyzing such results, one can see the locally polluted air. 
Namely, from Figure~\ref{fig:meas4}, we can see over a 40 $\mu g/m^3$ concentration of NO$_2$, which is higher than the current EU standard. From Figure~\ref{fig:meas5}, we can observe a concentration of PM2.5 of up to 13 $\mu g/m^3$, which is higher than the EU target of 5 $\mu g/m^3$. We can also see from Figure~\ref{fig:meas6} the increased concentration of $O_3$, and from Figure~\ref{fig:meas7} the concentration of PM10 reaching the EU Target of 15 $\mu g/m^3$.

\section{\revision{Simulation of pollution generated by snowmobiles with CRVPINN}}

\revision{We are now performing the simulation of the pollution generated by snowmobiles. Our computational domain is a two-dimensional cross-section of the Adventalen valley\footnote{\tt  https://maps.app.goo.gl/5MiVg6D62yQsJHTQA} starting from the location at Longyearbyean. We employ the model described in Section \ref{sec:pinnsec}, where the generated pollution is trapped near the ground due to the vertical advection profile, the horizontal diffusion is ten times larger than the vertical diffusion ($K_x=1.0$, $K_y=0.1$), and the pollution source is modeled as moving Gaussian profile with the setup described in Section \ref{sec:pinnsec}.}

\revision{We employ the CRVPINN method with the Gram matrix build using the $H^1_0$ norm in the space-time formulation. We use the residual, as described in Section \ref{sec:pinns}. 
\revision{The initial concentration is prescribed as
$
u(x,y,0)=u_0(x,y),
$
where $u_0$ is specified by the initial-condition function used in the numerical implementation.}
\revision{Homogeneous Neumann boundary conditions are imposed on all boundaries,
$
\frac{\partial u}{\partial x}=0$
 on
$x=0,L_x,
$
and
$
\frac{\partial u}{\partial y}=0$
on $y=0,L_y.
$
These conditions model zero diffusive flux through the domain boundaries.}
The residual loss is transformed into the robust loss by employing the inverse of the Gram matrix.}

\begin{figure}
    \centering
    a)\includegraphics[width=0.45\linewidth]{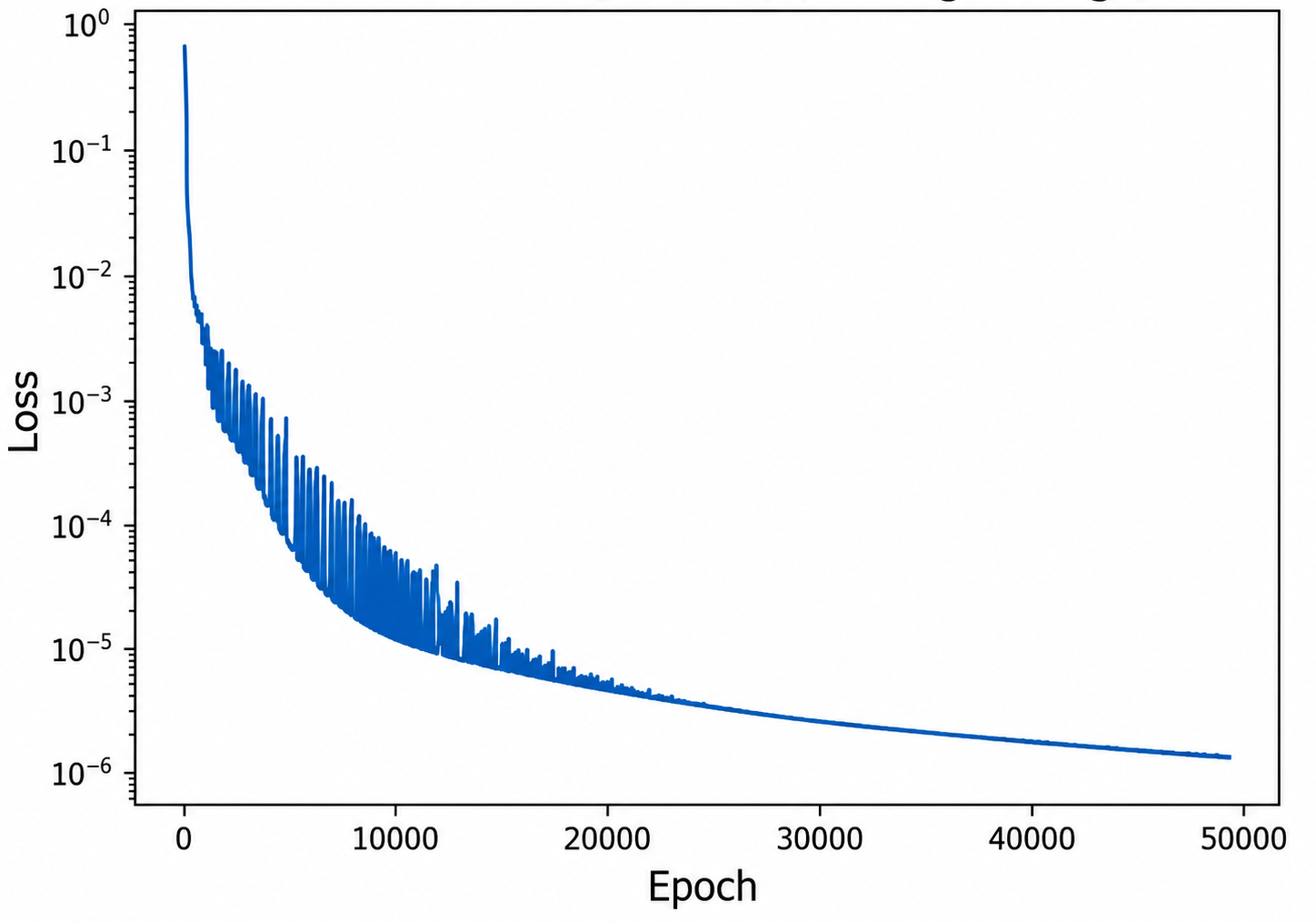}
    b)\includegraphics[width=0.45\linewidth]{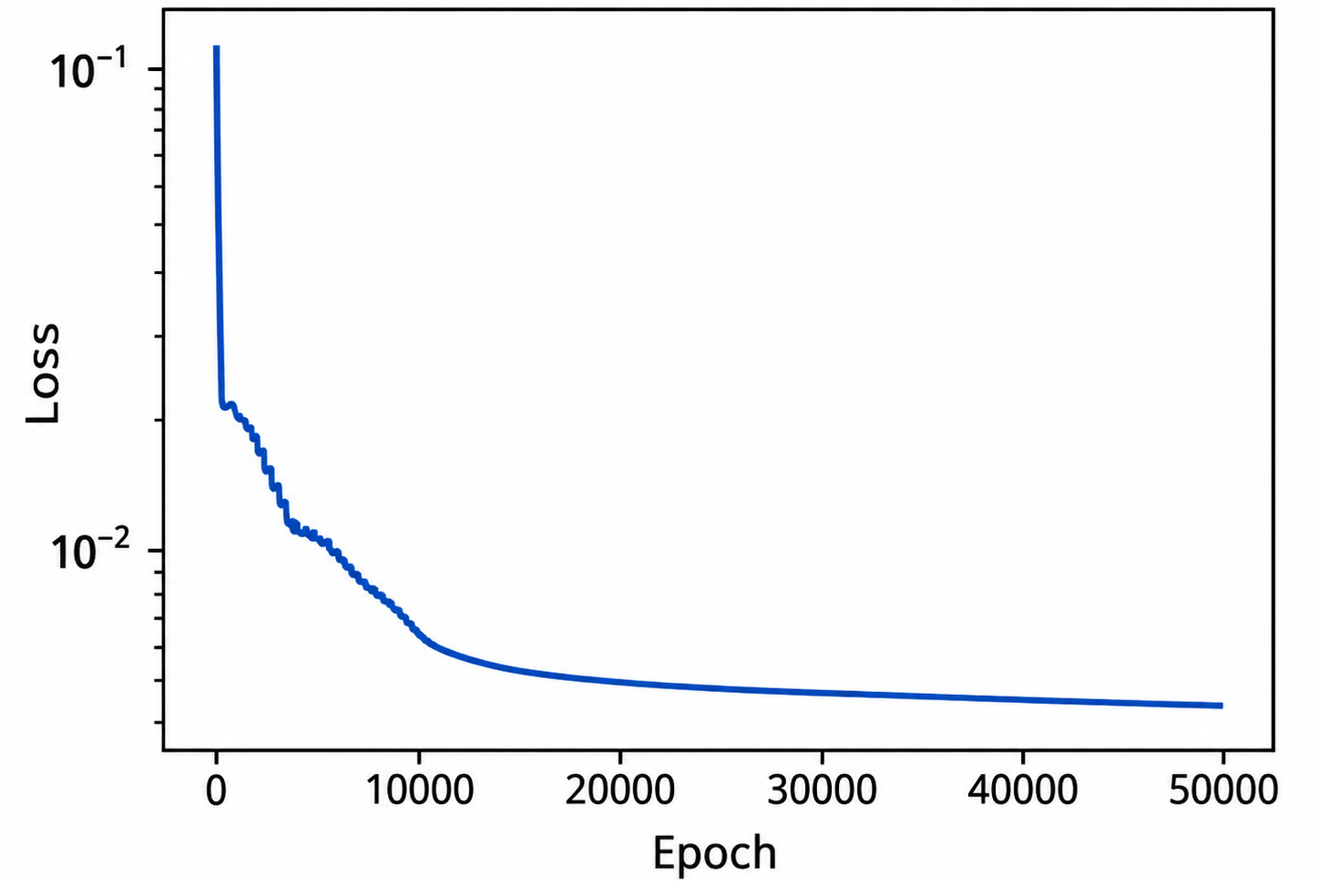}
    \caption{The convergence of the PINN (a) and CRVPINN (b) training.}
    \label{fig:residual}
\end{figure}

The numerical experiments are presented in Figures \ref{fig:residual}-\ref{fig:snowmobile1}.
We compare the convergence of the PINN training with the convergence of the CRVPINN training in the Figures \ref{fig:residual}(a) and \ref{fig:residual}(b). 
The PINN loss converges in 50,000 iterations to the value of $10^{-6}$. However, we do not know how this loss value is related to the accuracy of the solution. 
The PINN loss can be small but the numerical error of the neural network solution, defined as the $\| u_\theta - u_{exact}\|_{H^1_0(\Omega\times I)}$ can be arbitrarily large.
The benefit of the CRVPINN method is that the CRVPINN loss is robust since it fulfills  
$\sqrt{{\cal L}_{crvpinn}} \leq \| u_\theta - u_{exact}\|_{H^1_0(\Omega\times I)} \leq \sqrt{{\cal L}_{crvpinn}}$~\cite{LOS2025107839}. 
Introducing the robust loss of the CRVPINN method makes the loss bounded by the true error, so the convergence of CRVPINN loss presents the estimate of the numerical accuracy of the solution. In our case, the CRVPINN loss after 50,000 iterations converges to the value of 0.005, which means that
$\| u_\theta - u_{exact}\|_{H^1_0(\Omega\times I)} \leq \sqrt{{\cal L}_{crvpinn}}=\sqrt{0.005}=0.07$.
In other words, the numerical accuracy of the obtained simulations in $H^1_0(\Omega \times I)$ norm is equal 0.07.
\begin{figure}
    \centering
    \includegraphics[width=0.3\linewidth]{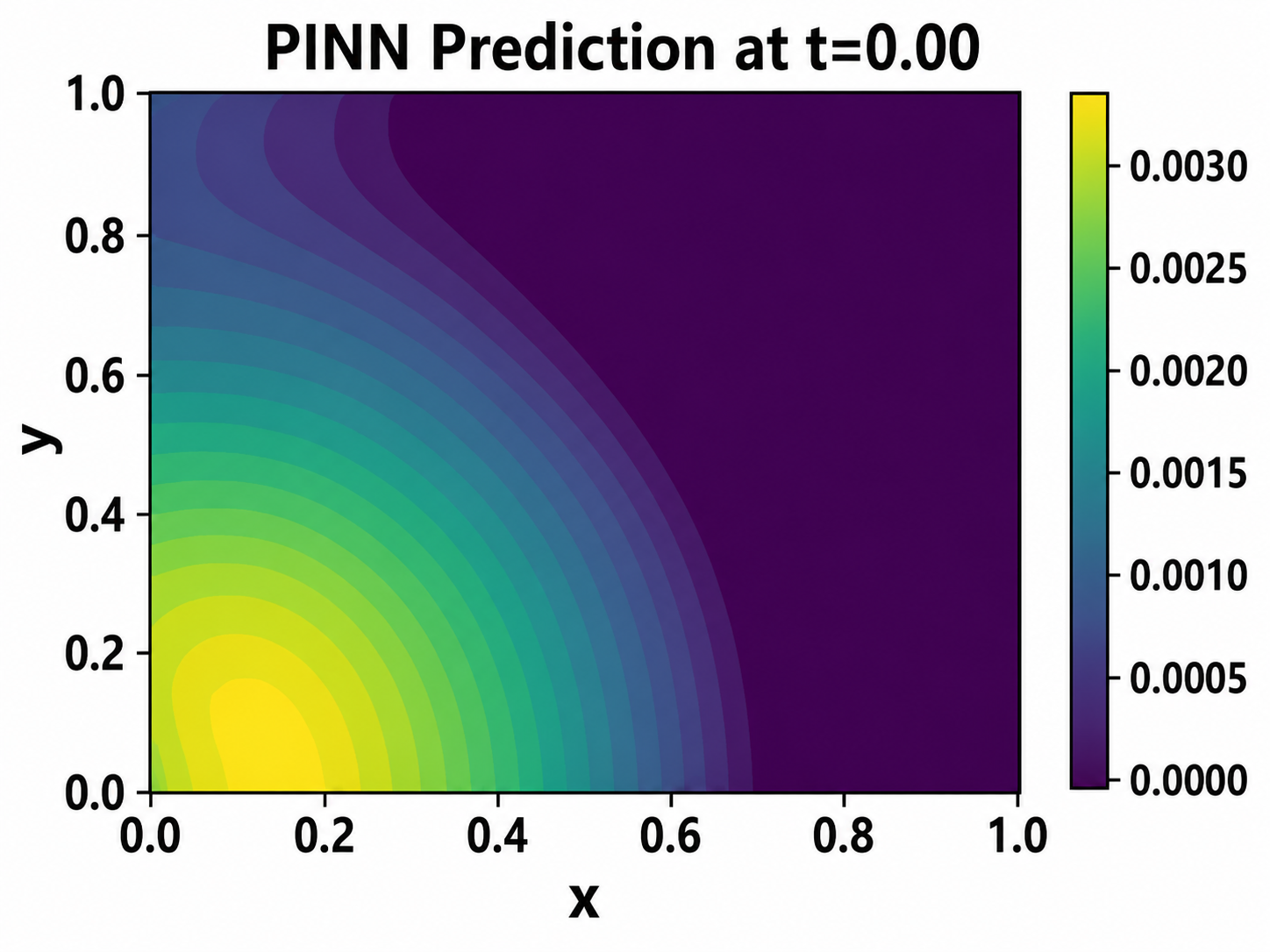}    \includegraphics[width=0.3\linewidth]{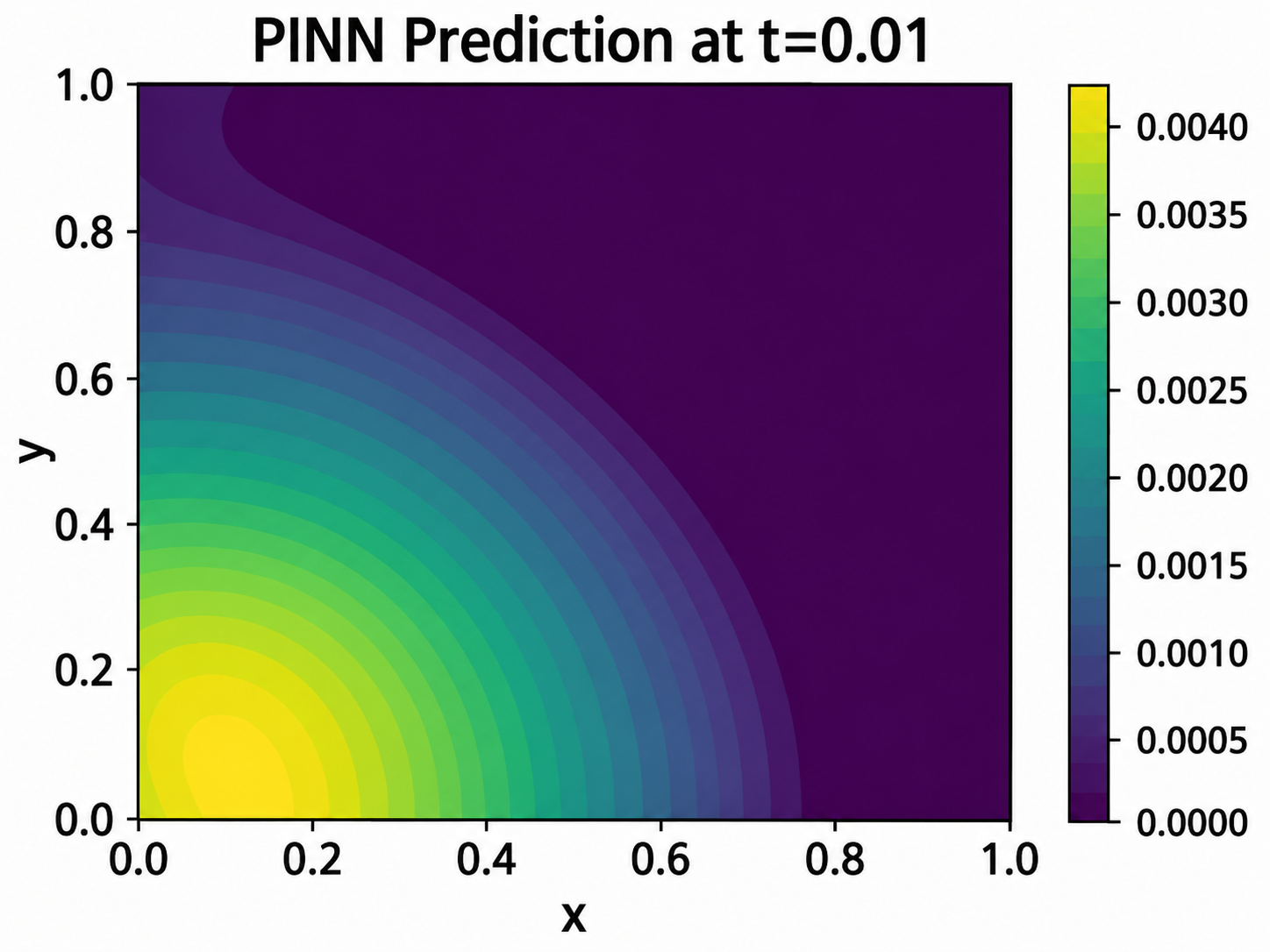}     \includegraphics[width=0.3\linewidth]{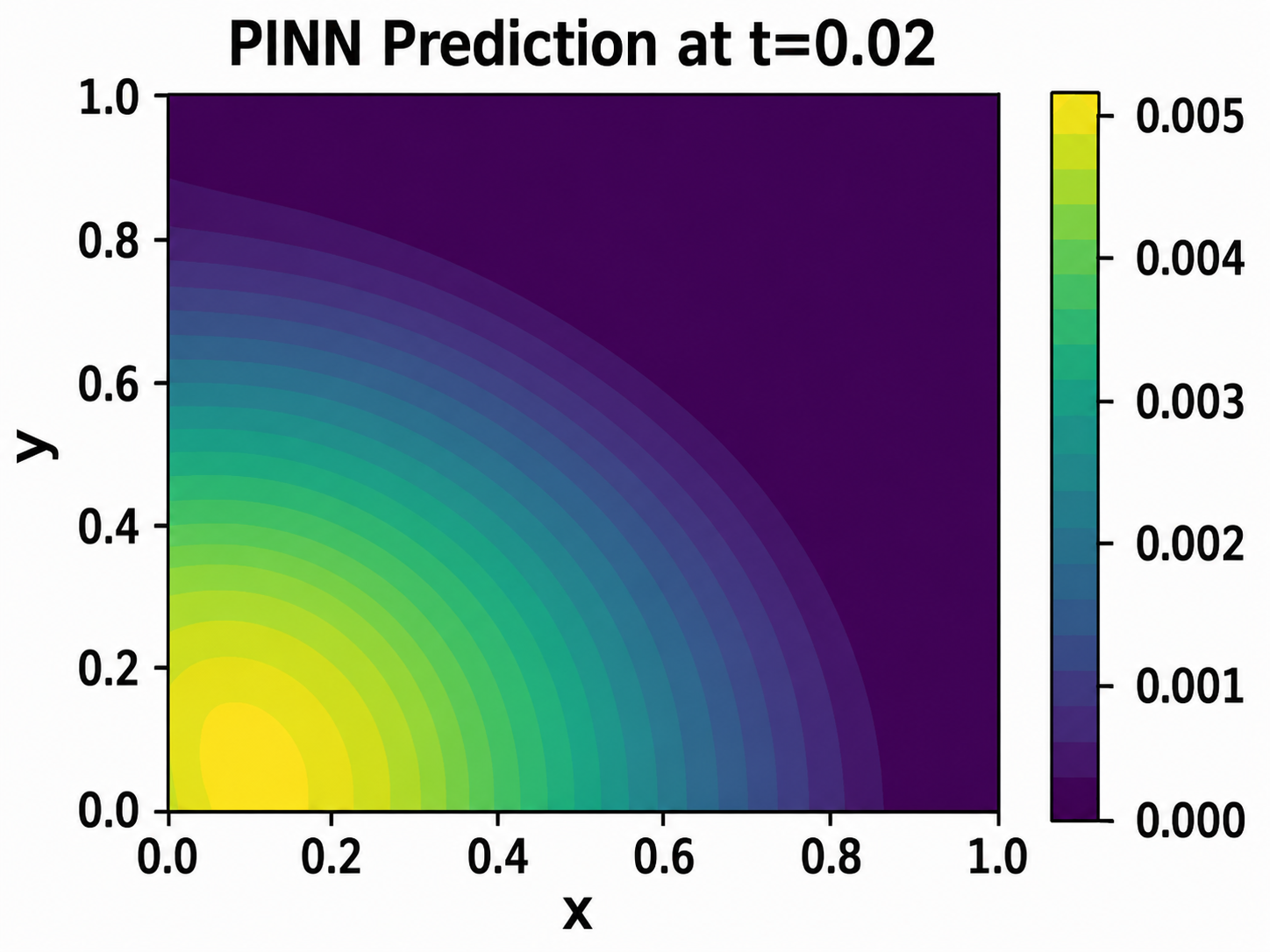} \\
    \includegraphics[width=0.3\linewidth]{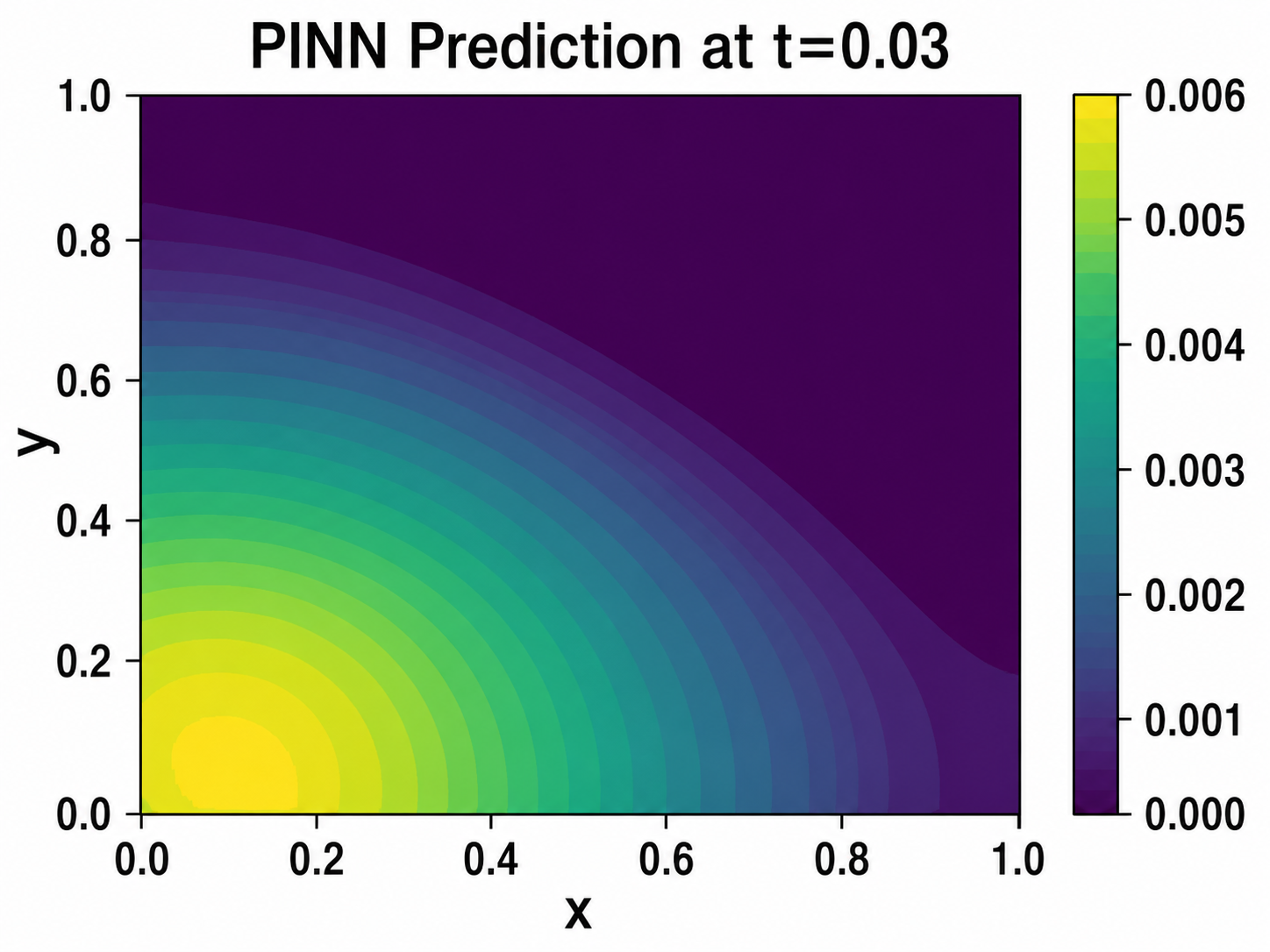}    \includegraphics[width=0.3\linewidth]{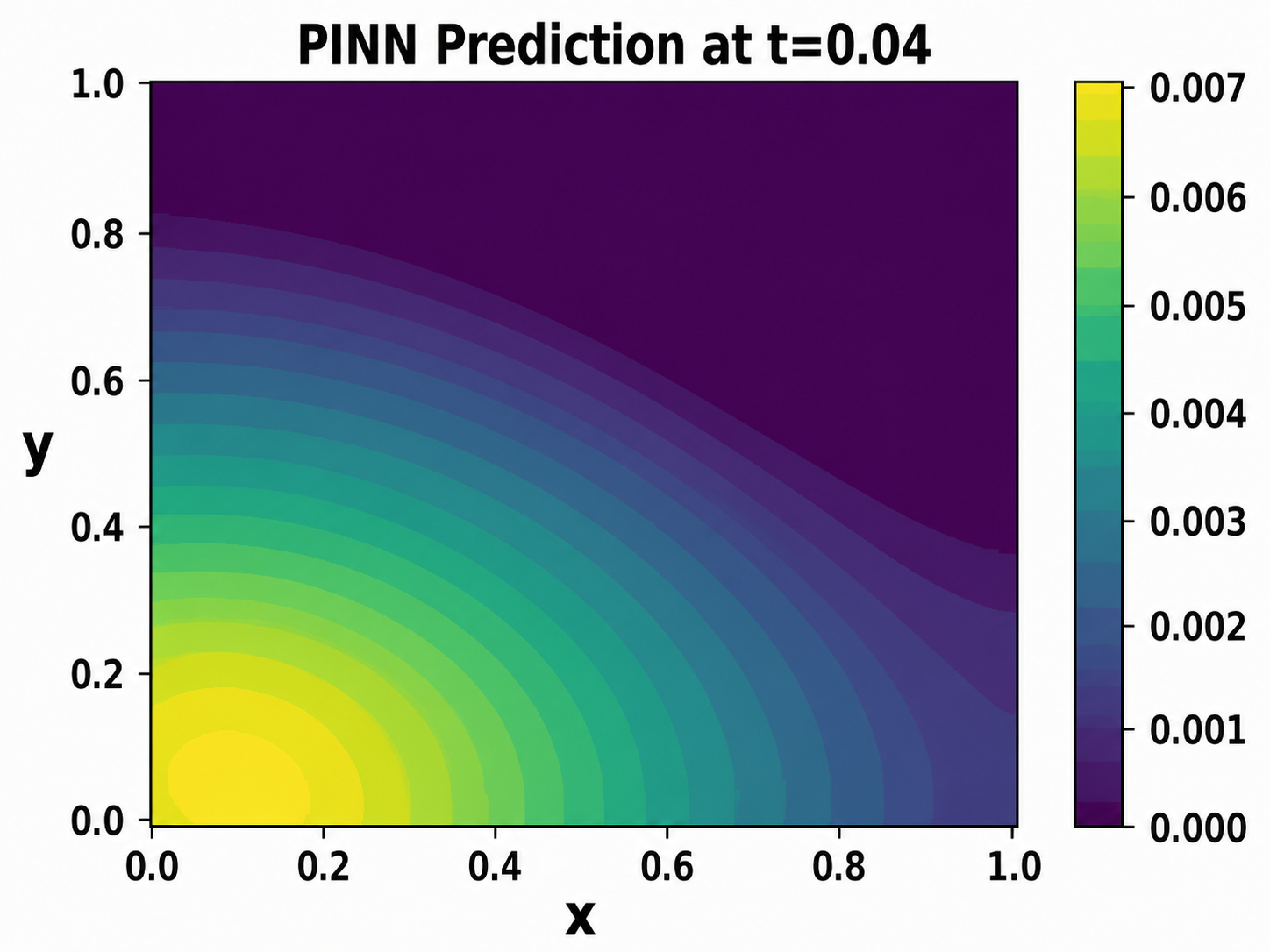}     \includegraphics[width=0.3\linewidth]{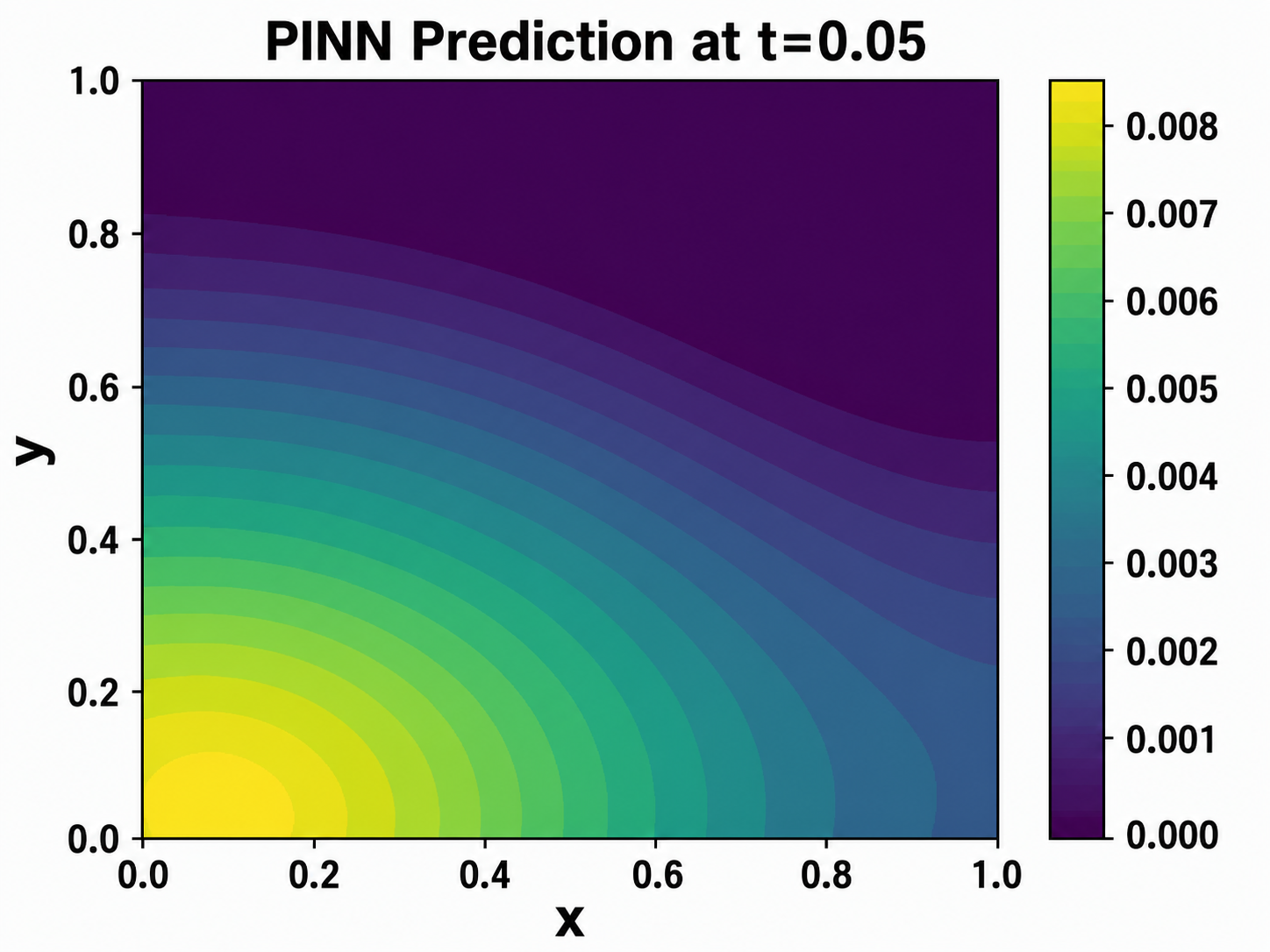} \\
    \includegraphics[width=0.3\linewidth]{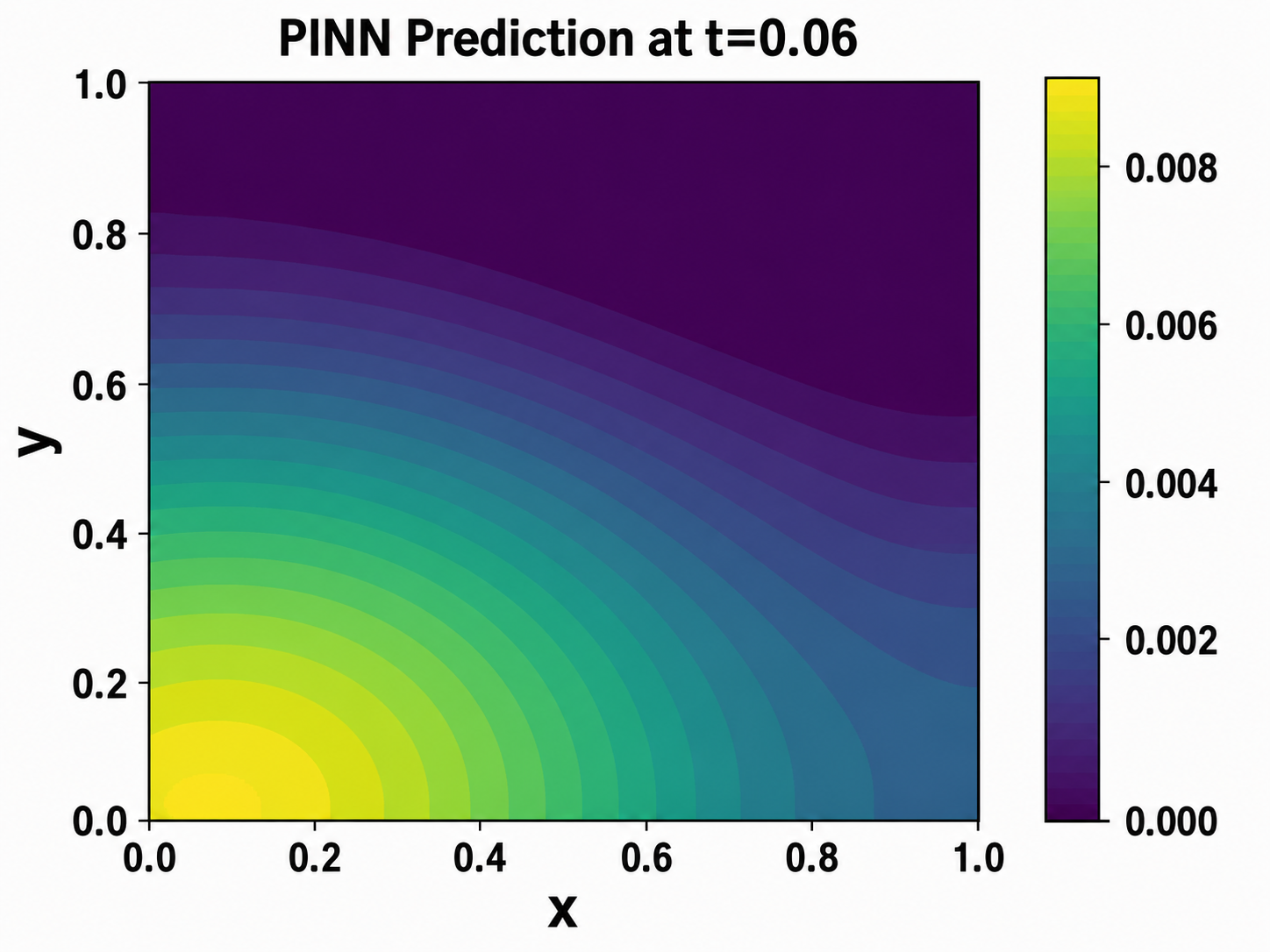}    \includegraphics[width=0.3\linewidth]{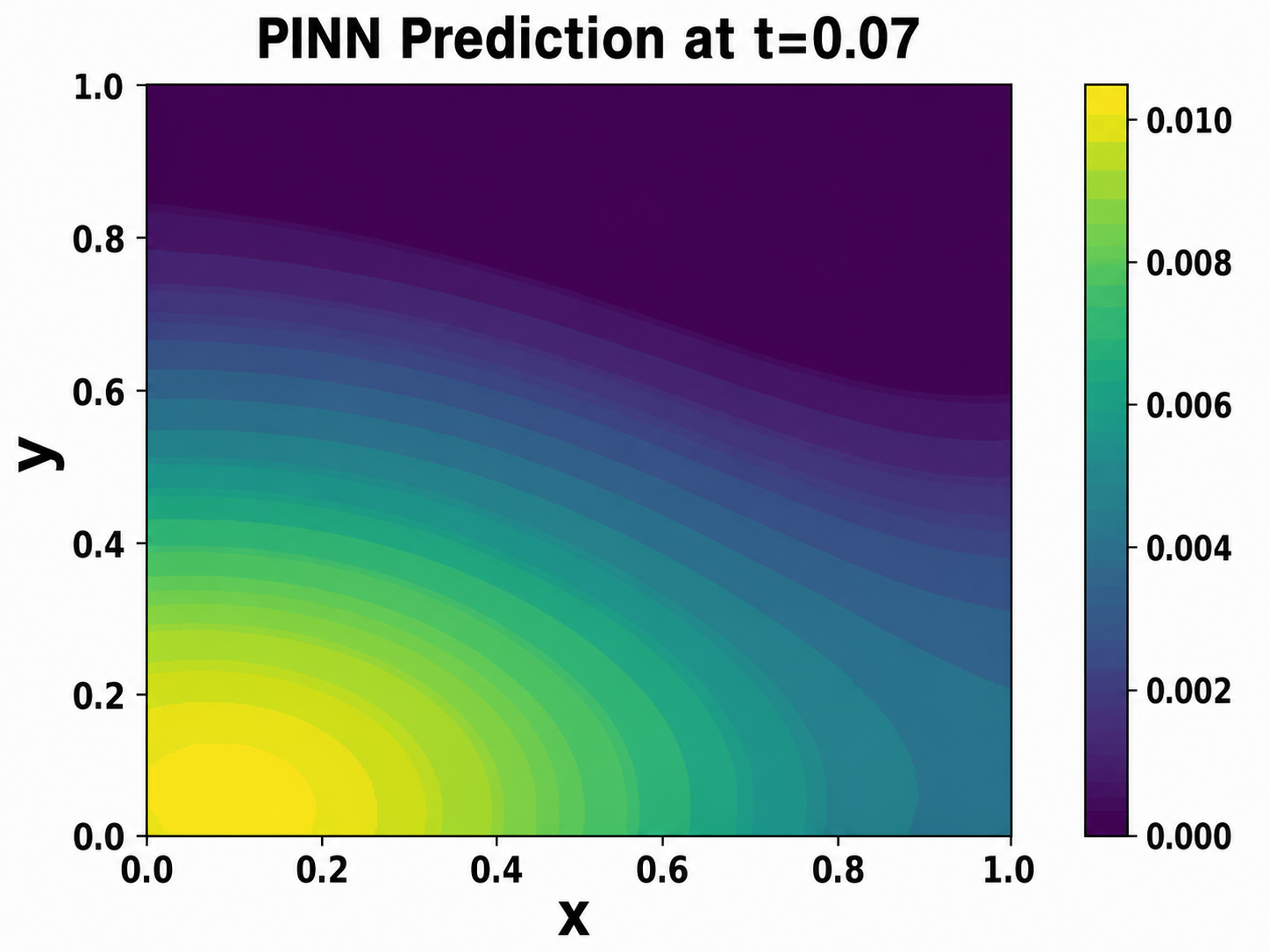}     \includegraphics[width=0.3\linewidth]{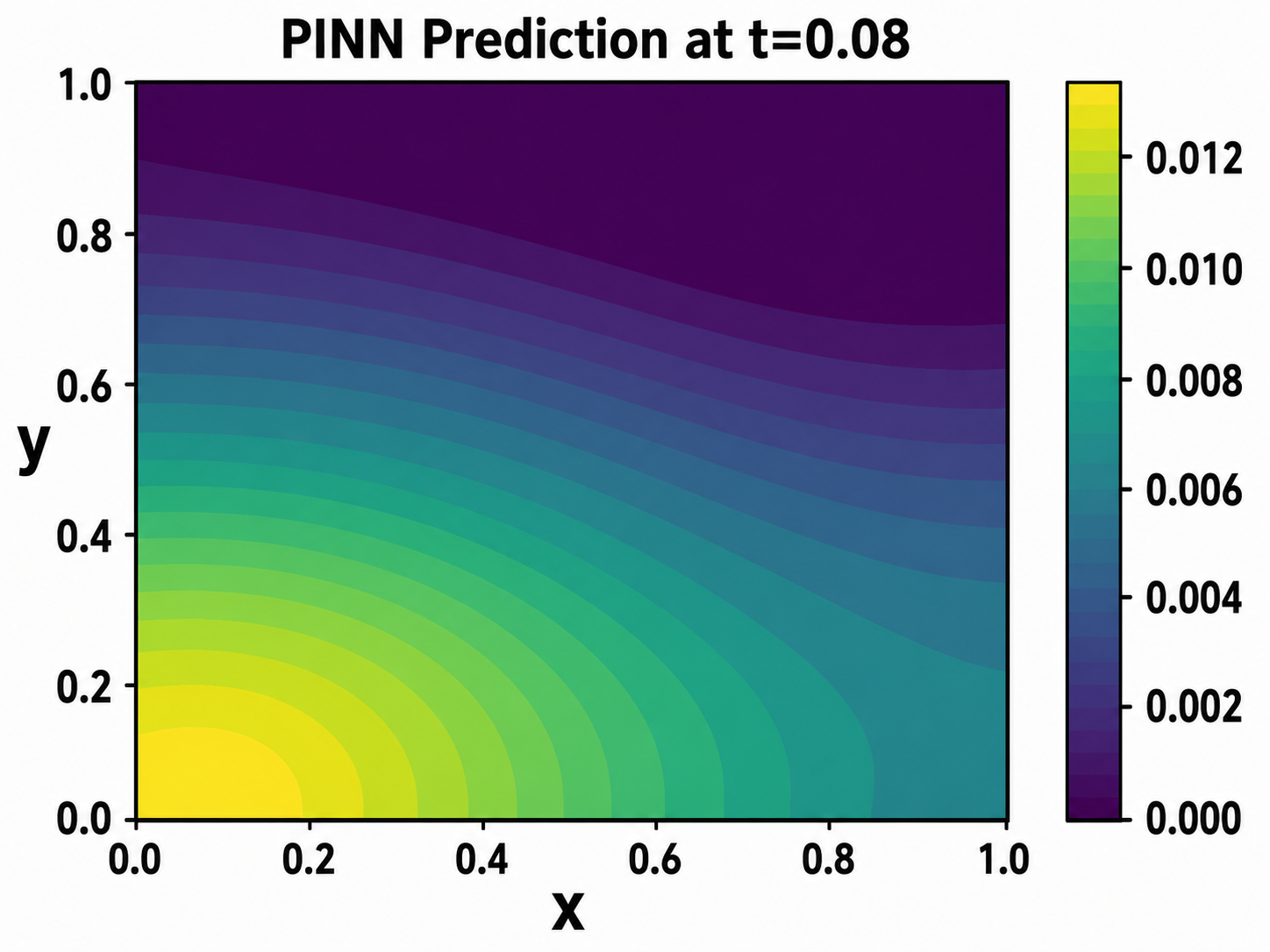} \\
    \includegraphics[width=0.3\linewidth]{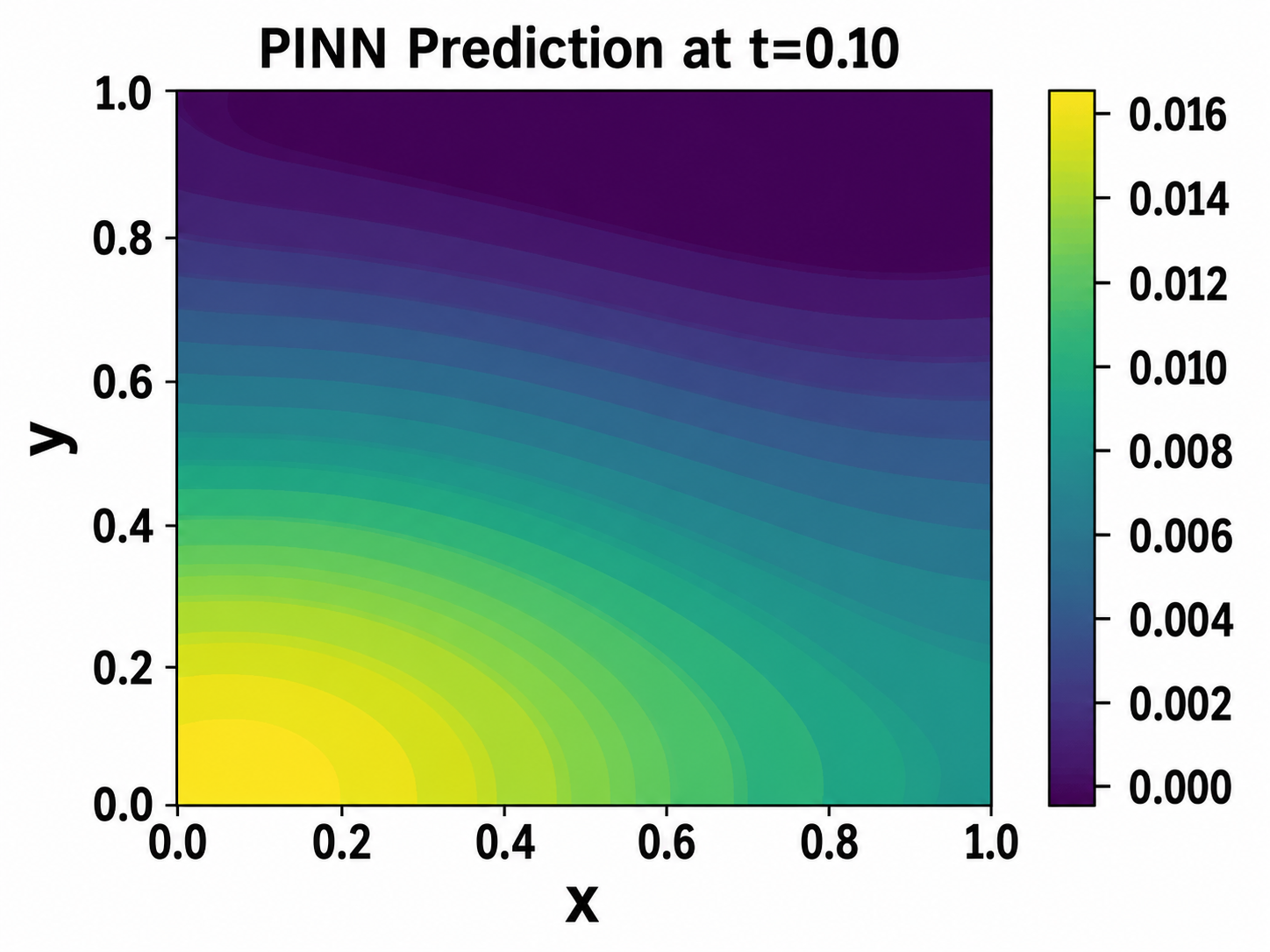}    \includegraphics[width=0.3\linewidth]{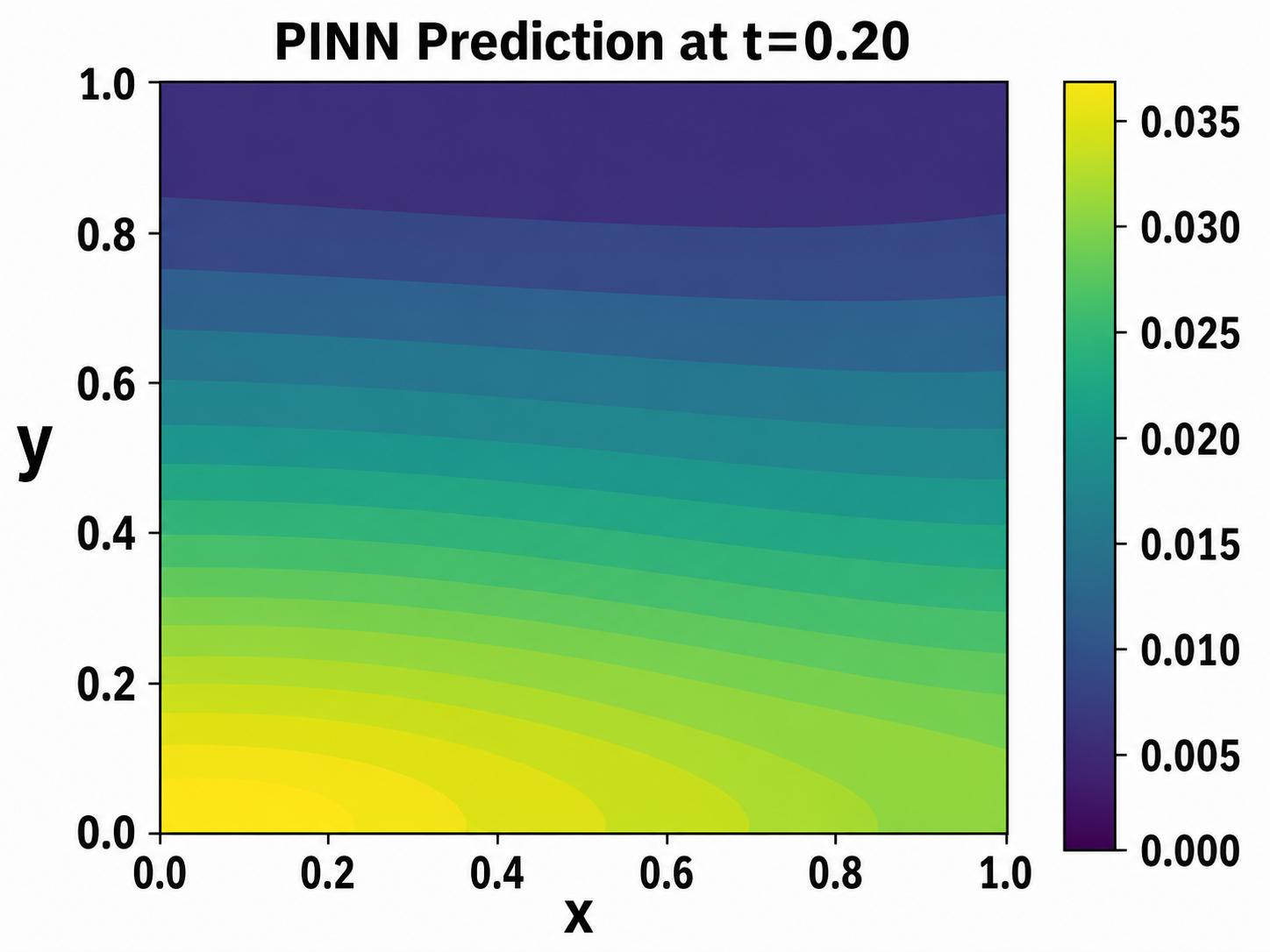}     \includegraphics[width=0.3\linewidth]{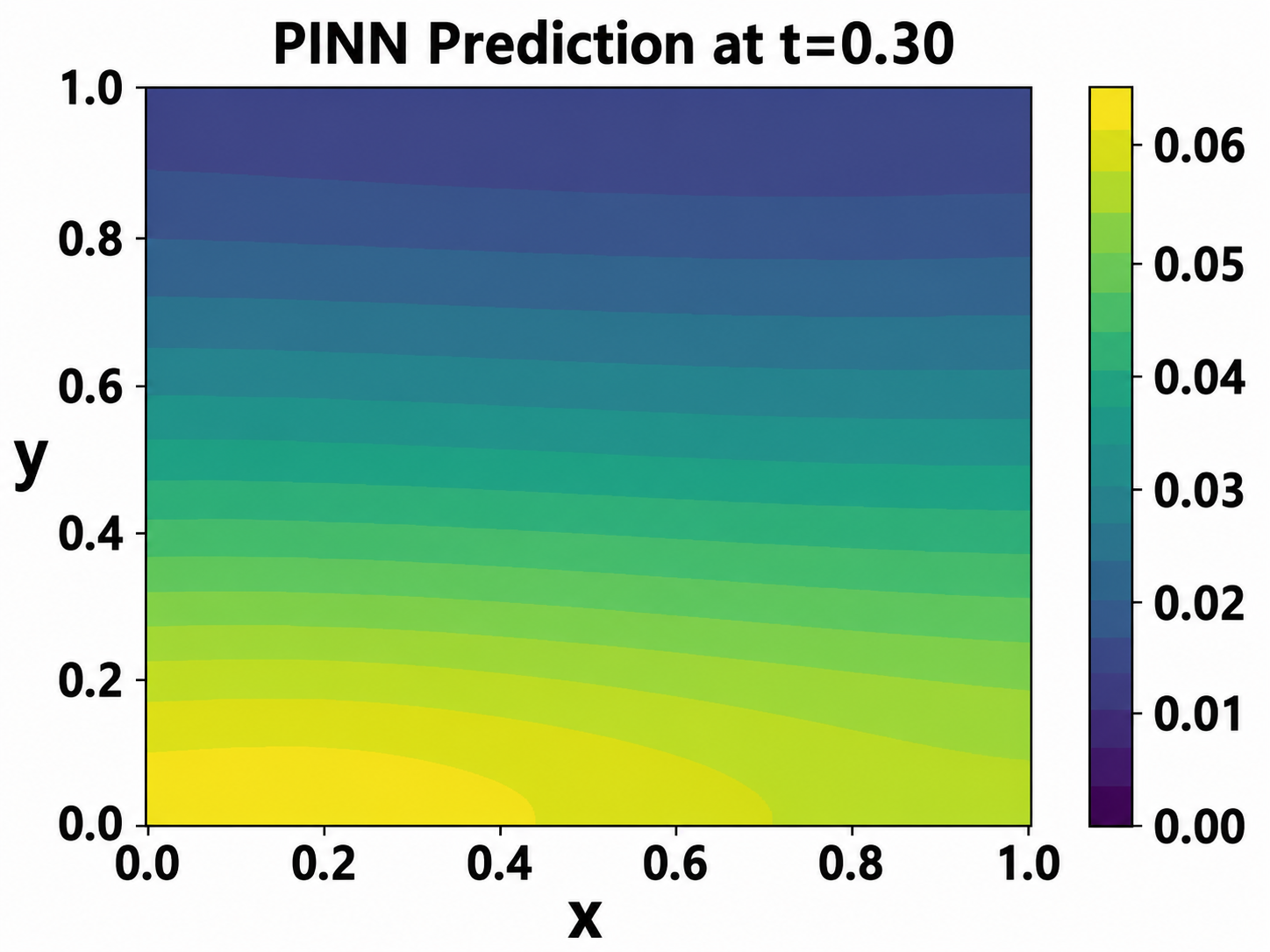} 
    \caption{Snapshots of the snowmobiles pollution CRVPINN simulation.}
    \label{fig:snowmobile1}
\end{figure}

\revision{The numerical simulations presented in Figure \ref{fig:snowmobile1} are consistent with the pollution measurements from snowmobiles shown in Figures \ref{fig:meas1}-\ref{fig:meas3} and with the stationary pollution measurements shown in Figures \ref{fig:meas4}-\ref{fig:meas6} and reproduce the principal transport mechanisms observed during the field experiments. The measurements in Figures \ref{fig:meas1}-\ref{fig:meas3} indicate that elevated pollutant concentrations persist near the snowmobile starting location, suggesting limited vertical mixing and the accumulation of exhaust gases close to the ground. Likewise, the mobile measurements in Figure \ref{fig:meas1} exhibit high NO$_2$ concentrations immediately after vehicle acceleration, followed by a decrease as the snowmobile moves away from the observation point. Similar effect can be observed in PM2.5 measurements in Figure \ref{fig:meas2} and PM10 measurements in Figure \ref{fig:meas3}.}

\revision{These observations are consistent with the physical assumptions incorporated into the mathematical model described in Section \ref{sec:pinnsec}. The moving Gaussian source represents the exhaust emitted by the snowmobile, while the prescribed vertical advection profile models thermal inversion by suppressing upward transport within the near-ground layer and allowing vertical motion only above the inversion height. As a consequence, the simulated pollutant cloud remains concentrated close to the ground while gradually spreading in the horizontal direction due to diffusion. The sequence of snapshots in Figure \ref{fig:snowmobile1} illustrates the evolution of this near-ground pollution plume as it propagates through the computational domain, demonstrating that pollutants remain trapped within the lower atmospheric layer rather than being rapidly dispersed upward.}

\revision{Although the present model is not calibrated to reproduce the measured concentration values quantitatively, it captures the dominant qualitative features observed in the experiments. In particular, both the measurements and the numerical simulations indicate that pollutants generated by snowmobiles accumulate near the ground under thermal inversion conditions, leading to locally elevated concentrations and slow atmospheric dispersion. This agreement supports the validity of the proposed advection-diffusion model as a physically meaningful representation of pollution transport in the Adventalen valley and provides a foundation for future inverse analyses in which model parameters will be identified directly from measurement data.}

\subsection{\revision{Nonlinear Burgers-Type Extension}}
\label{sec:nonlinear_burgers}

\revision{To investigate the performance of the robust physics-informed neural network for nonlinear partial differential equations, we extend the linear advection-diffusion model by introducing a solution-dependent advective velocity. The original model considered in the snowmobile pollution simulation has the form
\begin{equation}
    \frac{\partial u}{\partial t}
    + \mathbf{v}(x,y,t)\cdot\nabla u
    - \nabla\cdot\left(\mathbf{K}\nabla u\right)
    = S,
    \label{eq:linear_advection_diffusion}
\end{equation}
where $u=u(x,y,t)$ denotes the transported concentration, $\mathbf{v}$ is the prescribed velocity field, $\mathbf{K}$ is the diffusion tensor, and $S(x,y,t)$ represents the moving source associated with the snowmobile traffic. In the present implementation, the diffusion is anisotropic,
\begin{equation}
    \mathbf{K}
    =
    \begin{pmatrix}
        K_x & 0\\
        0 & K_y
    \end{pmatrix},
\end{equation}
with $K_x$ and $K_y$ denoting the diffusion coefficients in the horizontal and vertical directions, respectively.}

\revision{A nonlinear extension can be obtained by assuming that the horizontal transport velocity depends on the concentration itself. In particular, we introduce the Burgers-type nonlinear flux
$u\frac{\partial u}{\partial x}$.
The resulting nonlinear advection--diffusion equation reads
\begin{equation}
    \frac{\partial u}{\partial t}
    + u\frac{\partial u}{\partial x}
    + v_y(y,t)\frac{\partial u}{\partial y}
    - K_x\frac{\partial^2 u}{\partial x^2}
    - K_y\frac{\partial^2 u}{\partial y^2}
    = S.
\label{eq:burgers_advection_diffusion}
\end{equation}
The term $u \frac{\partial u}{\partial x}$ introduces a quadratic nonlinearity analogous to that appearing in the viscous Burgers equation. Consequently, the propagation velocity in the $x$-direction is no longer prescribed independently of the solution, but is determined locally by the concentration $u$. Regions characterized by larger values of $u$ are therefore transported with a different effective velocity than regions with smaller concentration. This mechanism may lead to nonlinear steepening of the transported concentration profile, while the diffusion terms counteract this effect by smoothing sharp gradients.}

\revision{The model \eqref{eq:burgers_advection_diffusion} can equivalently be written in conservative form as
\begin{equation}
    \frac{\partial u}{\partial t}
    + \frac{\partial}{\partial x}
      \left(\frac{u^2}{2}\right)
    + v_y(y,t)\frac{\partial u}{\partial y}
    - K_x u_{xx}
    - K_y u_{yy}
    = S,
\end{equation}
since
$\frac{\partial}{\partial x}
    \left(\frac{u^2}{2}\right)
    =
    u\frac{\partial u}{\partial x}.
$
This formulation emphasizes the connection of the proposed extension with nonlinear conservation laws and the classical viscous Burgers equation.}

\revision{For the neural-network approximation $u_\theta(x,y,t)$, the corresponding strong residual is defined as
\begin{equation}
\begin{split}
    RES(u_\theta)
    ={}&
    \frac{\partial u_\theta}{\partial t}
    + u_\theta\frac{\partial u_\theta}{\partial x}
    + v_y(y,t)\frac{\partial u_\theta}{\partial y}-
    K_x\frac{\partial^2u_\theta}{\partial x^2}
    -
    K_y\frac{\partial^2u_\theta}{\partial y^2}
    -
    S.
\end{split}
\label{eq:nonlinear_residual}
\end{equation}
All first- and second-order derivatives appearing in
\eqref{eq:nonlinear_residual} are evaluated using automatic
differentiation. The nonlinear product
$    u_\theta\frac{\partial u_\theta}{\partial x}
$
is therefore incorporated directly into the computational graph, and its contribution to the gradient of the training objective is automatically taken into account during optimization.}


\begin{figure}
    \centering
    \includegraphics[width=0.43\linewidth]{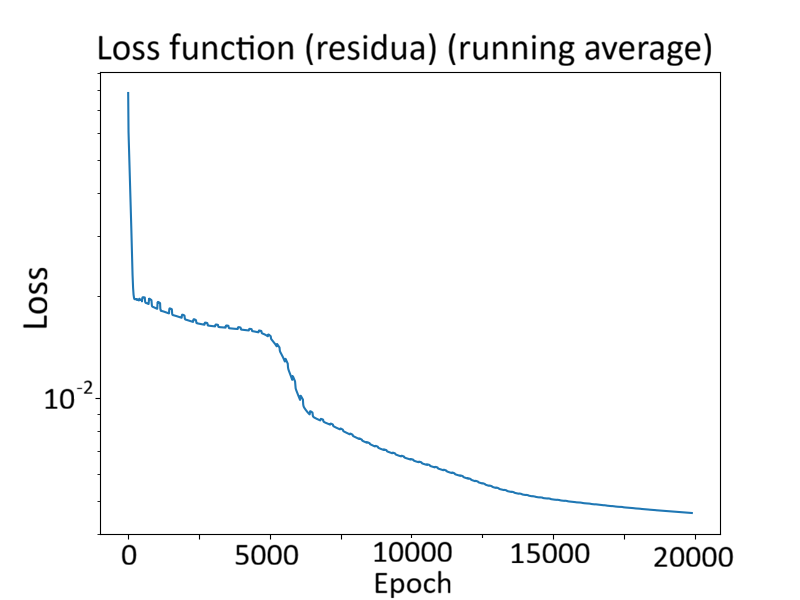}
    \caption{The convergence of the CRVPINN for non-linear model training.}
    \label{fig:nonlintrain}
\end{figure}

\begin{figure}
    \centering
    \includegraphics[width=0.3\linewidth]{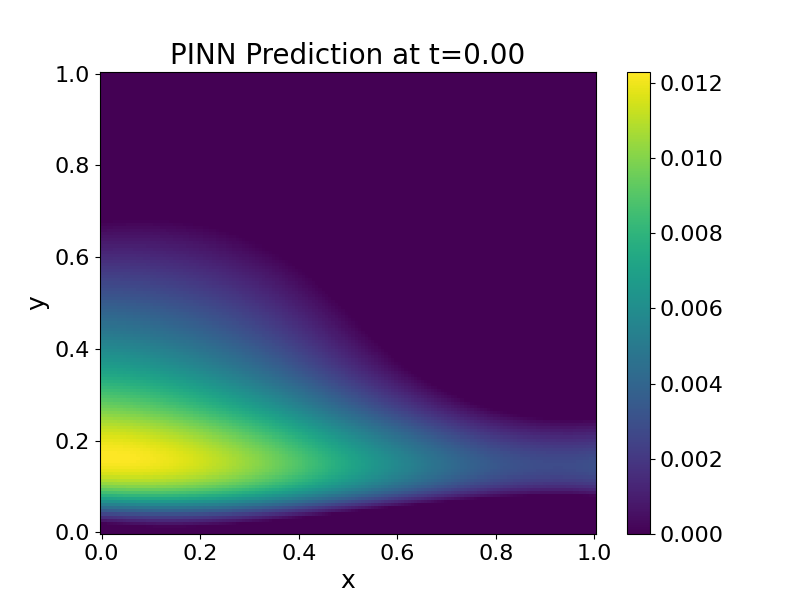}    \includegraphics[width=0.3\linewidth]{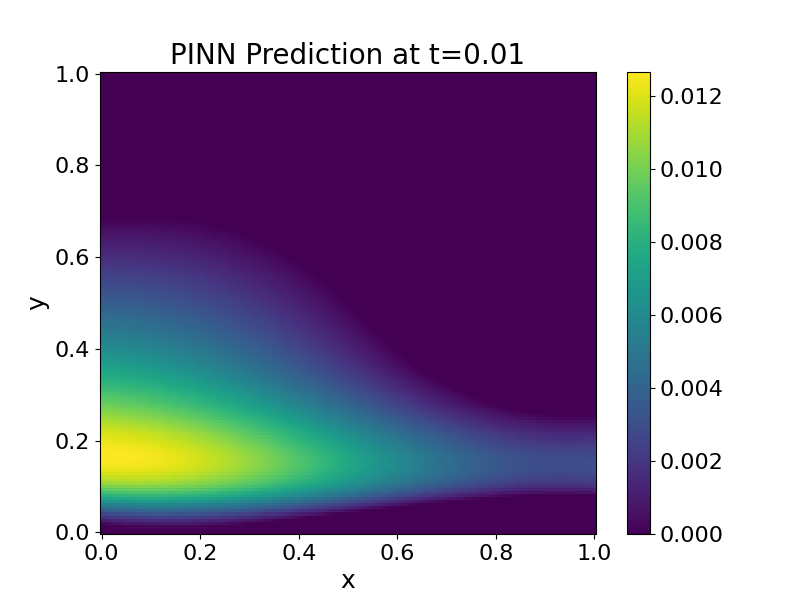}     \includegraphics[width=0.3\linewidth]{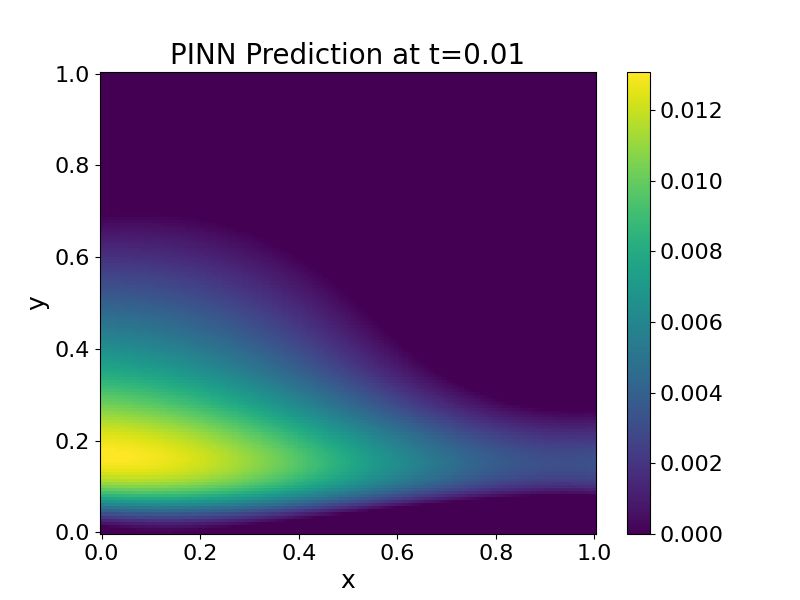} \\
    \includegraphics[width=0.3\linewidth]{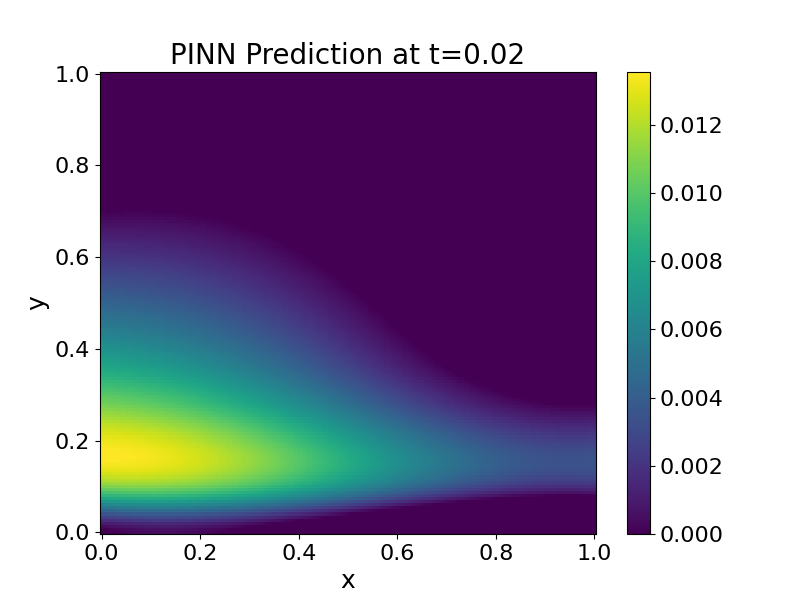}    \includegraphics[width=0.3\linewidth]{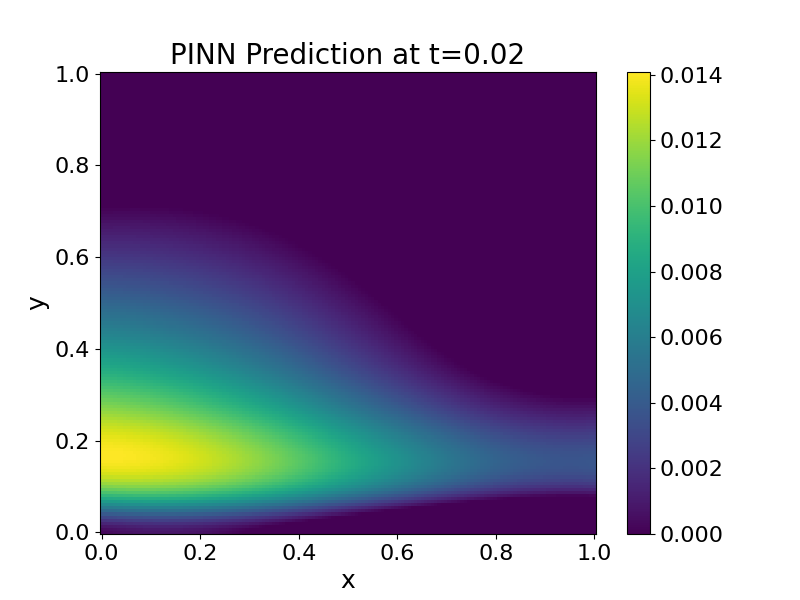}     \includegraphics[width=0.3\linewidth]{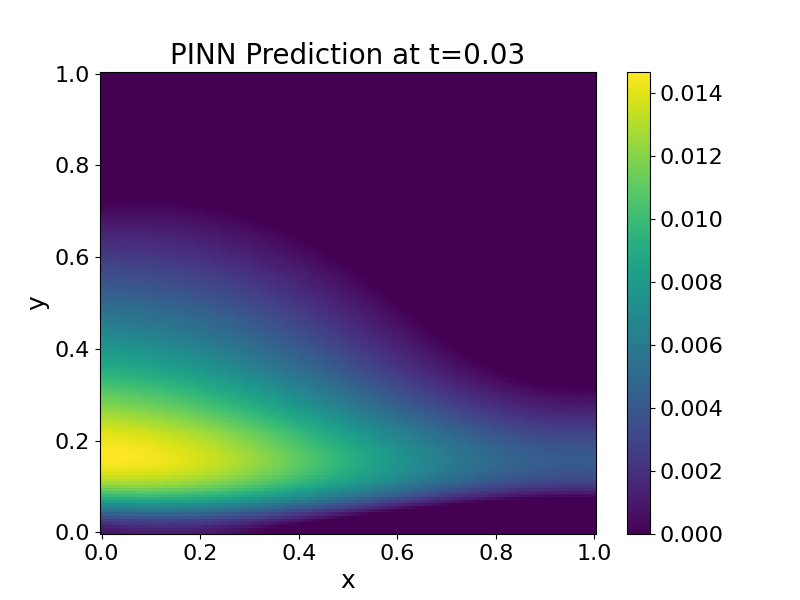} \\
    \includegraphics[width=0.3\linewidth]{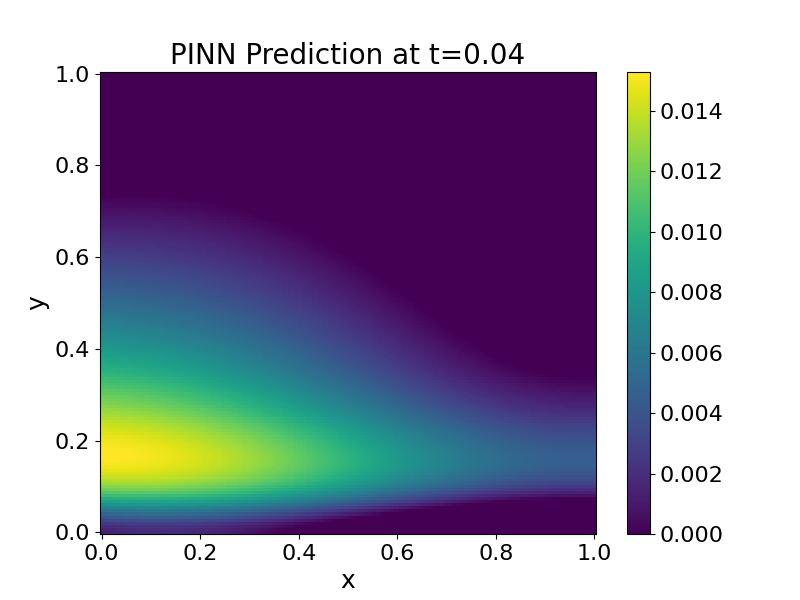}    \includegraphics[width=0.3\linewidth]{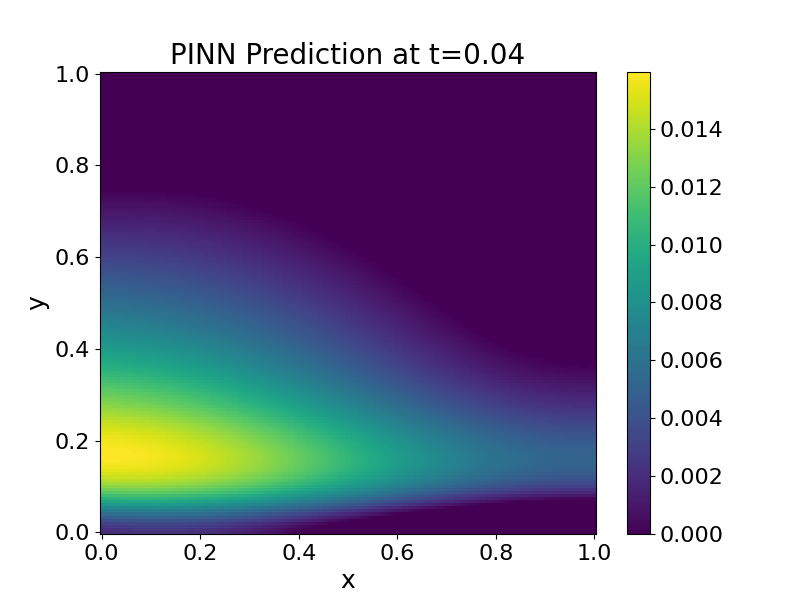}     \includegraphics[width=0.3\linewidth]{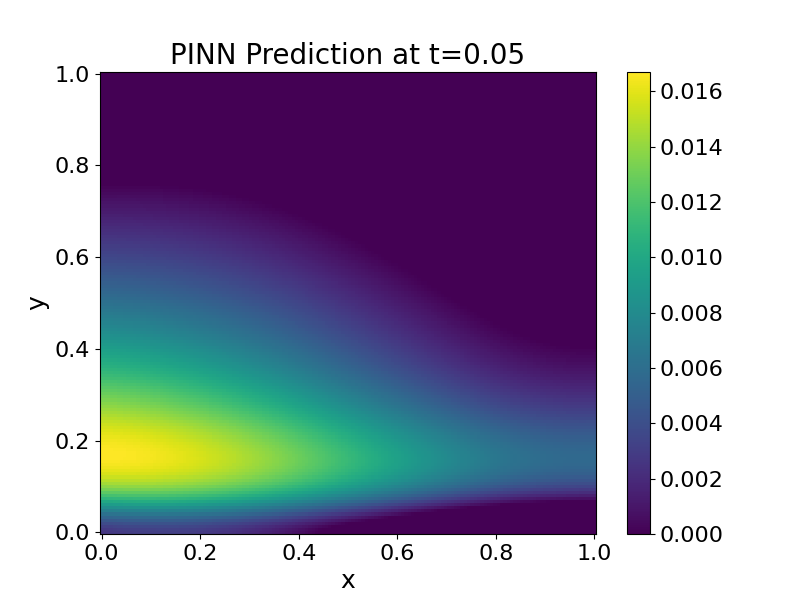} \\
    \includegraphics[width=0.3\linewidth]{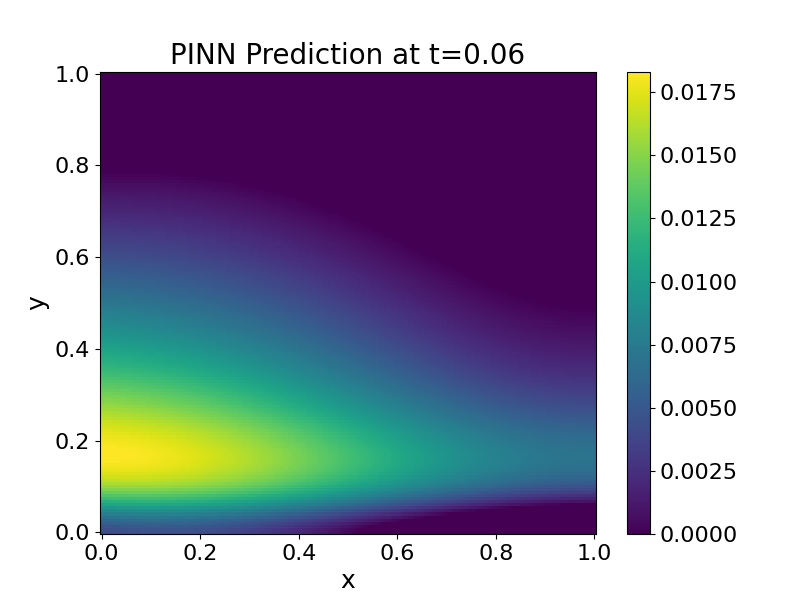}    \includegraphics[width=0.3\linewidth]{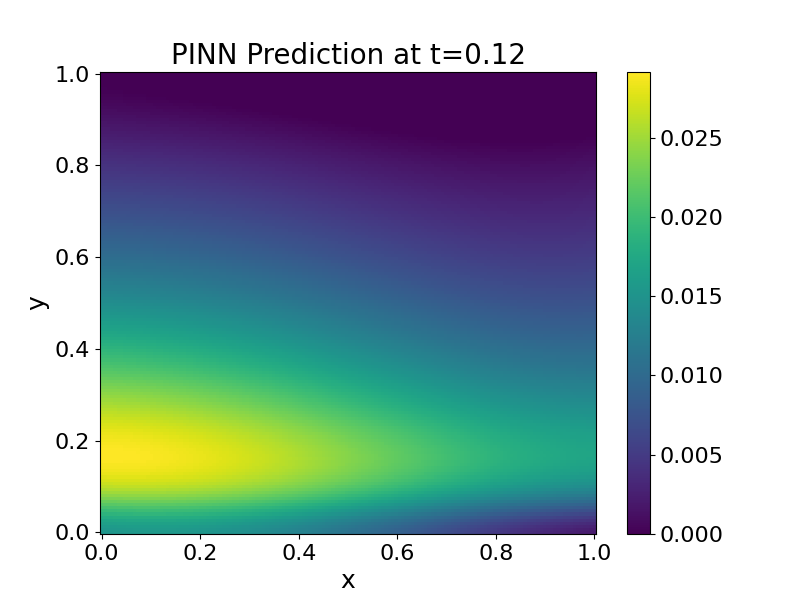}     \includegraphics[width=0.3\linewidth]{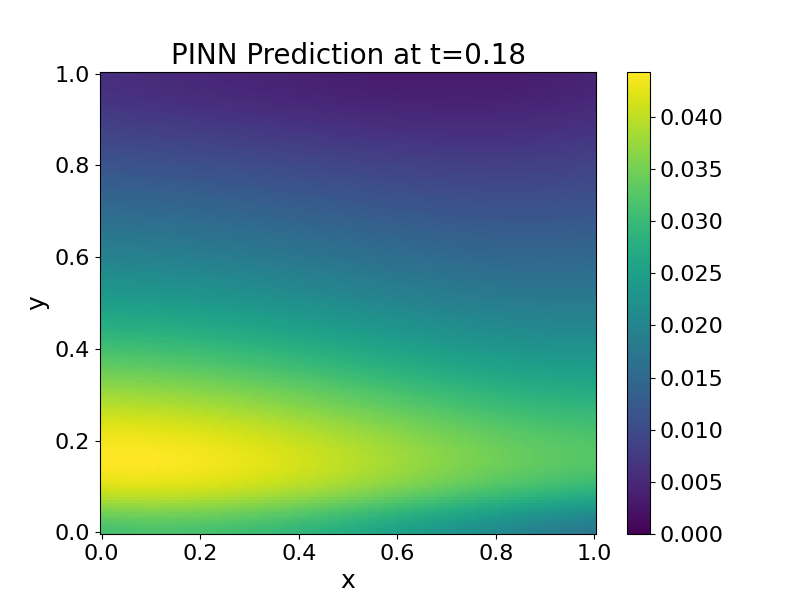} 
    \caption{Snapshots of the snowmobiles pollution using CRVPINN solving non-linear Burgers extension.}
    \label{fig:snowmobile1_nonlinear}
\end{figure}

\revision{The linear advection-diffusion loss}
\begin{lstlisting}[
    language=Python,
    caption={Fuel map extraction.},
    basicstyle=\ttfamily,
    keywordstyle=\color{blue},
    stringstyle=\color{red},
    commentstyle=\color{green},
    frame=single,
    numbers=left,
    numberstyle=\tiny\color{gray},
    stepnumber=1,
    numbersep=5pt,
    backgroundcolor=\color{lightgray!25}
]
    def residual_loss(self, pinn: PINN):
        x, y, t = get_interior_points
          (self.x_domain, self.y_domain, 
          self.t_domain, self.n_points, pinn.device())
        loss = dfdt(pinn, x, y, t).to(device) + 
          self.dTy(y, t)*dfdy(pinn, x, y, t).to(device) - 
          self.Kx*dfdx(pinn, x, y, t,order=2).to(device) - 
          self.Ky*dfdy(pinn, x, y, t,order=2).to(device) - 
          self.moving_source(x,y,t).to(device)
        return loss.pow(2).mean()
\end{lstlisting}
\revision{is replaced by the non-linear loss}
\begin{lstlisting}[
    language=Python,
    caption={Fuel map extraction.},
    basicstyle=\ttfamily,
    keywordstyle=\color{blue},
    stringstyle=\color{red},
    commentstyle=\color{green},
    frame=single,
    numbers=left,
    numberstyle=\tiny\color{gray},
    stepnumber=1,
    numbersep=5pt,
    backgroundcolor=\color{lightgray!25}
]
    def residual_loss(self, pinn: PINN):
        x, y, t = get_interior_points
          (self.x_domain, self.y_domain, 
          self.t_domain, self.n_points, pinn.device())
    loss = (
          dfdt(pinn, x, y, t) + f(pinn, x, y, t) 
          * dfdx(pinn, x, y, t)
          + self.dTy(y, t) * dfdy(pinn, x, y, t)
          - self.Kx * dfdx(pinn, x, y, t, order=2) 
          - self.Ky * dfdy(pinn, x, y, t, order=2)
          - self.moving_source(x, y, t)
            )
    return loss.pow(2).mean()
\end{lstlisting}

\revision{This non-linear model requires the same number of iterations, same architecture of the neural network, and it only differs by the loss function formula. Thus, it has similar computational cost from the point of view of the neural network training as the linear model. The training is presented in Figure \ref{fig:nonlintrain}, while the sequence of solution snapshots in Figure \ref{fig:snowmobile1_nonlinear}. The non-linear model results in wider spread of the pollutants. }

\section{Conclusions}

In the paper, we introduced a Physics-Informed Neural Network (PINN) framework
for time-dependent simulations of pollution propagation from moving objects.
Starting from the PINN formulation, we extended the model to a Robust
Variational Physics-Informed Neural Network (Robust VPINN) formulation to
obtain a robust loss function for the time-dependent advection-diffusion problem.
We also introduced the collocation-based CRVPINN method to accelerate the
training process.
\revision{We proved the robustness of the loss function associated with the
discrete weak formulation by establishing its inf-sup stability and boundedness.
We verified the robustness of the CRVPINN method on a model problem with a
manufactured solution. We also compared the CRVPINN method with the IGA-ADS
explicit dynamics solver.}

\revision{As an interesting case study, we considered pollution propagation from
snowmobiles in the Svalbard area, based on in-field measurements.
The CRVPINN method was applied to investigate the effects of thermal inversion
on the accumulation of pollution generated by snowmobiles.
We employed a time-dependent advection-diffusion model and further extended the
linear advection-diffusion equation to a nonlinear Burgers-type equation. The
results show that both the linear and nonlinear CRVPINN models require
approximately four times the computational time of the linear IGA-ADS
simulator. This result makes the nonlinear extension particularly interesting,
as it provides a substantially more general model at a comparable
computational cost.}
As for future work, we plan to formulate and solve an inverse problem for fitting model parameters to measurement data. We may include our inverse solvers
described in \cite{Barabasz01012011,NaturalComputing}.
 Future work will also include the development of large parallel simulations using isogeometric analysis explicit dynamics solvers \cite{Woźniak_Łoś_Paszyński_Dalcin_Calo_2017} or $hp$-adaptive finite element method solvers \cite{FI1,FI2}.

\section*{Acknowledgments}
The Authors gratefully acknowledge the support and assistance of the Polish Polar Station Hornsund for help with data collection.

The authors are grateful for the support from the funds that the Polish Ministry of Science and Higher Education assigned to AGH University of Krakow.
The work is supported by the ``Excellence initiative - research university" for AGH University of Krakow.

This work has been supported by the National Science Center, Poland, grant no. 2025/57/B/ST6/00058. 

\section*{Code Availability Statement}

\revision{The source code and Jupyter notebooks used to produce the numerical
experiments reported in this study are openly available in the public
GitHub repository:}

\url{https://github.com/sluzalec/crvpinn-pollution-spitsbergen}

\revision{The version of the code corresponding to this manuscript has been
permanently archived in Zenodo and is available at:}

\url{https://doi.org/10.5281/zenodo.22019451}

\revision{The repository contains the implementations of the PINN and CRVPINN
manufactured-solution benchmarks and the CRVPINN snowmobile pollution
propagation simulations. The codes were developed, tested, and executed
using Google Colab.}

\revision{
The IGA-ADS code for the problem with manufactured solution employed for comparison with CRVPINN code is available at:}

 \url{https://github.com/marcinlos/iga-ads/tree/develop/examples/advection}.

\section*{Declaration of Generative AI and AI-assisted technologies in the writing process}
During the preparation of this work, the author(s) used Gemini (Google)  and ChatGPT to assist with writing, correct English language usage, improve text clarity and organization, and debuging the codes. After using this tool/service, the author(s) reviewed and edited the content as needed and take(s) full responsibility for the content of the publication.
\section{Appendix}

%
%


The PINN implementation involves the definition of the advection field that models the horizontal movement of the air due to the thermal inversion. It is defined in the {\tt dTy} routine. According to the vertical temperature profile measurements, it is assumed that we have zero temperature gradient close to the ground and a negative temperature gradient above.

\begin{lstlisting}[
    language=Python,
    caption={Fuel map extraction.},
    basicstyle=\ttfamily,
    keywordstyle=\color{blue},
    stringstyle=\color{red},
    commentstyle=\color{green},
    frame=single,
    numbers=left,
    numberstyle=\tiny\color{gray},
    stepnumber=1,
    numbersep=5pt,
    backgroundcolor=\color{lightgray!25}
]
    def dTy(self,y,t):
      res = torch.where(t < 0.5, -2, -2)
      res = torch.where(y < 0.1, 0, res)
      return res.to(device)
\end{lstlisting}

The following definition involves the source term in the {\tt source} routine.
We assume that near the ground, pollution is emitted from left to right across the domain due to the moving snowmobile. 

\begin{lstlisting}[
    language=Python,
    caption={Fuel map extraction.},
    basicstyle=\ttfamily,
    keywordstyle=\color{blue},
    stringstyle=\color{red},
    commentstyle=\color{green},
    frame=single,
    numbers=left,
    numberstyle=\tiny\color{gray},
    stepnumber=1,
    numbersep=5pt,
    backgroundcolor=\color{lightgray!25}
]
    def source(self,y,t):
      H_MAX = 0.5
      VELOCITY = 10.0
      WAVE_WIDTH = .8
      Y_SPREAD = 0.4
# The state `y` is ignored by this source term, 
# which is purely a function of space and time.
# 1. Calculate the y-dependent maximum 
# height using a Gaussian profile.
      H_y = H_MAX * torch.exp
         (-y**2 / (2 * Y_SPREAD**2))
# 2. Calculate the position of the wave's 
# leading edge at time t.
      pos = VELOCITY * t
# 3. Normalize the x-position relative 
# to the width of the wave front.
     alpha = (pos - x) / WAVE_WIDTH
# 4. Clamp alpha to the [0, 1] range  to handle regions 
# ahead of, inside, and behind the wave front.
     alpha_clamped = torch.clamp(alpha, 0.0, 1.0)
# 5. Use a smooth cosine function (a "smoothstep") 
# for the transition from 0 to 1 across the wave front.
     height_factor = 0.5 * 
      (1 - torch.cos(math.pi * alpha_clamped))
# 6. The final source value is 
# the y-dependent max height scaled by the factor.
     final_source_value = H_y * height_factor        
     final_source_value = 
       torch.where(y <= 0.2, final_source_value, 0)
# return torch.where(t <= TOTAL_TIME / 2, 
    return final_source_value
\end{lstlisting}

\noindent The initial loss defined in {\tt initial\_loss} introduces zero everywhere.

\begin{lstlisting}[
    language=Python,
    caption={Fuel map extraction.},
    basicstyle=\ttfamily,
    keywordstyle=\color{blue},
    stringstyle=\color{red},
    commentstyle=\color{green},
    frame=single,
    numbers=left,
    numberstyle=\tiny\color{gray},
    stepnumber=1,
    numbersep=5pt,
    backgroundcolor=\color{lightgray!25}
]
    def initial_loss(self, pinn: PINN):
        x, y, t = get_initial_points
          (self.x_domain, self.y_domain, 
          self.t_domain, self.n_points, pinn.device())
        pinn_init = self.initial_condition(x, y)
        loss = f(pinn, x, y, t) - pinn_init
        return loss.pow(2).mean()
\end{lstlisting}

\noindent The {\tt boundary\_loss} defines zero Dirichlet boundary condition. 

\begin{lstlisting}[
    language=Python,
    caption={Fuel map extraction.},
    basicstyle=\ttfamily,
    keywordstyle=\color{blue},
    stringstyle=\color{red},
    commentstyle=\color{green},
    frame=single,
    numbers=left,
    numberstyle=\tiny\color{gray},
    stepnumber=1,
    numbersep=5pt,
    backgroundcolor=\color{lightgray!25}
]
    def boundary_loss(self, pinn: PINN):
        down, up, left, right = get_boundary_points
          (self.x_domain, self.y_domain, self.t_domain, 
          self.n_points, pinn.device())
        x_down, y_down, t_down = down
        x_up, y_up, t_up = up
        x_left,  y_left, t_left = left
        x_right, y_right, t_right = right
        loss_down = f(pinn, x_down, y_down, t_down)
        loss_up = f(pinn, x_up, y_up, t_up)
        loss_left = f(pinn, x_left, y_left, t_left)
        loss_right = f(pinn, x_right, y_right, t_right)
        return loss_down.pow(2).mean() + \
            loss_up.pow(2).mean() + \
            loss_left.pow(2).mean() + \
            loss_right.pow(2).mean()
\end{lstlisting}

\noindent Finally, we define the PDE loss in the {\tt residual\_loss} routine.

\begin{lstlisting}[
    language=Python,
    caption={Fuel map extraction.},
    basicstyle=\ttfamily,
    keywordstyle=\color{blue},
    stringstyle=\color{red},
    commentstyle=\color{green},
    frame=single,
    numbers=left,
    numberstyle=\tiny\color{gray},
    stepnumber=1,
    numbersep=5pt,
    backgroundcolor=\color{lightgray!25}
]
    def residual_loss(self, pinn: PINN):
        x, y, t = get_interior_points
          (self.x_domain, self.y_domain, 
          self.t_domain, self.n_points, pinn.device())
        loss = dfdt(pinn, x, y, t).to(device) + 
          self.dTy(y, t)*dfdy(pinn, x, y, t).to(device) - 
          self.Kx*dfdx(pinn, x, y, t,order=2).to(device) - 
          self.Ky*dfdy(pinn, x, y, t,order=2).to(device) - 
          self.source(y,t).to(device)
        return loss.pow(2).mean()
\end{lstlisting}

In the CRVPINN method, we test with Kronecker deltas and consider a discrete domain of points.
However, it is possible to implement the CRVPINN method with a very simple modification of the PINN code 
and still benefit from the better convergence provided by the robust loss function.
Namely, if we integrate back to the strong form multiplied by the Kronecker Deltas:
\begin{equation}
\begin{aligned}
( \nabla_{t+}u +\beta_x \nabla_{x+}u + \beta_y \nabla_{y+}u 
-\epsilon \Delta_h u,\delta_{i,j,k})_{h} =
(f,\delta_{i,j,k})_{h},
\end{aligned}
\label{eq:weakstrong}
\end{equation} 
and: 
\begin{equation}
\begin{aligned}
( \nabla_{t+}u,\delta_{i,j,k})_{h} = & \nabla_{t+}u_{i,j,k}, \\
(\beta_x \nabla_{x+}u,\delta_{i,j,k})_{h} = & \beta_x \nabla_{x+}u_{i,j,k}, \\
(\beta_y \nabla_{y+}u,\delta_{i,j,k})_{h} = &  \beta_y \nabla_{y+}u_{i,j,k}, \\
-(\epsilon \Delta_h u,\delta_{i,j,k})_{h} = & -\epsilon \Delta_h u_{i,j,k}, \\
(f,\delta_{i,j,k})_{h} = & f_{i,j,k}. 
\end{aligned}
\end{equation}
We recover the original PINN formulation of the residual estimated at the collocation points, thus:
\begin{equation}
\text{RES}(u)_{i,j,k} =  \nabla_{t+}u_{i,j,k} +\beta_x \nabla_{x+}u_{i,j,k} + \beta_y \nabla_{y+}u_{i,j,k} 
-\epsilon \Delta_h u_{i,j,k} - f_{i,j,k},
\end{equation}
and:
\begin{equation}
\begin{aligned}
  {\cal L}_{crvpinn}=  \text{RES}(u)^T\; \mathbf{G}^{-1} \; \text{RES}(u).
\end{aligned}
\end{equation}
where:
\begin{eqnarray}
\mathbf{G}_{i,j,k;l,m,n}=
& \frac{1}{h_xh_yh_t}\displaystyle{\left\{\begin{aligned}
&\phantom{-}\,\,6\ \quad \textrm{ for }(i,j,k)=(l,m,n) \\
&-1 \quad \textrm{ for }(l,m,n)\in \{(i+1,j,k),(i-1,j.k)\}\\
&-1 \quad \textrm{ for }(l,m,n)\in\{(i,j+1,k),(i,j-1,k)\} \\
&-1 \quad \textrm{ for }(l,m,n)\in\{(i,j,k+1),(i,j,k-1)\} \\
\end{aligned}
\right.} 
\label{eq:gram}
\end{eqnarray} 
In other words, we just take the PINN code residual and we multiply it by the Gram matrix.

\begin{lstlisting}[
    language=Python,
    caption={Fuel map extraction.},
    basicstyle=\ttfamily,
    keywordstyle=\color{blue},
    stringstyle=\color{red},
    commentstyle=\color{green},
    frame=single,
    numbers=left,
    numberstyle=\tiny\color{gray},
    stepnumber=1,
    numbersep=5pt,
    backgroundcolor=\color{lightgray!25}
]
G = torch.eye(N_POINTS * N_POINTS * N_POINTS)
def linearized(ix, iy, it):
  return ix * N_POINTS * N_POINTS + iy * N_POINTS + it

def nearby(ix, iy, it):
  return 
  [(ix + 1, iy, it), (ix - 1, iy, it), (ix, iy - 1, it), 
    (ix, iy + 1, it), (ix, iy, it -1), (ix, iy, it+1)]

for ix in range(N_POINTS):
  for iy in range(N_POINTS):
    for it in range(N_POINTS):
      i = linearized(ix, iy, it)
      G[i, i] = 1
for ix in range(1, N_POINTS - 1):
  for iy in range(1, N_POINTS - 1):
    for it in range(1, N_POINTS - 1):
      i = linearized(ix, iy, it)
      G[i, i] = 4
      for jx, jy, jt in nearby(ix, iy, it):
        j = linearized(jx, jy, jt)
        G[i, j] = -1
hx = 1.0 / N_POINTS
hy = 1.0 / N_POINTS
ht = 1.0 / N_POINTS
G = G / (hx*hy*ht)
G = G.to(device)
G_LU = torch.linalg.lu_factor(G)
\end{lstlisting}

Instead of inverting the Gram matrix, we perform LU factorization and use it to compute the loss value at each iteration.

 \bibliographystyle{elsarticle-num} 
 \bibliography{cas-refs-no-url}

\end{document}